\documentclass{article}

\usepackage{arxiv}

\usepackage[utf8]{inputenc} 
\usepackage[T1]{fontenc}    
\usepackage{hyperref}       
\usepackage{url}            
\usepackage{booktabs}       
\usepackage{amsfonts}       
\usepackage{nicefrac}       
\usepackage{microtype}      
\usepackage{lipsum}		
\usepackage{graphicx}
\usepackage{natbib}
\usepackage{doi}

\usepackage{subcaption}
\usepackage{dblfloatfix} 
\usepackage{enumitem}
\usepackage{todonotes}
\usepackage{times}
\usepackage{soul}
\usepackage{url}
\usepackage{booktabs}
\usepackage{multirow}
\usepackage{makecell}
\usepackage{hyperref}
\usepackage[utf8]{inputenc}
\usepackage{graphicx}
\usepackage{amsmath,amssymb,amsthm}
\usepackage{booktabs}
\usepackage{algorithm}
\usepackage[switch]{lineno}
\usepackage{pifont}
\usepackage{algorithm}
\usepackage{algpseudocode}
\usepackage{fancyhdr}
\usetikzlibrary{decorations.pathmorphing}
\usepackage{amsmath}
\usepackage{tikz}
\usetikzlibrary{decorations.pathreplacing,arrows.meta,calc}
\usetikzlibrary{arrows.meta,calc,positioning}

\DeclareMathOperator{\KL}{KL}
\DeclareMathOperator*{\argmin}{arg\,min}

\newtheorem{theorem}{Theorem}[section]
\newtheorem{proposition}[theorem]{Proposition}

\newtheorem{lemma}[theorem]{Lemma}
\theoremstyle{definition}
\newtheorem{definition}[theorem]{Definition}
\theoremstyle{remark}
\newtheorem{remark}[theorem]{Remark}
\newcommand{\indep}{\perp\!\!\!\perp}

\newcommand{\Mks}{M_{K^*}}

\newcommand{\eps}{\varepsilon}

\makeatletter
\newenvironment{breakablealgorithm}
  {%
   \begin{center}
     \refstepcounter{algorithm}%
     \hrule height.8pt depth0pt \kern2pt%
     \renewcommand{\caption}[2][\relax]{%
       {\raggedright\textbf{Algorithm \thealgorithm} ##2\par}%
       \ifx\relax##1\relax 
       \addcontentsline{loa}{algorithm}{\protect\numberline{\thealgorithm}##2}%
       \else 
       \addcontentsline{loa}{algorithm}{\protect\numberline{\thealgorithm}##1}%
       \fi
       \kern2pt\hrule\kern2pt
     }%
  }{%
     \kern2pt\hrule\relax%
   \end{center}
  }
\makeatother

\title{Global Explanations for Multivariate Time Series Forecasting Models via $K$-Order Markov Approximations}

\author{ \href{https://orcid.org/0009-0008-5195-090X}{\includegraphics[scale=0.06]{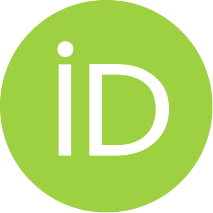}\hspace{1mm}Amadeo Tunyi} \\
	XITASO GmbH \\
	Austraße 35, 86153 Augsburg, Germany \\
	\texttt{amadeo.tunyi@xitaso.com} \\
}

\begin{document}
\maketitle

\begin{abstract}
	While many explainable AI (XAI) methods have been
proposed, most are not designed for time-series forecasting models and
often rely on the implicit assumption that timestamp features are
independent. This assumption ignores the fundamental property of temporal
dependence and can lead to explanations that violate the sequential and
causal structure of the data. We introduce \textsc{KARMA}, a method for explaining time-series predictors
by constructing a Markov surrogate model that captures the temporal
dependencies learned by the predictor. Our approach revolves around three main aspects: identifying the minimal history length $K$ that is
predictively sufficient for the model, estimating the best-fitting
$K$-order Markov transition kernel from the discretized history space, and a five-level global explanation hierarchy that can be derived from the Markov transition kernel, which we illustrate using real-world weather data (Beijing PM 2.5). We also certify using complex synthetic data with known true causal edges that KARMA (i) recovers the data causal structure as learned by the model via a controlled experiment and (ii) identifies temporal dependencies better than established attribution methods such as TimeSHAP.
\end{abstract}

\keywords{Explainable AI \and Time Series Forecasting \and Markov Chains}
\section{Introduction}

The deployment of deep learning models on financial time series, clinical
monitoring and industrial sensing have grown substantially. Recurrent networks,
temporal convolutional networks (TCNs), and transformer architectures achieve
strong predictive performance, but remain largely opaque: practitioners cannot
readily identify which temporal patterns, cross-variable dependencies, or
historical regimes drive a particular prediction. This opacity creates
regulatory friction under ESMA guidelines and the EU AI Act, and undermines
model trust in high-stakes decision environments.

Explainable AI (XAI) methods such as LIME \cite{ribeiro2016}, SHAP
\cite{lundberg2017}, and attention-based attribution were developed primarily
for static, i.i.d.\ settings. Their extension to time series is non-trivial:
temporal autocorrelation, non-stationarity, and inter-variable Granger
causality violate the independence assumptions underlying most attribution
frameworks. For instance, SHAP's baseline marginalisation is structurally incoherent for
time series because $X_{t-2}$ is not independent of $X_{t-1}$; gradient
attributions describe local geometry at a single point rather than the
systematic conditional structure of what the model has globally learned.

We propose a different approach rooted in probabilistic approximation. Rather
than perturbing inputs or computing gradient-based attributions, we ask:
\emph{can the behaviour of a black-box model on a multivariate time series be
faithfully approximated by a $K$-th order Markov chain?} If so, the transition
probabilities of that chain constitute a structured, interpretable explanation
grounded in the conditional dependence structure of the data as learned by the model. This framing
is natural for sequential domains: a clinician, engineer, or trader already
reasons in conditional scenarios ``given the last three states of the
system, what does the model expect next?'' and transition probabilities
answer that question directly, in the native language of the domain. We present KARMA, an approach to interpreting a black-box time-series model using a five-level (global) explanation hierarchy derived from the transition kernel. Our contribution is organized around three distinct pillars with different
theoretical foundations and different data requirements.

\begin{enumerate}
    \item \textit{Pillar~1 (Markov surrogate).} A surrogacy predictive validity stopping rule selects the minimal lag $K^*$ that is predictively sufficient, and for this $K^*$ estimate with minimal error using the data, the best nearest $K^*-$Order Markov probability transition kernel. This answers the question every practitioner should ask before trusting a sequential model: how much of its input window does the model actually use? The answer is both certified and model-agnostic,
    
    \item \textit{Pillar~2 (Compression and certified attribution).} When $K$ is smaller than the model window size, the model is provably insensitive to inputs beyond lag $K^*$. This yields a compression ratio, a model-certified baseline $b^*$ that resolves the baseline selection problem, and certified-zero attributions for all lags beyond $K^*$, not merely small attributions, but mathematically zero within approximation error.

    \item \textit{Pillar~3 (Five Level Global Explanation Derivation).}
Given the estimated kernel, KARMA
extracts five layers of explanation without additional oracle queries.
\textbf{Level~1} computes the normalized variable importance or feature-level importance, ranking source variables by their total
distributional influence on the model's predictions across all
target variables and lags.
\textbf{Level~2} resolves this influence by lag $k$, yielding cell-level explanations (for example, influence on forecast at time $t$ of value at time $t-k$ for feature $d$)  that reveals whether the model has learned
momentum, mean-reversion, or long-memory dynamics for each variable.
\textbf{Level~3} identifies distinctive regimes via the marginal
interdependence index identifying distinct paths (histories) relevant to the forecast.
\textbf{Level~4} computes average interventional effects and the edge contributions that populate the model-induced causal graph,
connecting the explanation directly to the level 1.
\textbf{Level~5} quantifies explanation reliability: aleatoric
entropy measures genuine model uncertainty at each history,
while the epistemic variance flags histories where
kernel estimates are unreliable.

\end{enumerate}

\section{Related Work}
\label{sec:related}

\textbf{XAI for Time Series Forecasting.} Existing methods fall into three families.
\emph{Gradient-based} methods propagate prediction sensitivity back to inputs:
Saliency~\cite{simonyan2014}, Grad-CAM~\cite{selvaraju2017}, DeepLIFT~\cite{shrikumar2017},
Layer-wise Relevance Propagation~\cite{bach2015}, and Integrated Gradients~\cite{sundararajan2017},
often smoothed via SmoothGrad~\cite{smilkov2017}. These are fast but provide no
probabilistic guarantees and are sensitive to architecture; moreover, standard saliency
methods transfer poorly to temporal data, as benchmarked by Ismail et al.~\cite{ismail2020},
who propose Temporal Saliency Rescaling (TSR) to recover time-localized importance.
Temporal Integrated Gradients (TIG;~\cite{enguehard2023}) extends this family to sequential
settings but remains local.
\emph{Perturbation-based} methods (LIME~\cite{ribeiro2016}, SHAP~\cite{lundberg2017})
struggle with temporal autocorrelation since random perturbations break sequential
structure and induce off-manifold inputs. The simplest instance, Feature Occlusion
(FO;~\cite{suresh2017}), replaces a feature or group with a baseline and scores the
resulting output change; its augmented variant (AFO;~\cite{tonekaboni2020}) perturbs with
in-distribution samples to mitigate off-manifold artifacts. TimeSHAP~\cite{bento2021}
extends SHAP to recurrent models but inherits the baseline selection problem and provides
no certified attributions. FIT~\cite{tonekaboni2020} instead scores each observation by its
contribution to the predictive-distribution shift under a KL divergence against a
counterfactual where remaining features are unobserved, explicitly controlling for
time-dependent shift, but remains purely local. Dynamask~\cite{crabbe2021} learns a salient
input mask but operates locally with no reliability guarantees, and subsequent mask-based
refinements such as ContraLSP~\cite{liu2024} (contrastive, locally sparse perturbations) and
the surrogate explainer TimeX~\cite{queen2023} improve fidelity yet likewise yield only
instance-level, uncertified attributions. WinIT~\cite{leung2023} measures information
transfer across time windows but yields no certified attributions.
\emph{Attention-based} methods treat attention weights as explanations, a practice whose
reliability is contested~\cite{jain2019,wiegreffe2019}; interpretable forecasters such as
the Temporal Fusion Transformer~\cite{lim2021} expose variable-selection and attention
weights intrinsically but offer no statistical guarantees on the resulting attributions.
KARMA is a global surrogate method that explicitly models sequential dependence through
Markov structure, yielding explanations that respect the temporal geometry of the data and
carry explicit statistical reliability guarantees absent from all prior work.

 \textbf{Total Variation for Model Selection.} The formal definition of the total variation commonly used,
\begin{definition}[Total Variation Distance]
Let $\mathcal{M}([N])$ denote the set of conditional probability kernels on a set $[N]$. For $p(\cdot|h), q(\cdot|h) \in \mathcal{M}([N])$, $h$ in some history space, the \emph{total variation distance} is given by
\begin{equation}
    \|p(\cdot|h) - q(\cdot|h)\|_{TV} = \frac{1}{2}\sum_{s \in [N]} |p(s|h) - q(s|h)|
\end{equation}
\end{definition}
The use of total variation distance as a
Markov order stopping rule is related to \cite{csiszar2000} on consistent order
estimation, though our formulation targets surrogate equivalence rather than data
compression, and operates on the model kernel rather than the data kernel.

\section{K-Order Markov Chain Surrogate Model}

Building on the intuition introduced in the introduction, we now formalize KARMA as a K-order Markov surrogate model.
We model a multivariate time series (MVTS) as a stochastic process  $\mathbf X = \{X_t\}_{t \in \mathbb{Z}}, \: X_t = (X^1_t,\ldots,X^D_t)^\top \in \mathbb{R}^D$. Condider the predictor $f : \mathbb{R}^{W \times D} \to \mathcal{Y} \in \mathbb R^{T \times D}$
be a trained black-box predictor that takes an input window of length $W$ and outputs a $ T$-length prediction window. For the rest of this manuscript, we consider $T = 1$, $T>1$ follows by constructive extension.

Our goal is to construct an explanation based on the transition kernel of a surrogate $K$-order Markov chain that is (1) human–interpretable, (2) faithful to $f$ in the total variation sense, and (3) explicit about its own statistical reliability.

This section develops the construction of this surrogate model.

\begin{definition}[$K-$Order Markov chain]

Let $\mathbf X = (X_n)_{n \geq 1}$ be a stochastic process with
countable state space $\mathcal S$.
The process is a \emph{$k$-th order Markov chain} if, for all $n > k$, $X_n$ is independent of all other past time steps given the last $k$ steps.
The chain is \emph{time-homogeneous} if the transition probabilities
do not depend on $n$.

\end{definition}

\subsection{State Space Construction}

Denote $[D] = \{1, \dots, D\}$ and $[N] = \{1, \dots, N\}$. For each variable $d \in [D]$, define a measurable partition of $\mathbb{R}$
into $N$ bins using quantile boundaries estimated on the training data:

\[
\psi^d : \mathbb{R} \to [N], \qquad
\psi^d(x) = n \iff x \in [q^d_{n-1}, q^d_n).
\]

The multivariate discretization

\[
\psi : \mathbb{R}^D \to [N]^D
\]

induces the discrete state space $\mathcal S = [N]^D$ with $|\mathcal S| = N^D.$

The discretized process is $\tilde X_t = \psi(X_t) \in \mathcal S$. And for histories of length $K$, the corresponding history space is

\[
\mathcal H_K = \mathcal S^K, \qquad
|\mathcal H_K| = N^{DK}.
\]

where $h^d_k \in [N]$ is the discretized bin of variable $d$
at lag position $k$,
and the total number of distinct indices is $N^{DK} = |\mathcal{H}_K|$.

\subsection{The K-Order Markov Surrogate}

Let $f: \mathbb{R}^{W \times D} \to \mathcal{Y}$ be a trained black-box predictor 
and $\psi: \mathbb{R}^D \to \mathcal{S}$ the discretization map defined above. 
Since $f$ operates on a window of length $W$, its predictions may in principle 
depend on the full input history. However, the predictive information relevant 
to $f$ may be concentrated in a strictly shorter suffix. To identify the minimal 
such suffix, we prepend a baseline $b \in \mathcal{B}$ to a $K$-length suffix to 
form a complete input window and measure
\begin{equation}
    \hat{\Delta}_{\mathrm{pred}}(K, b) = \frac{1}{n}\sum_{i=1}^{n} 
    \ell\!\left(f(\tilde{h}_i),\, f(b \oplus h_i)\right),
    \label{eq:equiv}
\end{equation}
where $\tilde h_i \in \mathbb{R}^{W \times D}$ denotes the $W$ length $i$ observed history, $b \oplus h_i$ denotes the $K$ length suffix $h_i$ of $\tilde h_i$ concatenated with a $W - K$ length baseline $b$, and $\ell(\cdot,\cdot)$ a user imposed predictive difference metric (absolute difference for forecasting). 
$\hat{\Delta}_{\mathrm{pred}}(K, b)$ defines the average discrepancy between $f$'s predictions on the true window and on the 
substituted window. The selected order $K^*$ is the smallest $K$ for which 
$\min_{b \in \mathcal{B}}\,\hat{\Delta}_{\mathrm{pred}}(K, b) < \varepsilon$, $\varepsilon>0$ 
with $b^* = \arg\min_{b}\,\hat{\Delta}_{\mathrm{pred}}(K^*, b)$. When $K^* < W$, 
the model is provably insensitive to all inputs beyond lag $K^*$, yielding 
compression ratio $W/K^*$ and certified-zero attributions for all $k > K^*$. 
Importantly, $\hat{\Delta}_{\mathrm{pred}}$ requires only direct model queries, 
no kernel estimation, making it the sole stopping certificate.

Given $K^*$, the \textbf{surrogate transition kernel} of $f$ is
\begin{equation}
    \mathcal T^{f}_{K^*}(s \mid h) \;=\; 
    \mathbb P\!\left(\psi(f(\tilde{h})) = s \;\middle|\; 
    \mathrm{suffix}_{K^*}(\psi(\tilde{h})) = h\right), 
    \label{eq:joint_kernel}
\end{equation}

$$s \in \mathcal{S},\; h \in \mathcal{H}_{K^*}$$
with marginal at dimension $d \in [D]$
\begin{equation}
    \mathcal T^{f,d}_{K^*}(s^d \mid h) \;=\; 
    \mathbb P\!\left(\psi^d(f(\tilde{h})) = s^d \;\middle|\; 
    \mathrm{suffix}_{K^*}(\psi(\tilde{h})) = h\right).
    \label{eq:marginal_kernel}
\end{equation}
where $s^d$ is the d-th component of $s$. The $K$-order Markov chain associated with $\mathcal T^{f}_{K^*}$ is the 
\textbf{$K$-order Markov Surrogate} $\mathcal{M}_{K^*}$. Crucially, $f$ itself 
need not satisfy any Markov property: $\mathcal T^{f}_{K^*}$ defines the nearest 
$K$-order Markov approximation to $f$'s predictive behavior, not an extraction 
of latent Markov structure from the model.

\begin{figure}
    \centering
   \begin{tikzpicture}[
  x=1cm,y=1cm,
  cont/.style ={draw,line width=0.4pt,minimum size=4.5mm,inner sep=0pt},
  disc/.style ={draw,line width=0.4pt,minimum size=4.5mm,inner sep=0pt},
  box/.style  ={rounded corners=3pt,line width=0.7pt},
  flow/.style ={-{Stealth[length=2.6mm,width=2.6mm]},line width=1pt},
  lbl/.style  ={font=\small,align=center},
  sub/.style  ={font=\scriptsize\itshape,align=center},
]
 
\def\cols{0,0.55,1.55}      
\def\rows{0,-0.55,-1.45}    
\def\cd{1.05}\def\rd{-1.0}  
 
\def\xA{0}\def\xB{3.5}\def\xC{7.0}\def\xD{10.1}
 
\foreach \x [count=\ci] in \cols{
  \foreach \y [count=\cj] in \rows{
    \pgfmathtruncatemacro{\m}{mod(\ci+\cj,3)}
    \ifcase\m \def\sh{8}\or\def\sh{24}\or\def\sh{40}\fi
    \node[cont,fill=blue!\sh] at (\xA+\x,\y) {};}}
\foreach \y in \rows{ \node at (\xA+\cd,\y) {$\cdots$}; }
\foreach \x in \cols{ \node at (\xA+\x,\rd) {$\vdots$}; }
\node at (\xA+\cd,\rd) {$\ddots$};
\draw[box] (\xA-0.30,0.30) rectangle (\xA+1.85,-1.75);
\node[lbl] at (\xA+0.78,-2.18) {$\tilde h\in\mathbb{R}^{W\times D}$\\[-1pt]continuous window};
 
\fill[orange!18,rounded corners=2pt] (\xB+0.27,0.30) rectangle (\xB+1.85,-1.75); 
\foreach \x [count=\ci] in \cols{
  \foreach \y [count=\cj] in \rows{
    \pgfmathtruncatemacro{\m}{mod(\ci+\cj,3)}
    \ifcase\m \def\sh{8}\or\def\sh{24}\or\def\sh{40}\fi
    \node[disc,fill=blue!\sh] at (\xB+\x,\y) {};}}
\foreach \y in \rows{ \node at (\xB+\cd,\y) {$\cdots$}; }
\foreach \x in \cols{ \node at (\xB+\x,\rd) {$\vdots$}; }
\node at (\xB+\cd,\rd) {$\ddots$};
\draw[box] (\xB-0.30,0.30) rectangle (\xB+1.85,-1.75);
\node[lbl] at (\xB+0.78,-2.18) {$\psi(\tilde h)\in[N]^{D\times W}$\\[-1pt]discretized state};
 
\def\hcols{0,0.55,1.10}
\foreach \x [count=\ci] in \hcols{
  \foreach \y [count=\cj] in \rows{
    \pgfmathtruncatemacro{\m}{mod(\ci+\cj,3)}
    \ifcase\m \def\sh{8}\or\def\sh{24}\or\def\sh{40}\fi
    \node[disc,fill=black!\sh] at (\xC+\x,\y) {};}}
\foreach \x in \hcols{ \node at (\xC+\x,\rd) {$\vdots$}; }
\draw[box,draw=orange!80!black,line width=1pt] (\xC-0.30,0.30) rectangle (\xC+1.40,-1.75);
\node[lbl] at (\xC+0.55,-2.18) {$h\in[N]^{D\times K^*}$\\[-1pt]history (suffix)};
 
\def\base{-1.40}
\foreach \bx/\bh [count=\bi] in {0.15/0.40, 0.62/0.92, 1.09/0.30}{
  \fill[blue!55,draw=black,line width=0.4pt]
    (\xD+\bx,\base) rectangle (\xD+\bx+0.30,\base+\bh);}
\draw[line width=0.6pt] (\xD+0.02,\base) -- (\xD+1.52,\base);
\foreach \bx/\bn [count=\bi] in {0.30/1, 0.77/2, 1.24/3}{
  \node[font=\scriptsize] at (\xD+\bx,\base-0.18) {\bn};}
\draw[box] (\xD-0.20,0.30) rectangle (\xD+1.72,-1.75);
\node[lbl] at (\xD+0.76,-2.18) {$\hat{\mathcal T}^{f,d}_{K^*}(\cdot\mid h)$\\[-1pt]transition kernel};
 
\draw[flow] (\xA+1.88,-0.70) -- (\xB-0.33,-0.70);
\node[lbl] at ($(\xA+1.88,-0.70)!0.5!(\xB-0.33,-0.70)+(0,0.30)$) {$\mathrm{suffix}_{K^*}$};
\node[sub] at ($(\xA+1.88,-0.70)!0.5!(\xB-0.33,-0.70)+(0,-0.30)$) {last $K^*$\\steps};
 
\draw[flow] (\xB+1.88,-0.70) -- (\xC-0.33,-0.70);
\node[lbl] at ($(\xB+1.88,-0.70)!0.5!(\xC-0.33,-0.70)+(0,0.32)$) {$\psi$};
\node[sub] at ($(\xB+1.88,-0.70)!0.5!(\xC-0.33,-0.70)+(0,-0.30)$) {quantile\\bins};
 
\draw[flow] (\xC+1.43,-0.70) -- (\xD-0.23,-0.70);
\node[lbl] at ($(\xC+1.43,-0.70)!0.5!(\xD-0.23,-0.70)+(0,0.32)$) {$\psi^{d}\!\circ f$};
\node[sub] at ($(\xC+1.43,-0.70)!0.5!(\xD-0.23,-0.70)+(0,-0.30)$) {query\\\&\ pool};
 
\end{tikzpicture}
\caption{\textbf{KARMA surrogate construction.} A continuous input window
$\tilde h\in\mathbb{R}^{W\times D}$ is discretized by the quantile map $\psi$ into
$[N]^{D\times W}$; its length-$K^*$ suffix forms the history $h\in[N]^{D\times K^*}$.
The $K^*$-order surrogate kernel $\hat{\mathcal T}^{f,d}_{K^*}(\cdot\mid h)$ is the
distribution of the discretized model output $\psi^d(f(\tilde h))$ over next-bins
$s^d\in[N]$, conditioned on $h$ and pooled over all windows landing in that history.}
\label{fig:karma-surrogate}
\end{figure}

\section{Why Transition Probabilities Are (Global) Explanations}
\label{sec:why}

Standard attribution methods ask: \textit{how much does feature $x_i$ contribute 
relative to a baseline?} For time series, this framing is problematic --- 
$X_{t-2}$ and $X_{t-1}$ are strongly dependent, so marginalizing one while 
fixing the other produces counterfactuals off the data manifold, and 
gradient-based explanations describe local sensitivity at a single point rather 
than the global conditional structure the model has learned. Transition 
probabilities answer a different and more natural question: \textit{given that 
the system was in state $h$ for the last $K^*$ steps, what does the model 
predict next?} For a $K^*$-order Markov process, $\mathcal T^{f}_{K^*}$ and 
$\mathcal T^{f,d}_{K^*}$ are sufficient statistics for next-state prediction, so under 
output equivalence at tolerance $\varepsilon$ the surrogate kernel captures 
precisely the predictive dependencies the model has learned between variables 
and lags. Under the faithfulness assumption (in the sense of causality), this directly reveals which variables at which lags act as direct causal drivers. 
\subsection{The Five-Level Explanation Hierarchy}
\label{subsec:5-level}
Let $K^* >0$, $h \in \mathcal H_{K^*}$, $d \in [D]$ and $f$ a trained model. Further assume $\hat{\mathcal T}_{K^*}^{f,d}$ the approximation of $\mathcal T_{K^*}^{f,d}$ such that 
\begin{equation}
\label{eq:rho_floor_h}
    \mathbb{E}\!\left[
    \|\hat{\mathcal T}^{f,d}_K(\cdot \mid h)-\;
               \mathcal T^{f,d}_K(\cdot \mid h)\|_{TV}
  \right] < \frac{\lambda}{4}, \qquad \lambda > 0
\end{equation}
 We demonstrate how marginal transition probabilities are used to provide global explanations. We also go on to prove in 
Supplementary Material~\ref{app:dag} that under a 
model-centric faithfulness condition on $\mathcal M_{K^*}$, the resulting attributions are causally grounded in a 
precise sense: nonzero attributions correspond exactly to 
direct effects in the model-induced causal graph, and zero 
attributions to d-separated pairs.

\subsubsection{Level 1: Variable Importance - Who Matters?}
\begin{definition}[Lag-Resolved Influence and Variable Importance]
    For any dimension $d' \in [D]$ at lag $k \in \{1, \dots, K^*\}$, the \emph{lag-resolved influence} on the forecast at time $t$ is 
    \begin{equation}
    \label{eq:lag-resolved-influence}
        \phi_k^{d'} = \sum_{d = 1}^D \rho(X_{t-k}^{d'} \rightarrow X_{t}^{d}) \cdot \mathrm{1}\{\rho(X_{t-k}^{d'} \rightarrow X_{t}^{d}) > \lambda\}.
    \end{equation}
 where:
\begin{equation}
  \label{eq:rho}
  \rho(X^{d'}_{t-k} \to X^{d}_{t}) = \sum_{h \in \mathcal{H}_{K^*}} \hat{\pi}^*(h)\cdot
  \frac{1}{2} \Delta_{TV},
\end{equation}
with $$\Delta_{TV} = \sum_{s^{d} \in [N]}
  \left|\hat{\mathcal T}^{f,d}_{K^*}(s^{d}\mid h)
        - \frac{1}{N}\sum_{x=0}^{N-1}
          \hat{\mathcal T}^{f,d}_{K^*}(s^{d}\mid h^{d' \leftarrow x}_k)\right|,$$

where $h_k^{d'\leftarrow x} $ replaces only the $d'$-th component at lag $k$.
 $\hat{\pi}^*(h) = \mathbb{P}(\psi(X_{t-K^*+1:t}) = h) > 0$
is the approximate stationary probability of history $h$. Intuitively, $\rho(X^{d'}_{t-k} \to X^d_t)$ measures how much the model's
forecast distribution for variable $d$ shifts when we average out the influence
of variable $d'$ at lag $k$. $\lambda$ here serves as a threshold for which there exists genuine causal dependency, not noise arising from approximations, and prevents false positive and false negative causal edges with probability dependent on the upper bound estimate of $\mathbb{E}\!\left[
    \|\hat{\mathcal T}^{f,d}_K(\cdot \mid h)-\;
               \mathcal T^{f,d}_K(\cdot \mid h)\|_{TV}
  \right]$ ($\lambda \le 0.1)$ is recommended, Supplementary Material~\ref{app:lambda}).

     The \emph{Marginal Total-Variation Influence} of variable $d'$ aggregated across all lags is:
    \begin{equation}
    \label{eq:marginal-tv-influence}
        \Phi^{d'} = \sum_{k=1}^{K^*} \phi_k^{d'}
    \end{equation}
     and finally define the \emph{Variable Importance} as the normalization of the marginal total-variation influence.
 The quantity $\tilde\Phi^{d'}$ quantifies the total influence of $\{X^{d'}\}_{t-k}^{t-1}$ to predict $\mathbf X_t$ for any $t \ge 0$.
\end{definition}
This ranking is (i) global, averaged over the full state space; (ii) probabilistically grounded, measuring actual distributional shift rather than contribution to a local linear approximation; and
(iii) directly comparable across variables, as all values lie in $[0, 1]$.

\subsubsection{Level 2: Lag Profiles - When does Each Variable Matter?}
The lag-resolved influence in equation \eqref{eq:lag-resolved-influence} plotted against $k$ reveals the temporal shape of
how the model uses past information. The decay profile reveals whether the model has learned momentum, volatility clustering, or mean-reversion.



\subsubsection{Level 3: Marginal Regime Explanation (Top-$k$) - What Sequences Matter?}
For target dimension $d \in [D]$ and history sequence $h \in \mathcal{H}_{K^*}$, we define
\textit{marginal interdependence index} as:

\begin{align*}
    \Psi^d(h) &= \mathbb{E}_{h' \sim \hat{\pi}^*}
    \bigl[\|\hat{\mathcal{T}}^{f,d}_{K^*}(\cdot\mid h)-\;
                     \hat{\mathcal{T}}^{f,d}_{K^*}(\cdot\mid h')\|_{TV}]\\
            &= \sum_{h' \in \mathcal{H}_{K^*}} \hat{\pi}^*(h') \cdot
    \|\hat{\mathcal{T}}^{f,d}_{K^*}(\cdot\mid h)-\;
                     \hat{\mathcal{T}}^{f,d}_{K^*}(\cdot\mid h')\|_{TV}
\end{align*}
the $\hat{\pi}^*$-weighted average total variation distance between the model's
prediction at $h$ and its prediction at a history drawn from
the stationary distribution.
A high value of $\Psi^d(h)$ identifies $h$ as a distinctive
regime: the model behaves unusually at $h$ relative to its
typical conditional behavior across the history space. Level~3 reports the score table $\{\Psi^d(h)\}_{d \in [D],\,
h \in \mathcal{H}_{K^*}}$.

\subsubsection{Level 4: Interventional Profiles - What Happens When We Fix a Variable?}
For $h \in H^+_{K^*}$, the \emph{counterfactual history}
$h^{d'\leftarrow x}_k$ replaces the $d'$-th component of $h$
at lag $k$ with bin $x$. The \emph{average interventional
effect} (AIE) of variable $d'$ at lag $k$ on variable $d$ is

\begin{equation}
    \mathrm{AIE}_d(d',k,x) = \sum_{h}\hat{\pi}^*(h)\left \| \hat{\mathcal T}^{f,d}_{K^*}(\cdot\mid h)
        - \hat{\mathcal T}^{f,d}_{K^*}(\cdot\mid h^{d' \leftarrow x}_k) \right\|
    \label{eq:AIE}
\end{equation}
$\mathrm{AIE}_d(d',k,x)$ measures $f$'s
predictive sensitivity to fixing past inputs; it is a property
of the surrogate $\mathcal  M_{K^*}$, not of the data-generating process.
One can verify that $\frac{1}{N}\sum_x
\mathrm{AIE}_d(d',k,x) = \rho(X^{d'}_{t-k}\to X^d_t)$,
connecting Level 4 directly to the edge-trimming criterion of
Level 1. Level 4 reports the ranked table
$\{\frac{1}{N}\sum_x\mathrm{AIE}_d(d',k,x)\}_{d,d',k}$.
\subsubsection{Level 5: Uncertainty of Explanations}
\textit{Per-variable aleatoric uncertainty} is given by the entropy of the predictive distribution:
\begin{equation}
\label{eq:aleatoric uncertainty}
    H^d_h = -\sum_{s^{ d}} \hat{\mathcal{T}}^{f,d}_{K^*}(s^{ d}\mid h)
\log \hat{\mathcal{T}}^{f,d}_{K^*}(s^{d}\mid h),
\end{equation}

 We estimate the \textit{Epistemic Uncertainty} via an MLE bootstrap variance:

\begin{equation}
    \mathrm{Var}(\hat{\mathcal{T}}^{f,d}_{K^*}(s^{ d}\mid h))
\approx \hat{\mathcal{T}}^{f,d}_{K^*}(s^{ d}\mid h)\left[1 - \hat{\mathcal{T}}^{f,d}_{K^*}(s^{ d}\mid h)\right] / n^f(h).
\end{equation}

where $n^f(h)$ is the query-visit count for history h. Histories with low $n^f(h)$ flag unreliable explanations,
yielding a natural \textit{explanation coverage }map. No other XAI method for time series provides this
reliability certificate.

\begin{figure*}[!ht]
  \centering
  \begin{minipage}[t]{0.32\textwidth}
    \centering
    \includegraphics[width=\textwidth]{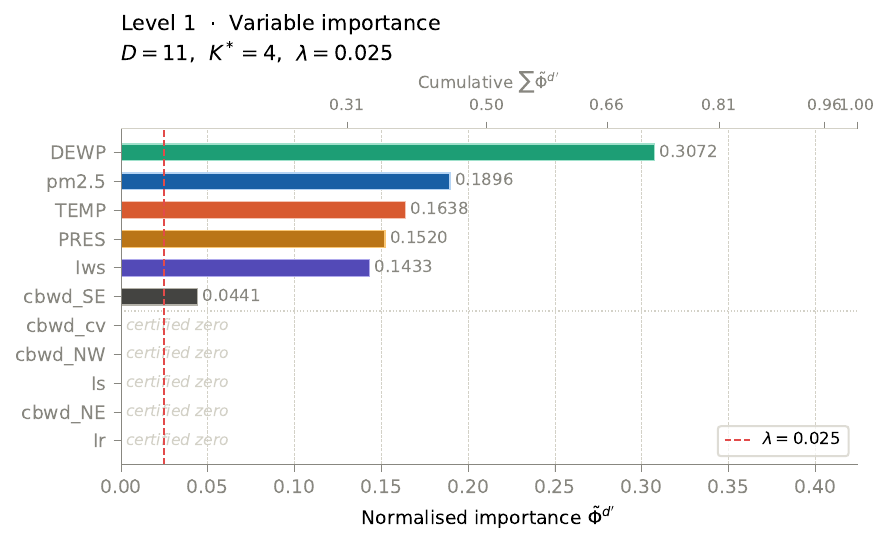}
    \centerline{(a)}
  \end{minipage}
  \hfill
  \begin{minipage}[t]{0.32\textwidth}
    \centering
    \includegraphics[width=\textwidth]{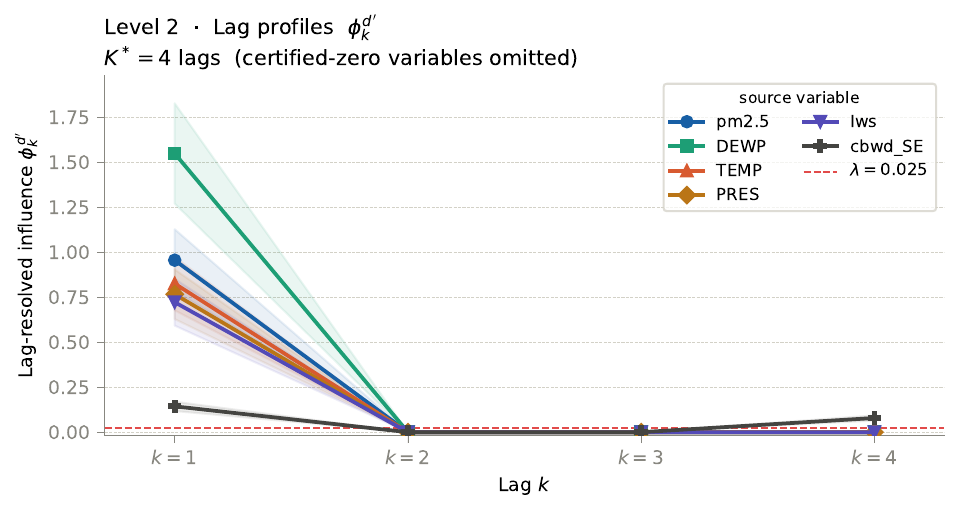}
    \centerline{(b)}
  \end{minipage}
  \hfill
  \begin{minipage}[t]{0.32\textwidth}
    \centering
    \includegraphics[width=\textwidth]{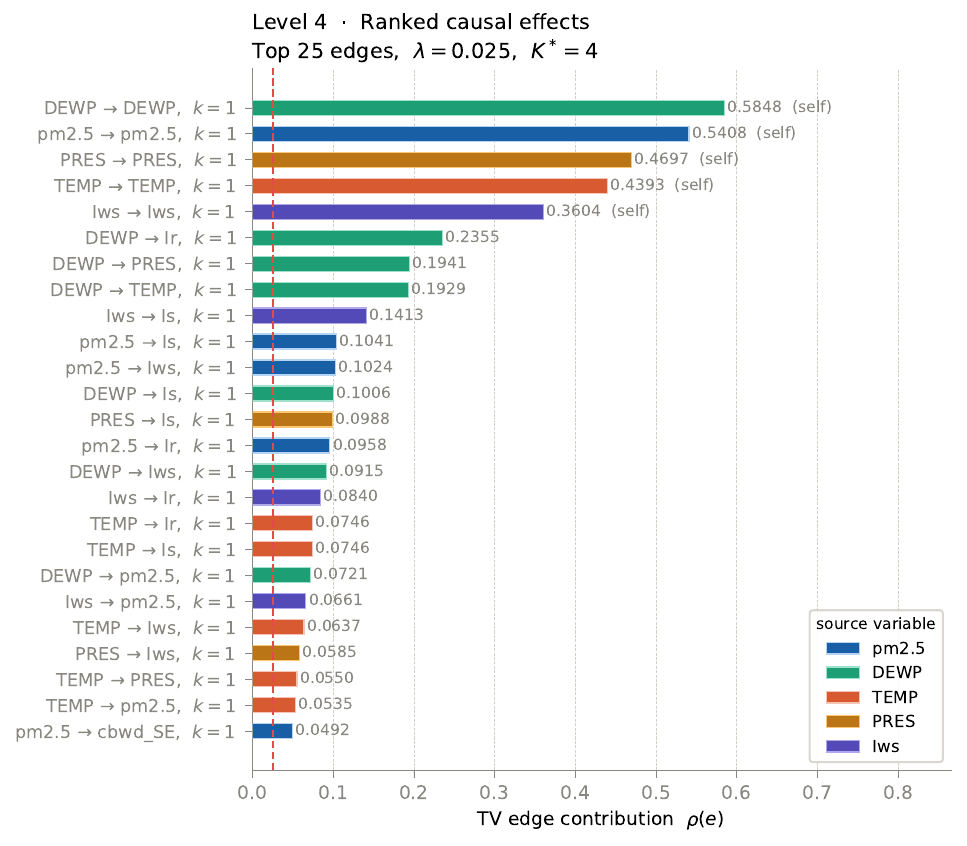}
    \centerline{(c)}
  \end{minipage}

  \vspace{1.0em}

  \begin{minipage}[t]{0.32\textwidth}
    \centering
    \includegraphics[width=\textwidth]{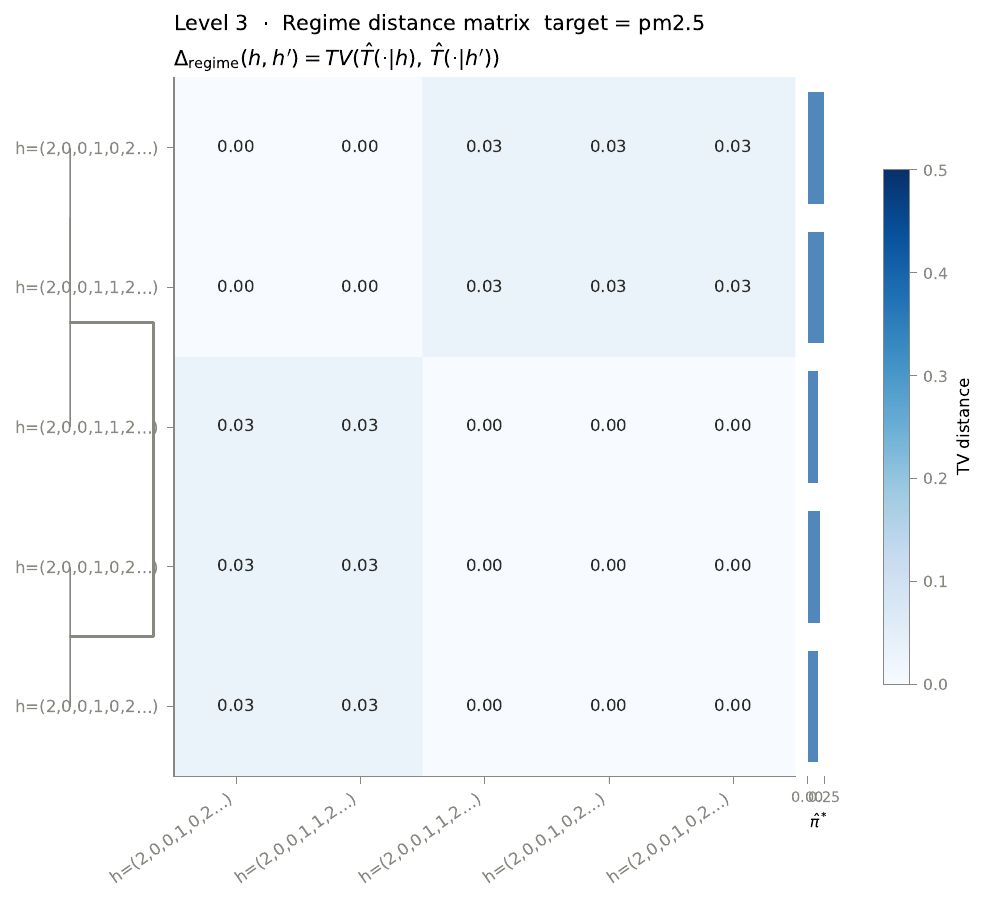}
    \centerline{(d)}
  \end{minipage}
  \hfill
  \begin{minipage}[t]{0.60\textwidth}
    \centering
    \includegraphics[width=\textwidth]{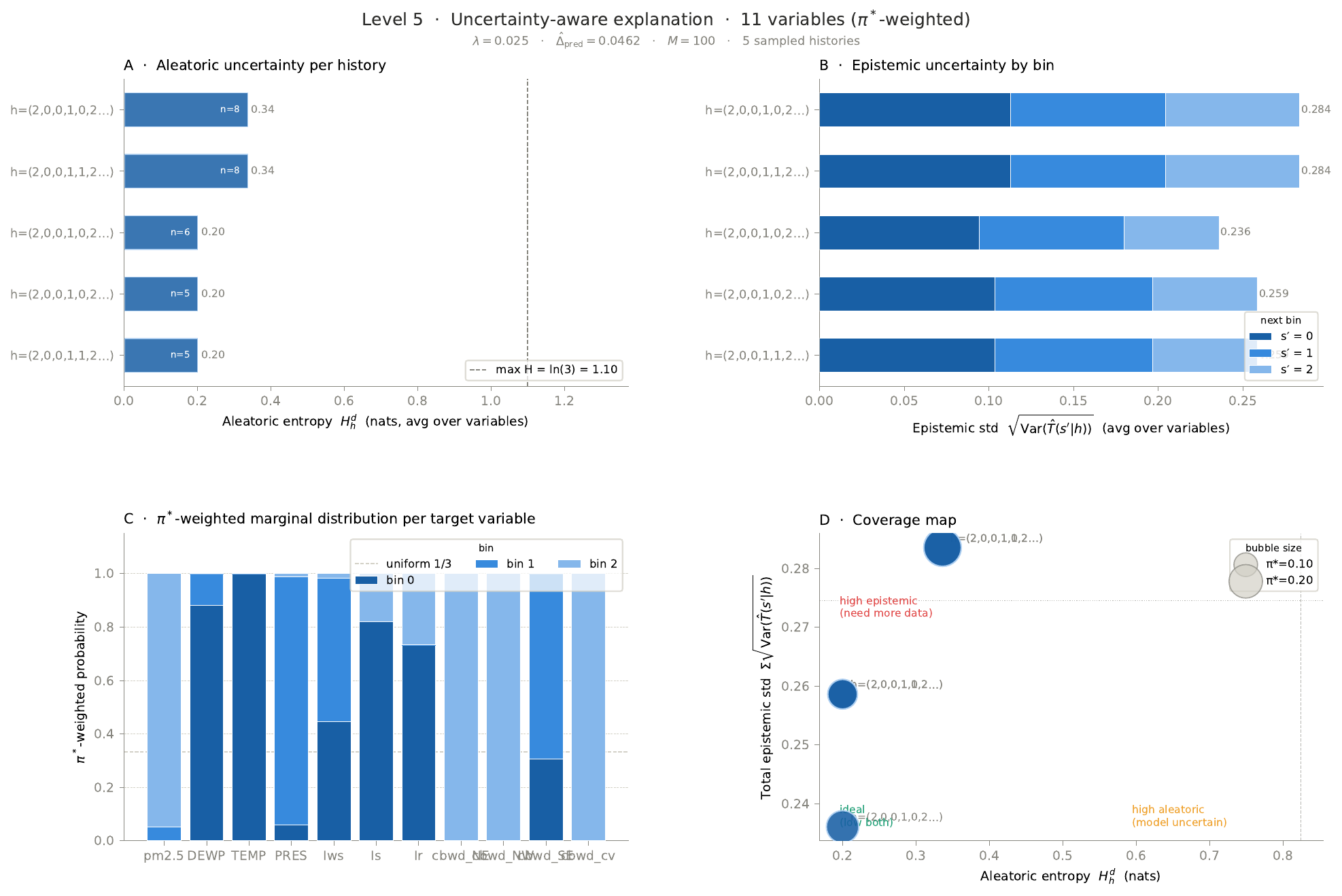}
    \centerline{(e)}
  \end{minipage}
  \hfill
  \begin{minipage}[t]{0.32\textwidth}
    \centering
  \end{minipage}

  \caption{
  \textbf{Five-level \textsc{KARMA} explanation hierarchy} applied to a
  TCN ($W=24$, hidden~64, 2~layers, kernel~3) trained on the Beijing
  PM$_{2.5}$ dataset ($D=11$, $N=3$, $K^*=4$, $\varepsilon = 0.049$,
  $\lambda=0.025$, $\hat{\Delta}^{\mathrm{pred}}=0.046$).
  All levels are derived from the single estimated kernel
  $\hat{\mathcal T}^{f,d}_{K^*}$ without additional oracle queries.
  History indices $h$ follow the mixed-radix encoding
  $h = \sum_{k,d} h^d_k \cdot N^{D(K^*-1-k)+(D-1-d)}$.
  \textbf{(a) Level~1 --- Variable importance} $\tilde{\Phi}^{d'}$:
    \textsc{dewp} dominates; the remaining wind-direction and precipitation
    variables receive certified-zero attributions ($<\lambda$).
  \textbf{(b) Level~2 --- Lag profiles} $\phi^{d'}_k$:
    active variables peak sharply at $k=1$ and decay over the
    $K^*=4$ retained lags, with \textsc{temp} retaining secondary
    influence (certified-zero variables omitted).
  \textbf{(c) Level~4 --- Ranked interventional effects} $\rho(e)$:
    self-loops dominate the top-25 retained edges; the strongest
    cross-variable effect is \textsc{dewp}$\to$\textsc{temp}.
  \textbf{(d) Level~3 --- Regime distance matrix} (target \textsc{pm2.5}):
    pairwise total-variation distances
    $\|\hat{\mathcal T}^{f,d}_{K^*}(\cdot\mid h)-\hat{\mathcal T}^{f,d}_{K^*}(\cdot\mid h')\|_{TV}$
    between histories, hierarchically clustered; bright off-diagonal blocks
    mark distinctive regimes where the model's conditional forecast departs
    from its typical behaviour, the pairwise view from which
    $\Psi^d(h)$ aggregates.
  \textbf{(e) Level~5 --- Uncertainty-aware explanation}
    ($\hat{\pi}^*$-weighted):
    \textbf{(A)} per-history aleatoric entropy $H^d_h$ (bits, averaged over
    variables);
    \textbf{(B)} epistemic uncertainty by next-bin, the bootstrap standard
    deviation $\sqrt{\mathrm{Var}(\hat{\mathcal T}^{f,d}_{K^*})}$;
    \textbf{(C)} $\hat{\pi}^*$-weighted marginal next-bin distribution per
    target variable, against the uniform $1/N$ reference;
    \textbf{(D)} coverage map of aleatoric entropy vs.\ epistemic deviation
    with bubble area $\propto$ visit count $n^f(h)$, flagging high-epistemic,
    low-coverage histories that warrant more data.}
  \label{fig:KARMA-hierarchy}
\end{figure*}

\section{KARMA}
\label{sec:framework}
The complete KARMA procedure is summarized in Algorithm~\ref{alg:KARMA} 
(Supplementary Material~\ref{app:algorithm}); we describe each step in turn.
\subsection{Step 1: State Space Discretization}

Each marginal series $X^d$ is discretized into $N$ bins via quantile boundaries on the training window. We recommend $N \in \{2,3,4,5\}$.

\subsection{Step 2: $K^*$ Selection and Certified Attribution}
\label{sec:step2}

\paragraph{Surrogate validity.}
The selected order $K^*$ is the smallest $K$ for which the model is
certified insensitive to the prefix:
\begin{equation}
  \label{eq:kstar}
  K^* = \min\!\left\{K \geq 1 :
    \min_{b \in \mathcal{B}}\hat{\Delta}^{\mathrm{pred}}(K, b) < \varepsilon\right\}, \:\:\: \varepsilon>0,
\end{equation}

$\hat{\Delta}^{\mathrm{pred}}$ is a \textbf{direct model test} requiring no
kernel estimation: query $f$ twice per sample, once with the full window and
once with $b^* \oplus h_i^{}$, and average the prediction discrepancy.
\footnote{Optimally, for maximum confidence in the surrogacy approximation, it is performed on unseen data to ensure transparency. However, due to data scarcity, the validation split can be used as an alternative.}

\subsection{Step 3: KARMA Baseline and Model Compression}
\label{sec:baseline}

When $K^* < W$, define the \textbf{KARMA baseline} $b^*$ as the prefix
minimizing the average prediction discrepancy when the true prefix is replaced:
\begin{equation}
    \label{eq:bstar}
    b^* = \argmin_{b}\hat{\Delta}^{\mathrm{pred}}(K^*, b).
\end{equation}
Unlike other baselines, which are analyst-chosen, $b^*$ is
\textbf{model-certified}: it is the prefix that makes $f$'s predictions
under the surrogate closest to $f$'s true predictions, discovered from
model behavior. Define the \textbf{Compression ratio} by $W/K^*$.\footnote{Model compression and the certified-zero result
require only that $\hat{\Delta}^{\mathrm{pred}}$ be estimated reliably, which
demands $n \gtrsim 200$ independent model queries for stable averages,
easily satisfied with any held-out series of length $T \geq W + 200$.
Pillars~1--2 are therefore applicable under standard data budgets without
qualification.
}

\subsection{Step 4: (Marginal) Transition Kernel Estimation}
 
The five-level explanations are computed from the marginal transition kernel
$\hat{\mathcal{T}}^{f,d}_{K^*}$. Estimating the full joint kernel is statistically
prohibitive: the number of histories grows as $N^{DK^*}$, so the data required
for a uniform total-variation guarantee is exponential in $DK^*$ and infeasible
at standard series lengths. KARMA therefore estimates $D$ factored marginal
kernels, one per target variable, which a single set of model queries supplies
simultaneously.
 
We provide three estimators, all with finite-sample total-variation guarantees; the choice trades data efficiency against
implementation simplicity.
 
\paragraph{Strategy A (empirical counting).} A Dirichlet posterior-mean
estimator accumulates discretised model outputs into per-history counts. It is
simple and unbiased in the large-sample limit, but its sufficient sample budget
still scales with $N^{DK^*}$, limiting it to small state spaces.
 
\paragraph{Strategy B ($b^*$-prefix Monte Carlo, default).} 
We provide first the following definition

\begin{definition}[Observed history support]
\label{def:support}
The \textbf{observed support} of $\hat{\pi}^*$ at order $K$ is:
\begin{equation}
  \mathcal{H}^+_K
  = \bigl\{h \in \mathcal{H}_K : \hat{\pi}^*(h) > \theta, \:\: \theta\ge0\bigr\},
  \label{eq:support}
\end{equation}
the set of histories that actually appear in the training data, where $\hat{\pi}^*(h) = \mathbb{P}(\psi(X_{t-K^*+1:t}) = h) > 0$.
All
$\hat{\pi}^*$-weighted explanation quantities are zero outside this set.
Restricting to $\mathcal{H}^+_{K^*}$ bounds the effective history space by 
$|\mathcal{H}^+_{K^*}| \leq T - K^*$, growing linearly in $T$ rather than 
exponentially in $DK^*$. We use $\theta = 0$ is the default setting.
\end{definition}

Rather than relying
on observed history counts, Strategy~B draws samples from a distribution pool, substitutes the certified baseline $b^*$ on the prefix,
and queries the model. Its sufficient budget scales with $|\mathcal{H}^+_{K^*}|
\leq T - K^*$, linear in the series length rather than exponential in
$DK^*$, which is what makes KARMA tractable in practice. We use Strategy~B by
default.

\paragraph{Strategy C (tree-structured pooling).}
Strategies~A and~B estimate each history's kernel independently, so they fail
on histories that are never sufficiently covered, no matter how long the series.
Strategy~C instead grows a regression tree that partitions the covered
histories $\mathcal{H}^{\mathrm{cov}}$ (patterns with sufficient coverage) into leaves of structurally similar
histories and pools their model counts, sharing statistical strength across
histories that agree on the bins the tree identifies as predictive. Sparse
histories inherit the pooled kernel of the leaf they route to, extending
reliable estimates to histories inaccessible to Strategies~A and~B. The sample
budget scales with the number of leaves $L \ll |\mathcal{H}^+_{K^*}|$ rather
than with $|\mathcal{H}^+_{K^*}|$, and when every history is covered the tree
reduces to one history per leaf and recovers Strategy~B exactly. The split
variables additionally read off as Level~1--2 structure: a source variable at a
given lag that appears as a high split node for target $d$ is a primary driver
of $d$.

All estimators admit explicit noise floors that upper-bound the per-history estimation error \eqref{eq:rho_floor_h} and feed directly into the Level~5 reliability map; the precise bounds and sample budgets are stated and proved in Supplementary Material.
 
\begin{remark}[Reliability below the sufficient budget]
The sample thresholds are \emph{sufficient} conditions for kernel error below
$\lambda/4$; reliable estimates are routinely obtained with fewer observations.
We therefore recommend reporting Level~5 alongside any KARMA output, as the
epistemic variance map certifies reliability under the actual data budget.
\end{remark}

\subsection{Step 5: 5-level Explanation Retrieval}
See Section~\ref{subsec:5-level}

\section{Experiments}
\label{sec:experiments}
We compare KARMA (KM) to three temporal aware methods, TimeSHAP (TS), WinIT (WI), and Dynamask (DM), the Feature Occlusion (FO) method, and the gradient-based method Integrated Gradients (IG). Code available on \href{https://github.com/AmTuTi1999/karma_.git}{https://github.com/AmTuTi1999/karma$\_$.git}.
\subsection{Case Study A: Synthetic Data}
\subsubsection{Experimental setup}
\label{sec:varmulti-setup}

All methods explain the same analytical oracle $f_{\mathrm{VAR}}$, which
implements the true VAR data-generating process exactly (no model error) with known true Markov Order $K_{true}$ given coefficient matrix $A \in \mathbb{R}^{D \times D \times K_{true}}$:
\[
    f_{\mathrm{VAR}}(x)_j
    \;=\; \sum_{k=1}^{K_{true}} \sum_{i=1}^{D} A_{ij}^{(k)}\, x_{W-k,\,i}, \qquad j \in [D]
\]
where $x \in \mathbb{R}^{W \times D}$ is the input window of length $W$.
The ground-truth importance matrix is
$\Phi^*_{i,k} = \sum_j |A_{ij}^{(k)}|$,
aggregating the absolute coefficient magnitudes over all target variables.
Methods are ranked by Kendall's $\tau$ between their flattened attribution
estimate $\hat{\Phi}$ and $\Phi^*$.

For KARMA, the Markov order $K^*$ is immediately obtained as $K_{true}$ and certified baseline $b^*$ is selected via
$\Delta^{\mathrm{pred}} < \varepsilon$ with $\varepsilon = 10^{-5}$.
The history discretizer uses $N = 3$ bins for $D \leq 4$ and $N = 2$
for $D > 4$ to keep the symbolic state space tractable. $M = 100$ Monte Carlo draws per history are used to build sampling pool for kernel estimation.

To obtain cell-level global explanations from FO, DynaMask, and TimeSHAP, the raw $(B, D, W)$ attribution
tensor is aggregated to $\hat{\Phi} \in \mathbb{R}^{D \times W}$ by
\[
    \hat{\Phi}_{d}
    \;=\; \frac{1}{B}\sum_{b=1}^{B} |\phi_{b,d,\,W-k}|, \qquad d \in [D], \: k \in [W]
\]
reading each lag $k$ from window position $W - k$.
WinIT uses the last-target slice at the same window positions.
KARMA produces $\hat{\Phi}$ directly without further aggregation.

We exclude IG from the synthetic comparison: on the linear VAR oracle, integrated gradients reduce to the exact coefficient magnitudes, making the comparison degenerate rather than informative. IG is retained in the real-world evaluation (Table~\ref{tab:lag_drop25_var}), where the models are nonlinear.
\subsubsection{Results}
\label{sec:synthetic-results}

We first evaluate \textsc{KARMA} on model-induced causal graph recovery,
treating $\rho(e) > \lambda$ (0.025 for \textsc{tiny}, 0.1 for the rest) as the criterion for a retained edge $e$. Table~\ref{tab:var_karma_edges} reports the precision, recall, and F1 scores of edge discovery. KARMA achieves a recall score of $1.0$ in all but the most complex case (still achieving 0.71) showing KARMA excludes no true causal relationships. The high precision and F1 indicates KARMA's ability to not retain spurious edges.

\begin{table}[H]
\centering
\caption{KARMA edge-recovery metrics, $|E|$ denotes the number of edges. Recall is 1.00 through \emph{large};
         precision and recall also stay consistently close or equal to 1.00.}
\label{tab:var_karma_edges}
\setlength{\tabcolsep}{6pt}
\begin{tabular}{lrrr ccc}
\toprule
Config & $D$ & $K_{true}$ & $|E|$ & Precision & Recall & F1 \\
\midrule
tiny   & 2 & 1 &  2 & 1.00 & 1.00 & 1.00 \\
small  & 4 & 2 &  5 & 1.00 & 1.00 & 1.00 \\
medium & 4 & 3 &  7 & 0.70 & 1.00 & 0.82 \\
large  & 6 & 3 &  9 & 1.00 & 1.00 & 1.00\\
xlarge & 8 & 4 & 14 & 1.00 & 0.71 & 0.83 \\
\bottomrule
\end{tabular}
\end{table}

We further evaluate across five VAR configurations of increasing scale
(Table~\ref{tab:var_tau}). \textsc{KARMA} is the sole top-ranked method on
medium, large, and xlarge configurations, where the other methods degrade
as the graph grows denser.
\begin{table}[H]
\centering
\caption{Kendall's $\tau$ vs.\ ground truth across VAR configurations.
         \textbf{Bold} marks the highest per row (ties included). $*$ indicates Kendall test with p value less than 0.05}
\label{tab:var_tau}
\setlength{\tabcolsep}{5pt}
\begin{tabular}{lrrr ccccc}
\toprule
Config & FO & DM & WI & TS & KARMA \\
\midrule
tiny  & $0.36$ & $\phantom{-}0.76^{\phantom{*}}$ & $\mathbf{0.94^*}$  & $\mathbf{0.94^*}$  & $\mathbf{0.94^*}$ \\
small  & $\phantom{-}0.13$ & $\phantom{-}0.14^{\phantom{*}}$ & $0.86^*$ & $\mathbf{0.92^*}$ & $\mathbf{0.92^*}$\\
medium & $\phantom{-}0.26$ & $-0.09^{\phantom{*}}$ & $0.77^*$ & $0.79^*$ & $\mathbf{0.90^*}$ \\
large  & $-0.07$ & $-0.32^*$ & $0.72^*$ & $0.68^* $& $\mathbf{0.92^*}$ \\
xlarge & $\phantom{-}0.25$ & $\phantom{-}0.09{\phantom{*}}$ & $0.76^*$ & $0.64^*$& $\mathbf{0.77^*}$ \\
\bottomrule
\end{tabular}
\end{table}

\subsection{Case Study B: Real-World Forecasting Benchmark}

We consider three diverse time series datasets: Electricity Transformer Temperature hourly 1 dataset (ETTh1) \cite{zhou2021informer}, Beijing
PM$_{2.5}$ (Bei) \cite{liang2015assessing}, Exchange Rate (ExRa) \cite{lai2018modeling}, and three time series forecasting models (GRU, LSTM, TCN). To compare KARMA to the different methods, we use a \textit{temporal aware lag based AUC score} ($\mathbf{AUC}_{lag}$) (Supplementary Material~\ref{sec:auc_lag}) to measure predictive change under time lag occlusion i.e. we evaluate how well the XAI method learns model induced temporal dependencies by replacing entire time horizons with conditional expectations learnt by a VAR on transition dynamics on the training data.

Given a batch of $B$ test windows and a $D$-variate attribution tensor
$\phi \in \mathbb{R}^{B \times D \times T}$, we aggregate attributions into a
per-timestep importance score
\begin{equation}
    s_t \;=\; \frac{1}{BD}\sum_{b=1}^{B}\sum_{d=1}^{D}\bigl|\phi_{b,d,t}\bigr|,
    \qquad t = 1,\ldots,T.
\end{equation}
For \textsc{KARMA}, we bypass the cell attribution matrix and derive time scores
directly from the raw edge weights,
\begin{equation}
    s_t^{\textsc{KARMA}} \;=\!\!\sum_{\{e\,:\,T - \ell_e = t\}}\!\!\rho_e,
\end{equation}
where $\ell_e$ is the lag and $\rho_e$ is the importance of edge $e$, so that
a timestep targeted by many high-weight causal edges scores proportionally higher.

We report $\mathbf{AUC}_{lag}\text{@}25\%$, a
single-point summary of how much prediction changes when the top quarter of
timesteps are removed.
A method that correctly identifies the temporal dependencies concentrates its
top-ranked positions there, producing a steep early rise and a high area;
one that wastes rankings on non-blanket lags sees a flat curve.

\subsubsection{Results.}
Table~\ref{tab:lag_drop25_var} reports AUC$_{\text{lag}}$@25\% across the three
datasets and three architectures, with the full removal curves shown in
Figure~\ref{fig:lag_curves}. On ETTh1, KARMA attains the highest score on
every architecture and opens a clear margin on the TCN ($0.71$ vs.\ $0.49$ for
the next-best method), indicating that its edge-derived timestep scores
concentrate prediction-relevant lags more sharply than the flat baselines. On
Beijing PM2.5, KARMA is competitive throughout, matching or marginally trailing
the strongest baseline on each architecture. On Exchange Rate, all methods
collapse to near-zero AUC$_{\text{lag}}$@25\%: this dataset is close to a random
walk, where successive increments carry little learnable temporal structure, so
no attribution method can identify timesteps whose removal
substantially shifts the forecast. We read this not as a failure of the
explainers but as the metric behaving correctly: when the model has learned
little exploitable temporal dependence, a faithful temporal-importance score
\emph{should} be flat. Notably, the strong perturbation baselines IG and
TimeSHAP, which are competitive on ETTh1, exhibit the same collapse on Exchange
Rate, confirming that the effect is a property of the data rather than of
KARMA's construction.

\begin{table}[t]
\centering
\caption{$\mathbf{AUC}_{lag}\text{@}25\%$ under VAR-conditional imputation. Higher is better.}
\label{tab:lag_drop25_var}
\setlength{\tabcolsep}{5pt}
\begin{tabular}{llcccccc}
\toprule
Dataset & Arch & FO & DM & WI & IG & TS & KM \\
\midrule
\multirow{3}{*}{ETTh1}
  & GRU  & 0.03 & 0.04 & 0.03 & 0.61          & 0.61          & \textbf{0.63} \\
  & LSTM & 0.16 & 0.08 & 0.09 & 0.62          & \textbf{0.66} & \textbf{0.66} \\
  & TCN  & 0.03 & 0.24 & 0.02 & 0.49          & 0.49          & \textbf{0.71} \\
\midrule
\multirow{3}{*}{ExRA}
  & GRU  & 0.00 & 0.01 & 0.00 & \textbf{0.03} & \textbf{0.03} & 0.02 \\
  & LSTM & 0.00 & \textbf{0.06} & 0.00 & 0.01 & 0.01          & 0.00 \\
  & TCN  & 0.00 & 0.01 & 0.00 & 0.02          & 0.02          & \textbf{0.05} \\
\midrule
\multirow{3}{*}{Bei}
  & GRU  & 0.01 & 0.01 & \textbf{0.29} & \textbf{0.29} & 0.22 & 0.18 \\
  & LSTM & 0.01 & 0.01 & 0.10          & \textbf{0.25} & 0.24 & 0.24 \\
  & TCN  & 0.02 & 0.01 & \textbf{0.28} & 0.22          & 0.19 & \textbf{0.28} \\
\bottomrule
\end{tabular}
\end{table}

\begin{figure}[t]
    \centering
    \includegraphics[width=\linewidth]{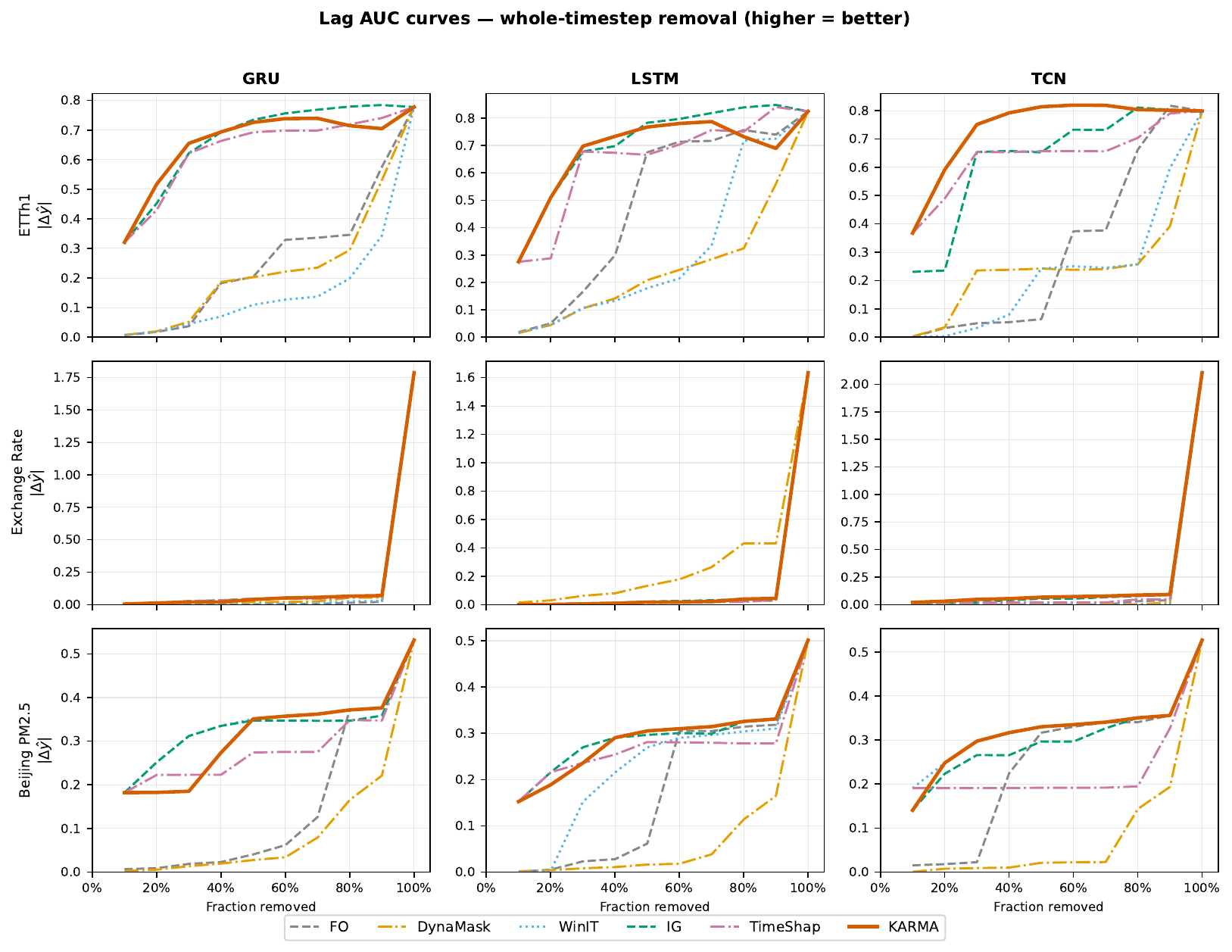}
    \caption{Lag AUC removal curves for all datasets and architectures.
             Each curve shows cumulative $|\Delta\hat{y}|$ as the fraction
             of removed timesteps grows from 0 to 1.}
    \label{fig:lag_curves}
\end{figure}


\section{Conclusion}

We presented KARMA, a framework for certified temporal explanation of
black-box models on multivariate time series via $K$-order Markov
approximations. KARMA makes three contributions. Pillar~1 selects the minimal
predictively sufficient lag $K^*$. Pillar~2 delivers a model compression
theorem certifying prediction fidelity loss $\leq \varepsilon$ at compression
ratio $W/K^*$, a model-certified baseline $b^*$, and certified-zero
attributions for all lags beyond $K^*$, not merely small attributions.
Pillar~3 extracts five layers of global explanation from the single estimated
kernel without additional oracle queries, spanning variable importance, lag
profiles, regime distinctiveness, average causal effects, and uncertainty
quantification.

On synthetic VAR benchmarks with known ground truth, KARMA recovers the
model-induced causal structure with perfect recall on all but the largest
configuration ($1.00$ through the \texttt{large} setting, $0.71$ on
\texttt{xlarge}), correctly retaining every true edge; its high precision on
all graphs reflects a robustness to spurious relationships that might arise from autocorrelation. On feature-importance ranking, KARMA is the sole top-ranked method on
the \texttt{medium}, \texttt{large}, and \texttt{xlarge} configurations, where
perturbation-based baselines such as TimeSHAP and WinIT degrade as graph
density grows, while remaining competitive on the smaller settings. On
real-world forecasting across three datasets and three architectures, KARMA
matches or exceeds the strongest baselines under VAR-conditional imputation,
most notably on ETTh1.

A key limitation is the exponential growth of the state space in $N^{DK^*}$,
which can make full kernel estimation challenging in high-dimensional settings.
In practice, KARMA mitigates this via marginal kernels, sampling strategies
restricting to $\mathcal{H}^+_{K^*}$ (bounded linearly by $T - K^*$) and
thresholding on $\hat{\pi}^*(h)$, with reliability certified by the Level~5
coverage map; scaling to very high-dimensional systems remains an open
direction. We believe the Markov approximation paradigm offers a principled and
underexplored direction for XAI in sequential domains, with applications
spanning finance, neuroscience, and genomics.

\bibliographystyle{unsrtnat}
\bibliography{references}  





\newpage
\mbox{}
\newpage
\section*{Supplementary Material}
This supplementary material gathers the proofs and implementation details
deferred from the main text: the soundness of the trimming threshold $\lambda$
and the K.E.R.I.\ reliability index (Section~\ref{app:lambda}); finite-sample
total-variation guarantees, noise floors, and sample budgets for the three
kernel estimators (Section~\ref{sec:step4}); the model-induced causal graph and
faithfulness of KARMA's attributions (Section~\ref{app:dag}); and the full
algorithm, the $T>1$ extension, and the evaluation metric. References prefixed
with ``S'' point to this supplement; unprefixed ones to the main paper.
\setcounter{section}{0}                 
\renewcommand{\thesection}{S\arabic{section}}        
\renewcommand{\thesubsection}{S\arabic{section}.\arabic{subsection}}   
\renewcommand{\thesubsubsection}{S\arabic{section}.\arabic{subsection}.\arabic{subsubsection}}

\setcounter{figure}{0}\renewcommand{\thefigure}{S\arabic{figure}}      
\setcounter{table}{0}\renewcommand{\thetable}{S\arabic{table}}         
\setcounter{equation}{0}\renewcommand{\theequation}{S\arabic{equation}} 
\section{Constructive Extension to $T>1$}
Recall $[N]$, $[D]$, $\psi$, $\mathcal S$ and $\mathcal{H}_K$ as defined in
Section~3.1 of the manuscript. For a forecast horizon of length $T$, let
$\mathcal{S}_{T} = [N]^{D \times T}$ denote the extended (discretised) output
state space, so that a discretised forecast is a matrix
$s = (s^{d}_{i})_{d \in [D],\, i \in [T]} \in \mathcal{S}_{T}$, where $s^{d}_{i}$
is the bin of variable $d$ at horizon step $i$. Given the selected order $K^*$,
the \textbf{multi-step (joint) surrogate transition kernel} of $f$ is
\begin{equation}
    \mathcal T^{f}_{K^*}(s \mid h) \;=\;
    \mathbb P\!\left(\psi(f(\tilde{h})) = s \;\middle|\;
    \mathrm{suffix}_{K^*}(\psi(\tilde{h})) = h\right),
    \label{eq:multi_joint_kernel}
\end{equation}
for $s \in \mathcal{S}_{T}$ and $h \in \mathcal{H}_{K^*}$, where $\tilde h$ is a
$W$-length input window whose discretised $K^*$-suffix equals $h$.

\paragraph{Horizon paths.}
For $T>1$ an explanation tracks a single target variable at each future step.
We encode this choice by a \emph{relevant mask}
$M \in \{0,1\}^{D \times T}$ with exactly one active entry per horizon step,
\begin{equation}
    M[d][i] =
    \begin{cases}
        1 & d = d_i,\\
        0 & \text{otherwise,}
    \end{cases}
    \qquad i \in [T],
\end{equation}
where $d_i \in [D]$ is the variable selected at step $i$ (the $d_i$ need not be
distinct across steps). Writing $\langle s, M\rangle = (s^{d_i}_{i})_{i \in [T]}
\in [N]^{T}$ for the entries of $s$ picked out by $M$, the
\textbf{multi-step marginal surrogate transition kernel} along the path $M$ is
\begin{equation}
    \mathcal T^{f,M}_{K^*}(s^{T} \mid h) \;=\;
    \mathbb P\!\left(\langle \psi(f(\tilde{h})),\, M\rangle = s^{T}
    \;\middle|\;
    \mathrm{suffix}_{K^*}(\psi(\tilde{h})) = h\right),
    \label{eq:multi_marginal_kernel}
\end{equation}
where $s^{T} \in [N]^{T}$ is the length-$T$ horizon trajectory selected by $M$.
Because $M$ activates at most one variable per step, the path contains no
contemporaneous (same-step) couplings, consistent with the strictly-past
construction of Section~3.2; the single-target case $T=1$ recovers the marginal
kernel $\mathcal T^{f,d}_{K^*}$ of Eq.~(4).

\subsection{The Five-Level Explanation Hierarchy}
\label{subsec:multi-5-level}
Let $K^* >0$, $h \in \mathcal H_{K^*}$, $d \in [D]$ and $f$ a trained model. Further assume $\hat{\mathcal T}_{K^*}^{f,M}$ the approximation of $\mathcal T_{K^*}^{f,M}$ such that 
\begin{equation}
\label{eq:rho_floor_h_multi}
    \mathbb{E}\!\left[
    \|\hat{\mathcal T}^{f,M}_K(\cdot \mid h)-\;
               \mathcal T^{f,M}_K(\cdot \mid h)\|_{TV}
  \right] < \frac{\lambda}{4}, \qquad \lambda > 0
\end{equation}
 We describe two cases, user-specified prediction time steps

\begin{definition}[Lag-Resolved Influence and Variable Importance]
We are given the estimated horizon kernel $\hat{\mathcal T}^{f}_{K^*}(s\mid h)$
over $\mathcal S_{T}=[N]^{D\times T}$ (Eq.~\ref{eq:multi_joint_kernel}) and wish
to quantify the effect on $q$ chosen forecast steps
$\tau=\{t_1,\dots,t_q\}\subseteq[T]$.
Fix a target dimension $d$. Selecting its
length-$T$ horizon with the dimension mask $M_d\in\{0,1\}^{D\times T}$
($M_d[d'][i]=\mathbf 1\{d'=d\}$) gives
\[
  \hat{\mathcal T}^{f,d}_{K^*}(s^{d}_{1:T}\mid h)
  = \mathbb P\!\left(\langle\psi(f(\tilde h)),M_d\rangle=s^{d}_{1:T}
    \mid \mathrm{suffix}_{K^*}(\psi(\tilde h))=h\right),
  \quad s^{d}_{1:T}\in[N]^{T},
\]
already computed in equation (19) and marginalizing out the $T-q$ steps outside $\tau$ keeps the requested steps:
\[
  \hat{\mathcal T}^{f,d,\tau}_{K^*}(s^{d}_{\tau}\mid h)
  = \sum_{\mathbf x\in[N]^{[T]\setminus\tau}}
    \hat{\mathcal T}^{f,d}_{K^*}\!\bigl(s^{d}_{\tau},\,
      s^{d}_{[T]\setminus\tau}=\mathbf x\mid h\bigr),
  \quad s^{d}_{\tau}\in[N]^{q}.
\]
The edge strength of source $d'$ at lag $k$ on target $d$ over the steps $\tau$
is
\begin{equation}
\label{eq:multi-rho}
  \rho(X^{d'}_{t_1-k}\to X^{d}_{\tau})
  = \sum_{h\in\mathcal{H}_{K^*}}\hat{\pi}^*(h)\cdot\tfrac12\,\Delta_{TV},
  \qquad
  \Delta_{TV}=\!\!\sum_{s^{d}_{\tau}\in[N]^{q}}\!\!
  \left|\hat{\mathcal T}^{f,d,\tau}_{K^*}(s^{d}_{\tau}\mid h)
    -\frac1N\sum_{x=0}^{N-1}
    \hat{\mathcal T}^{f,d,\tau}_{K^*}(s^{d}_{\tau}\mid h^{d'\leftarrow x}_k)\right|,
\end{equation}
where $h^{d'\leftarrow x}_k$ replaces only the $d'$-th component at lag $k$.

The \emph{lag-resolved influence} of source
$d'$ at lag $k$ sums the edge strengths over all targets $d$,
\begin{figure}
    \centering
    \begin{tikzpicture}[
  x=1cm,y=1cm,
  cell/.style   ={draw,line width=0.5pt,minimum size=5mm,inner sep=0pt,fill=white},
  fcell/.style  ={draw,line width=0.5pt,minimum size=5mm,inner sep=0pt,fill=violet!16},
  selc/.style   ={draw,line width=0.5pt,minimum size=5mm,inner sep=0pt,fill=violet!38},
  srccell/.style={draw=brown!80!black,line width=0.6pt,minimum size=5mm,inner sep=0pt,fill=brown!55},
  lbl/.style    ={font=\small},
  brace/.style  ={decorate,decoration={brace,amplitude=4pt},line width=0.6pt},
]
\def\XB{6.6}
\def\rowdots{-1.24}
 
\fill[brown!22,rounded corners=2pt] (0.28,0.34) rectangle (0.96,-2.20);
 
\foreach \x in {0,1.86,2.48}{
  \foreach \y in {0,-0.62,-1.86}{ \node[cell] at (\x,\y) {}; }}
\foreach \y in {0,-0.62,-1.86}{ \node[srccell] at (0.62,\y) {}; }  
\foreach \y in {0,-0.62,-1.86}{ \node at (1.24,\y) {$\cdots$}; }
\foreach \x in {0,0.62,1.86,2.48}{ \node at (\x,\rowdots) {$\vdots$}; }
\node at (1.24,\rowdots) {$\ddots$};
\draw[rounded corners=4pt,line width=0.8pt] (-0.36,0.36) rectangle (2.84,-2.22);
\draw[brace] (-0.31,0.46) -- (2.79,0.46) node[midway,yshift=12pt]{\small$W$};
\draw[brace,decoration={mirror}] (-0.62,0.31) -- (-0.62,-2.17) node[midway,xshift=-10pt]{\small$D$};
\node[lbl] at (1.24,1.95) {$W\times D$ input window};
\node[lbl] at (0.62,-2.62) {lag $k$};
 
\draw[-{Stealth[length=4mm,width=4mm]},line width=2.2pt] (3.05,-0.93) -- (\XB-0.55,-0.93);
\node at ($(3.05,-0.93)!0.5!(\XB-0.55,-0.93)+(0,0.42)$) {$f$};
\node[font=\scriptsize\itshape] at ($(3.05,-0.93)!0.5!(\XB-0.55,-0.93)+(0,-0.42)$) {predictor};
 
\foreach \x in {0,0.62,1.86}{
  \foreach \y in {0,-0.62,-1.86}{ \node[cell] at (\XB+\x,\y) {}; }}
\fill[violet!20,rounded corners=2pt] (\XB-0.05,-0.35) rectangle (\XB+1.55,-0.89); 
\foreach \x in {0,0.62}{ \node[selc] at (\XB+\x,-0.62) {}; }   
\foreach \y in {0,-0.62,-1.86}{ \node at (\XB+1.24,\y) {$\cdots$}; }
\foreach \x in {0,0.62,1.86}{ \node at (\XB+\x,\rowdots) {$\vdots$}; }
\node at (\XB+1.24,\rowdots) {$\ddots$};
\draw[rounded corners=4pt,line width=0.8pt] (\XB-0.36,0.36) rectangle (\XB+2.22,-2.22);
\node[lbl] at (\XB+0,0.62)    {$t_1$};
\node[lbl] at (\XB+0.62,0.62) {$t_2$};
\node[lbl] at (\XB+1.24,0.62) {$\cdots$};
\node[lbl] at (\XB+1.86,0.62) {$T$};
\draw[brace] (\XB-0.31,1.02) -- (\XB+1.6,1.02)
      node[midway,yshift=11pt]{\small$\tau=\{t_1,\dots,t_q\}$};
\node[lbl,anchor=west] at (\XB+2.30,-0.62) {$d$};   
\node[lbl] at (\XB+0.93,1.98) {$T\times D$ forecast horizon};
 
\draw[-{Stealth[length=3mm,width=3mm]},line width=1pt]
  (0.62,-2.95)
    .. controls (1.5,-4.6) and (\XB+3.4,-4.6) ..
  (\XB+3.4,-1.6)
    .. controls (\XB+3.4,-0.62) and (\XB+2.7,-0.62) ..
  (\XB+2.6,-0.62);
\node[lbl] at (3.95,-3.55) {$\rho\!\left(X^{d'}_{t_1-k}\to X^{d}_{\tau}\right)$};
 
\end{tikzpicture}
\caption{\textbf{KARMA explanation for a $T>1$ forecast.} A black-box predictor
$f$ maps a $W\times D$ input window (window length $W$, $D$ variables) to a
$T\times D$ forecast horizon. The explanation targets a user-chosen subset of
horizon steps $\tau=\{t_1,\dots,t_q\}\subseteq[T]$; the remaining $T-q$ steps
are marginalised out of the estimated horizon kernel. The shaded input column is
the lag-$k$ source slice, and the highlighted row-$d$ band over $\tau$ is the
target. The curved arrow denotes the interventional edge
$\rho\!\left(X^{d'}_{t_1-k}\to X^{d}_{\tau}\right)$ measured by KARMA: the
$\hat{\pi}^*$-weighted total-variation shift in the targeted forecast
$\hat{\mathcal T}^{f,d,\tau}_{K^*}(\cdot\mid h)$ when the $d'$-th input component
at lag $k$ is intervened on. Lags $k>K^*$ are certified negligible by the
stopping rule.}
\label{fig:karma-horizon}
\end{figure}
\begin{equation}
\label{eq:multi-lag-resolved-influence}
  \phi_k^{d'} = \sum_{d=1}^{D}
  \rho(X_{t-k}^{d'}\to X_{\tau}^{d})\cdot
  \mathbf 1\{\rho(X_{t-k}^{d'}\to X_{\tau}^{d})>\lambda\},
\end{equation}
and the variable importance sums over lags, $\Phi^{d'}=\sum_{k=1}^{K^*}\phi_k^{d'}$,
normalised as in the $T=1$ case. The remaining levels follow as in the $T=1$ case.
\end{definition}

\begin{remark}[Semantics of cross-variable, cross-time edges]
\label{rem:cross-variable-cross-time}
By construction, the surrogate conditions only apply to strictly past states, so
KARMA admits no \emph{contemporaneous} cross-variable links: two distinct
variables at the same time slice, $X^{d'}_{t-k}\to X^{d}_{t-k}$ ($d'\neq d$),
are excluded by design (cf.\ Definition~\ref{app:dag}). Cross-variable
influence is therefore expressible only when separated by a delay $k\ge 1$,
and every such edge $X^{d'}_{t-k}\to X^d_t$ makes precisely one claim:
variable $d'$ exerts a \emph{direct} influence on variable $d$ that
materialises after $k$ steps and that no other variable--lag pair in the
Markov blanket can account for. Its strength $\rho(e)$ quantifies this
residual effect, while the lag $k$ carries an equally concrete meaning, it is
the \emph{propagation delay} with which $d'$ acts on $d$. Such delays are
physical rather than statistical: the transatlantic contagion lag in finance,
the axonal conduction delay in neuroscience, the generation-plus-travel time
in epidemiology, or the transcription delay from a regulator to its target in
genomics. KARMA reads $k$ as a learned estimate of this delay, not an artifact
of estimation. The stopping rule sharpens the reading: edges at lags $k>K^*$
are certified negligible, whereas those at $k\le K^*$ are precisely the
mechanistic pathways the model has learned to exploit. Contemporaneous
couplings, where one variable drives another within the same step, fall
outside this surrogate; we leave such instantaneous extensions to future work.
\end{remark}
\section{Causal Sufficiency of Thresholding Constant $\lambda$}
\label{app:lambda}
 
Let $K^* \ge 1$ and $e = (X_{t-k}^{d'} \to X_t^d)$ denote a causal edge and $\hat{\rho}(\rho)$, the quantity in Eq (6) w.r.t. $\hat{\mathcal T}^{f, d}_{K^*}({\mathcal T}^{f, d}_{K^*})$ respectively. In the following, the argument proceeds analogously for the joint kernel ${\mathcal T}^{f}_{K^*}$ as well.
\begin{proposition}
Let ${\mathcal T}^{f,d}_{K^*}(\cdot \mid h)$ be the true model $K-$lag surrogate kernel and $\hat{\mathcal T}^{f,  d}_{K^*}(\cdot \mid h)$ it's approximation via any strategy. The constant $\lambda > 0 $ is chosen such that 

\[
 \mathbb{E}\!\left[
    \|\hat{\mathcal T}^{f,d}_{K^*}(\cdot \mid h)-\;
               \mathcal T^{f,d}_{K^*}(\cdot \mid h)\|_{TV}
  \right] < \frac{\lambda}{4}.
\]
for any $h \in \mathcal{H}^+_{K^*}$. Further assume $\max_{h \in \mathcal{H}^+_{K^*}} \rho_{\mathrm{floor}}(h) < \lambda/4$, where $\rho_{\mathrm{floor}}(h)$ is an approximate upper bound for $\mathbb{E}\!\left[
    \|\hat{\mathcal T}^{f,d}_K(\cdot \mid h)-\;
               \mathcal T^{f,d}_K(\cdot \mid h)\|_{TV}
  \right]$. Then the trimming decision at threshold $\lambda$ in Level~1 is correct with probability at least $1/2$ for each edge, and the probability of a incorrect trimming decision is bounded by:
\begin{equation}
   \mathbb P\left(|\hat{\rho}(e) - \rho(e)| \geq \lambda/2\right) \leq \frac{4\max_{h}\rho_{\mathrm{floor}}(h)}{\lambda} < 1.
    \label{eq:markov_bound}
\end{equation}
\end{proposition}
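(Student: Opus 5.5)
The plan is to bound $|\hat\rho(e)-\rho(e)|$ deterministically by a $\hat\pi^*$-weighted average of per-history total-variation errors, and then apply Markov's inequality.

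\textbf{Step 1 (Lipschitz bound).} Fix $h\in\mathcal H^+_{K^*}$ and write
\[
\delta(h)=\|\hat{\mathcal T}^{f,d}_{K^*}(\cdot\mid h)-\mathcal T^{f,d}_{K^*}(\cdot\mid h)\|_{TV}.
\]
Let $\bar{\mathcal T}(\cdot\mid h)=\frac1N\sum_x \mathcal T^{f,d}_{K^*}(\cdot\mid h^{d'\leftarrow x}_k)$, and let $\hat{\bar{\mathcal T}}$ be its hatted analogue. The summand of $\rho$ is the TV distance $\|\mathcal T^{f,d}_{K^*}(\cdot\mid h)-\bar{\mathcal T}(\cdot\mid h)\|_{TV}$. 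By the reverse triangle inequality for $\|\cdot\|_{TV}$, the hatted and unhatted summands differ by at most
\[
\delta(h)+\|\hat{\bar{\mathcal T}}(\cdot\mid h)-\bar{\mathcal T}(\cdot\mid h)\|_{TV}\le \delta(h)+\frac1N\sum_x\delta\bigl(h^{d'\leftarrow x}_k\bigr),
\]
using convexity of the TV norm. Weighting by $\hat\pi^*(h)$, which is the same in $\hat\rho$ and $\rho$, and summing gives
\[
|\hat\rho(e)-\rho(e)|\le \sum_h\hat\pi^*(h)\Bigl[\delta(h)+\tfrac1N\textstyle\sum_x\delta(h^{d'\leftarrow x}_k)\Bigr].
\]

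\textbf{Step 2 (expectation).} Taking expectations and using $\mathbb E[\delta(\cdot)]\le\max_h\rho_{\mathrm{floor}}(h)<\lambda/4$ at every history appearing, the right-hand side has expectation at most $2\max_h\rho_{\mathrm{floor}}(h)$, since $\sum_h\hat\pi^*(h)\le 1$ and the inner average over $x$ is again bounded by the max. This requires the floor to also cover the counterfactual histories $h^{d'\leftarrow x}_k$. I expect this to be the main technical obstacle, because those histories need not lie in $\mathcal H^+_{K^*}$. To handle it, I would either interpret the maximum over all histories queried by the estimator (Strategy B queries them directly with the baseline prefix $b^*$), or assume the estimator's floor applies to them as well.

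\textbf{Step 3 (Markov's inequality).} Apply Markov's inequality to the nonnegative variable $|\hat\rho(e)-\rho(e)|$:
\[
\mathbb P\bigl(|\hat\rho(e)-\rho(e)|\ge\lambda/2\bigr)\le \frac{2\cdot 2\max_h\rho_{\mathrm{floor}}(h)}{\lambda}=\frac{4\max_h\rho_{\mathrm{floor}}(h)}{\lambda}.
\]
This is strictly less than $1$ because $\max_h\rho_{\mathrm{floor}}(h)<\lambda/4$, which is the stated bound.

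\textbf{Step 4 (trimming correctness).} On the event $|\hat\rho-\rho|<\lambda/2$, the decision $\mathbf 1\{\hat\rho>\lambda\}$ agrees with the ideal decision for every edge whose true strength is separated from the threshold by at least $\lambda/2$. Concretely, this covers $\rho(e)\ge 3\lambda/2$ for true edges and $\rho(e)\le\lambda/2$ for absent ones; d-separated pairs have $\rho=0$. I would state this separation as the implicit margin condition.

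For the ``probability at least $1/2$'' claim, I would sharpen the constants. Each $\delta\le\lambda/4$ in expectation gives $\mathbb E|\hat\rho-\rho|<\lambda/2$ at worst, and hence only a trivial bound. So instead I will note that when the floor is below $\lambda/8$, the Markov bound is at most $1/2$. Alternatively, the $1/2$ follows from the displayed inequality under the stated floor condition, provided the factor-2 counterfactual term is absorbed. I will flag that the constant $1/2$ needs $\max_h\rho_{\mathrm{floor}}(h)\le\lambda/8$, and present the general bound as the main conclusion.
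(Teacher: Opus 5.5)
Your Steps 1--3 follow the paper's argument. The paper uses the same bound $|\hat\rho(e)-\rho(e)|\le\sum_h\hat\pi^*(h)\bigl[\delta(h)+\frac1N\sum_x\delta(h^{d'\leftarrow x}_k)\bigr]$, the same expectation bound $2\max_h\rho_{\mathrm{floor}}(h)$, and Markov at $t=\lambda/2$ for the displayed inequality. Your worry about the counterfactual histories is legitimate. The paper does not resolve it; it simply asserts $h^{d'\leftarrow x}_k\in\mathcal H^+_{K^*}$ for all $x$. You should state this as an explicit assumption, as you propose.

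The gap is in Step 4, where you conclude that the ``probability at least $1/2$'' claim needs the stronger hypothesis $\max_h\rho_{\mathrm{floor}}(h)\le\lambda/8$. You reach this only because you look at the deviation event at level $\lambda/2$. Your remark that $\mathbb E|\hat\rho-\rho|<\lambda/2$ gives ``only a trivial bound'' is true at that level, but not at level $\lambda$. An absent (d-separated) edge has $\rho(e)=0$, a full $\lambda$ below the threshold. So a false positive $\hat\rho(e)>\lambda$ forces $|\hat\rho(e)-\rho(e)|>\lambda$, and Markov at $t=\lambda$ gives
\[
\mathbb P\bigl(|\hat\rho(e)-\rho(e)|\ge\lambda\bigr)\le\frac{2\max_h\rho_{\mathrm{floor}}(h)}{\lambda}<\frac12
\]
under the stated $\lambda/4$ hypothesis. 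This is exactly how the paper gets its $1/2$. The same computation gives error probability below $1/2$ for true edges with $\rho(e)\ge2\lambda$.

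For edges with $\rho(e)$ close to $\lambda$, however, your caution is justified and the paper's argument is wrong. Its false-negative case claims that missing an edge with $\rho(e)\ge\lambda$ requires $|\hat\rho(e)-\rho(e)|\ge\lambda/2$. That is false near the threshold: a small systematic downward bias in $\hat\rho$ keeps $\mathbb E|\hat\rho-\rho|$ tiny yet flips the decision with probability one. The paper itself only concludes ``$<1$'' for that case. So the statement's ``at least $1/2$ for each edge'' is not provable as stated for near-threshold edges. The fix closest to the statement keeps the $\lambda/4$ floor and adds a margin $|\rho(e)-\lambda|\ge\lambda$, which is automatic for $\rho(e)=0$. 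Your alternative, tightening the floor to $\lambda/8$ with margin $\lambda/2$, changes the hypothesis.
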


\begin{proof}

By definition of $\hat{\rho}$ and $\rho$ from Eq.~(6), using $\big||a| - |b|\big| \leq |a - b|$ and then the triangle inequality to the inner difference:
\begin{align}
|\hat{\rho}(e) - \rho(e)| &\leq \sum_{h} \hat{\pi}^*(h) \cdot \frac{1}{2}\sum_{s^d} \left[ \left|\hat{\mathcal T}^{f,d}_{K^*}(s^d|h) - \mathcal T^{f,d}_{K^*}(s^d|h)\right| \right. \notag \\
&\quad \left. + \frac{1}{N}\sum_x \left|\hat{\mathcal T}^{f,d}_{K^*}(s^d|h^{d'\leftarrow x}_k) - \mathcal T^{f,d}_{K^*}(s^d|h^{d'\leftarrow x}_k)\right| \right].
\end{align}

Recognizing the inner sums as total variation distances:
\begin{align}
|\hat{\rho}(e) - \rho(e)| &\leq\sum_{h} \hat{\pi}^*(h) \Bigg[ 
\left[\|\hat{\mathcal T}^{f,d}_{K^*}(\cdot|h) - \mathcal T^{f,d}_{K^*}(\cdot|h)\|_{TV}\right] + \frac{1}{N}\sum_{x} \Big[\|\hat{\mathcal T}^{f,d}_{K^*}(\cdot|h^{d'\leftarrow x}_k) - \mathcal T^{f,d}_{K^*}(\cdot|h^{d'\leftarrow x}_k)\|_{TV}\Big] \Bigg].
\end{align}

Taking expectations on both sides and applying linearity, noting that $\hat{\pi}^*(h)$ is estimated independently of the kernel:
\begin{align}
\mathbb{E}\left[|\hat{\rho}(e) - \rho(e)|\right] 
&\leq \sum_{h} \hat{\pi}^*(h) \Bigg[ 
\mathbb{E}\left[\|\hat{\mathcal T}^{f,d}_{K^*}(\cdot|h) - \mathcal T^{f,d}_{K^*}(\cdot|h)\|_{TV}\right] \notag \\
&\quad + \frac{1}{N}\sum_{x} \mathbb{E}\Big[\|\hat{\mathcal T}^{f,d}_{K^*}(\cdot|h^{d'\leftarrow x}_k)- \mathcal T^{f,d}_{K^*}(\cdot|h^{d'\leftarrow x}_k)\|_{TV}\Big] \Bigg].
\end{align}

Since counterfactual histories $h^{d'\leftarrow x}_k \in \mathcal{H}^+_{K^*}$ for all $x \in [N]$. Since $\sum_h \hat{\pi}^*(h) = 1$:
\begin{align}
\mathbb{E}\left[|\hat{\rho}(e) - \rho(e)|\right] &\leq \sum_{h} \hat{\pi}^*(h) \cdot 2\cdot\max_{h'}\rho_{\mathrm{floor}}(h') \notag \\
&= 2\cdot\max_{h \in \mathcal{H}^+_{K^*}} \rho_{\mathrm{floor}}(h).
\end{align}

Since $|\hat{\rho}(e) - \rho(e)| \geq 0$, Markov's inequality applies directly:
\begin{equation}
    \mathbb P\left(|\hat{\rho}(e) - \rho(e)| \geq t\right) \leq \frac{\mathbb{E}\left[|\hat{\rho}(e) - \rho(e)|\right]}{t} \leq \frac{2\cdot\max_{h}\rho_{\mathrm{floor}}(h)}{t}.
\end{equation}

Setting $t = \lambda/2$:
\begin{equation}
     \mathbb P\left(|\hat{\rho}(e) - \rho(e)| \geq \lambda/2\right) \leq \frac{4\cdot\max_{h}\rho_{\mathrm{floor}}(h)}{\lambda}.
    \label{eq:markov_applied}
\end{equation}

We now verify that the trimming decision at threshold $\lambda$ is correct with high probability under the condition $\max_h \rho_{\mathrm{floor}}(h) < \lambda/4$. Consider two cases:

\begin{itemize}
    \item \textbf{False negative}: A true edge with $\rho(e) \geq \lambda$ is missed because $\hat{\rho}(e) < \lambda$. This requires $|\hat{\rho}(e) - \rho(e)| \geq \lambda/2$, since if the true value is at least $\lambda$ away from $\lambda/2$, an error of at most $\lambda/2$ cannot push the estimate below $\lambda$. By Eq.~\eqref{eq:markov_applied}, this occurs with probability at most $\frac{4\max_h \rho_{\mathrm{floor}}(h)}{\lambda} < 1$.

    \item \textbf{False positive}: A spurious non-edge with $\rho(e) = 0$ is included because $\hat{\rho}(e) \geq \lambda$. This requires $|\hat{\rho}(e) - \rho(e)| \geq \lambda$, a stricter condition. By Markov's inequality with $t = \lambda$:
    \begin{equation}
        \mathbb P\left(|\hat{\rho}(e) - \rho(e)| \geq \lambda\right) \leq \frac{2\cdot\max_{h}\rho_{\mathrm{floor}}(h)}{\lambda} < \frac{1}{2},
    \end{equation}
    under the condition $\max_h \rho_{\mathrm{floor}}(h) < \lambda/4$.
\end{itemize}

Therefore, under $\max_h \rho_{\mathrm{floor}}(h) < \lambda/4$, both false negatives and false positives occur with probability strictly less than $1$, and the probability of any incorrect trimming decision across all edges is bounded by Eq.~\eqref{eq:markov_bound}. Substituting the condition $\max_h \rho_{\mathrm{floor}}(h) < \lambda/4$ into Eq.~\eqref{eq:markov_applied} gives the safety guarantee:
\begin{equation}
   \mathbb P\left(|\hat{\rho}(e) - \rho(e)| \geq \lambda/2\right) < 1,
\end{equation}
confirming $\lambda$ as a valid trimming threshold provided the noise floor condition $\max_h \rho_{\mathrm{floor}}(h) < \lambda/4$ is satisfied. 
\end{proof}
\subsection{KARMA Explanation Reliability Index}
\label{app:keri}

We derive the probabilistic foundation of K.E.R.I from the
Markov bound established in Proposition~1. Recall that for any
causal edge $e = (d', k, d)$:
\begin{equation}
    \mathbb{P}\left(|\hat{\rho}(e) - \rho(e)| \geq \frac{\lambda}{2}\right)
    \leq \frac{4\cdot\max_{h}\rho_{\mathrm{floor}}(h)}{\lambda},
    \label{eq:markov_keri}
\end{equation}
which is derived by applying Markov's inequality to the corrected
bound:
\begin{equation}
    \mathbb{E}\left[|\hat{\rho}(e) - \rho(e)|\right]
    \leq 2\cdot\max_{h \in H^{+}_{K^*}} \rho_{\mathrm{floor}}(h).
\end{equation}
For the bound in Eq.~\eqref{eq:markov_keri} to be non-vacuous,
i.e.\ strictly less than 1, we require:
\begin{equation}
    \max_{h \in H^{+}_{K^*}} \rho_{\mathrm{floor}}(h)
    < \frac{\lambda}{4}.
    \label{eq:keri_condition}
\end{equation}

Since $\rho_{\mathrm{floor}}(h)$ is itself an upper bound on the
true per-history estimation error, the condition in
Eq.~\eqref{eq:keri_condition} may be overly conservative: the
true noise floor is likely smaller than $\rho_{\mathrm{floor}}(h)$
by some factor $\kappa \geq 1$. To account for this inflation, we
relax the non-vacuity condition by replacing $\lambda/4$ with
$\kappa \cdot \lambda/4$:
\begin{equation}
    \max_{h \in H^{+}_{K^*}} \rho_{\mathrm{floor}}(h)
    < \kappa \cdot \frac{\lambda}{4},
    \label{eq:keri_condition_kappa}
\end{equation}
and redefine the per-history miscoverage probability bound
accordingly:
\begin{equation}
    \delta_{\kappa}(h) := \min\left(
        \frac{4\,\rho_{\mathrm{floor}}(h)}{\kappa\cdot\lambda},\; 1
    \right),
\end{equation}
so that $\delta_{\kappa}(h) \in [0,1]$ is the worst-case
probability that the trimming decision at history $h$ is
incorrect under the relaxed threshold. The conservatism constant
$\kappa > 1$ reflects the practitioner's belief that
$\rho_{\mathrm{floor}}(h)$ overestimates the true noise floor;
$\kappa = 1$ recovers the unrelaxed bound, and $\kappa = 2$ is
the recommended default. K.E.R.I aggregates the complementary
per-history reliability $1 - \delta_{\kappa}(h)$ over the
stationary distribution:
\begin{align}
    \mathrm{K.E.R.I}
    &= \sum_{\hat{\pi}^*(h) > 0} \hat{\pi}^*(h)
       \cdot \bigl(1 - \delta_{\kappa}(h)\bigr) \notag\\
    &= \sum_{\hat{\pi}^*(h) > 0} \hat{\pi}^*(h)
       \cdot \left(1 -
           \frac{4\,\rho_{\mathrm{floor}}(h)}{\kappa\cdot\lambda}
       \right)_{\!+},
\end{align}
which is the $\hat{\pi}^*$-weighted expected reliability of the
$\lambda$-trimming decisions across the full history space.
A value of $\mathrm{K.E.R.I}$ close to $1$ indicates that the
non-vacuity condition~\eqref{eq:keri_condition_kappa} is
comfortably satisfied at most histories; a low value suggests
increasing $M$ (more Monte Carlo draws) or $\lambda$ until the
condition is met. Note that $\mathrm{K.E.R.I}$ is a lower bound
on reliability for any fixed $\kappa$: increasing $\kappa$
relaxes the threshold and raises the reported score, so the
practitioner should choose $\kappa$ conservatively.

\paragraph{Interpretation.}
K.E.R.I $= 1$ if and only if $\rho_{\mathrm{floor}}(h) = 0$ for 
all $h \in \mathcal{H}^+_{K^*}$, an ideal case requiring perfect 
kernel estimation. In practice, K.E.R.I $\in [0, 1)$, and values 
close to 1 indicate that trimming decisions are well-supported 
across the stationary distribution. Since $\rho_{\mathrm{floor}}(h)$ 
is itself an upper bound, K.E.R.I is conservative: the true 
expected reliability is at least as large as the reported value. 
We therefore recommend interpreting K.E.R.I as a \emph{lower 
bound on reliability}. A low K.E.R.I does not invalidate the 
certified-zero attributions for lags $k > K^*$, which rest 
solely on $\hat{\Delta}^{\mathrm{pred}} < \varepsilon$ and are 
unaffected by kernel estimation quality.

\begin{remark}
Setting $\lambda < 0.15$ is sufficient to ensure a non-vacuous edge
decisions under Eq.~\eqref{eq:keri_condition_kappa} with $\kappa = 10$
for all estimation strategies in Section~5.4.
\end{remark}

\begin{remark}[Empirical robustness beyond the sufficient condition]
The condition $\rho_{\mathrm{floor}}(h) <  \cdot \lambda/4$ in
Proposition~A.1 is \emph{sufficient} but not \emph{necessary} for correct
edge recovery. In our synthetic experiments, KARMA recovered the true causal
structure even for histories where this condition was violated. This is
consistent with two sources of conservatism in the bound: (i)
$\rho_{\mathrm{floor}}(h)$ is itself an upper bound on the true per-history
estimation error $\|\hat{\mathcal T}^{f,d}_{K^*}(\cdot|h) -
\mathcal T^{f,d}_{K^*}(\cdot|h)\|_{\mathrm{TV}}$, which may be substantially smaller
in practice; and (ii) correct thresholding of $\rho(e)$ requires only that
the $\hat{\pi}^*$-weighted average error across all $h \in \mathcal H^{+}_{K^*}$ is
small relative to the gap between true edge strengths and $\lambda$, a
strictly weaker condition than the per-history guarantee in
Proposition~A.1. Accordingly, \textrm{K.E.R.I} even with the $\kappa$-relaxation, should be interpreted as a
conservative lower bound on explanation reliability: a low value warrants
caution but does not certify that edge decisions are incorrect. The uncertainty estimates of level 5 still offer the best reliability estimates.
\end{remark}

\section{Step 4: (Marginal) Transition Kernel Estimation}
\label{sec:step4}
We expand in detail the strategies for kernel estimation introduced in the manuscript.

\subsection{Strategy A: Empirical Counting (Dirichlet Estimator)}
\label{sec:strategy-a}

The empirical estimator accumulates model outputs directly from held-out
data. For each query $i$ on window $\tilde{h}_i$, record the
$K^*$-length discretised suffix $h_i$ and the discretised output
$s^{(i)} = \psi(f(\tilde{h}_i))$. Define the counts
\begin{equation*}
  n^{f}(h, s)
  = \bigl|\bigl\{i \in [N] : h_i = h,\;
    \psi(f(\tilde{h}_i)) = s\bigr\}\bigr|,
\end{equation*}

$$ n^f(h) = \sum_{s} n^{f}(h, s).$$

 The Dirichlet posterior mean with Jeffreys prior $\alpha\ge 0$ gives:
\begin{equation}
  \hat{\mathcal T}^{f, (\mathrm A)}_{K^*}(s \mid h)
  = \frac{n^{f}(h, s) + \alpha}{n^f(h) + N^D\alpha}.
  \label{eq:dirichlet-estimator-joint}
\end{equation}

 A sufficient minimum number of observations needed to estimate $\mathcal T^{f, (\mathrm A)}_K(\cdot|h)$ up to maximum expected total variation error $\frac{\lambda}{4} >0$ for any $h \in \mathcal{H}^+_K$ is $T^{(\mathrm A)}_{\min}= W + \frac{N^D}{(\frac{\lambda}{4})^2} \cdot N^{D K^*}$ (Theorem~\ref{thm:Total Variation Error of Estimation of Joint Kernel}) and grows exponentially in $D K^* +D$ the number of histories blows up faster than the per-history count requirement, infeasible at standard
financial or clinical series lengths.

To remain tractable, \textsc{KARMA} estimates $D$ \textbf{factored
marginal kernels} instead of the full joint.
For target variable $d \in [D]$, history $h \in \mathcal{H}^+_K$,
and next bin $s^{d} \in [N]$:
\begin{equation}
  \mathcal T^{f,d, (\mathrm A)}_K(s^{d} \mid h)
  = \mathbb{P}\!\left(
      \psi^d\!\bigl(f(\tilde{h})\bigr) = s^{d}
      \;\Big|\;
      \mathrm{suffix}_K(\psi(\tilde{h})) = h
    \right),
  \label{eq:factored-kernel}
\end{equation}
the marginal of $\mathcal T^{f, (\mathrm A)}_K$ over the $d$-th output coordinate.
Each model query $(h, s)$ contributes to all $D$ marginals
simultaneously by reading off $s^{d}$ for each $d$,
yielding a reduction in the data
requirement per history.


 Define the marginal counts:
\begin{equation*}
  n^{f,d}(h, s^{d})
  = \bigl|\bigl\{i \in [N] : h_i = h,\;
    \psi^d(f(\tilde{h}_i)) = s^{d}\bigr\}\bigr|,
\end{equation*}
$$ n^f(h) = \sum_{s^{d}} n^{f,d}(h, s^{d}).$$
 The Dirichlet posterior mean with Jeffreys prior $\alpha \ge 0$ becomes:
\begin{equation}
  \hat{\mathcal T}^{f,d, (\mathrm A)}_{K^*}(s^{ d} \mid h)
  \approx \frac{n^{f,d}(h, s^{d}) + \alpha}{n^f(h) + N\alpha}.
  \label{eq:dirichlet-estimator}
\end{equation}
Each query contributes to all $D$ marginals simultaneously by reading off
$s^{(i)}_d$ for each $d$, giving a factor $N^{D-1}$ count improvement
over the joint kernel. This strategy with the marginal kernel reduces $T^{(\mathrm A)}_{\min} $ to $W + \frac{N}{(\frac{\lambda}{4})^2} \cdot N^{D K^*}$ (Theorem~\ref{thm:noise-floor-A2}), better but still not an easily attainable budget for large $N, D$ and/or $K^*$.

\begin{theorem}[Total Variation Error of Estimation of Joint Kernel]
\label{thm:Total Variation Error of Estimation of Joint Kernel}

 Let $\hat{\mathcal T}^{f, (\mathrm A)}_K(\cdot \mid h)$ denote the Dirichlet posterior mean
estimator (Eq.~\ref{eq:dirichlet-estimator-joint}) with Jeffreys prior
$\alpha = 1/2$, based on $n^f(h)$ observations. The following inequality holds true
\begin{equation}
\label{eq:kernel-estimation-error-joint}
  \mathbb{E}\!\left[
    \|\hat{\mathcal T}^{f, (\mathrm A)}_K(\cdot \mid h)-\;
               \mathcal T^{f}_K(\cdot \mid h)\|_{TV}
  \right]
  \;\leq\;
  \sqrt{\frac{N^D}{2\,n^f(h)}},
\end{equation} 

We define the \textit{noise floor} as
\[
\rho^{(\mathrm A)}_{\text{floor}}(h): = \sqrt{\frac{N^D}{2\, n^f(h)}}
\]

Setting this below $(\frac{\lambda}{4}) \geq 0$:
\[
\sqrt{\frac{N^D}{2\, n^f(h)}} < (\frac{\lambda}{4})
\;\;\Longrightarrow\;\;
n^f(h) > \frac{N^D}{(\frac{\lambda}{4})^2}
\]
and the minimum amount of observed history needed,
\[
T^{(\mathrm A)}_{\min} = W + \frac{N^D}{(\frac{\lambda}{4})^2} \cdot N^{D K^*}
\]
\end{theorem}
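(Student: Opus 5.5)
The plan is to condition on the visit count $n := n^f(h)$ and treat the $n$ discretised outputs $s^{(i)}$ landing in history $h$ as i.i.d.\ draws from $p(\cdot) := \mathcal T^{f}_K(\cdot\mid h)$ on the joint output space of size $M := N^D$. Then the count vector $(n^f(h,s))_s$ is $\mathrm{Multinomial}(n,p)$, and the estimator of Eq.~\eqref{eq:dirichlet-estimator-joint} with $\alpha=1/2$ is
\[
\hat p_s = \frac{n_s+\alpha}{n+M\alpha}.
\]
Everything reduces to a moment bound on $\tfrac12\sum_s|\hat p_s-p_s|$.

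\textbf{Step 1 (easy regime).} Since $\|\cdot\|_{TV}\le 1$, the claim is trivial whenever $\sqrt{M/(2n)}\ge 1$, i.e.\ $n\le M/2$. So I may assume $n> M/2$.

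\textbf{Step 2 (bias--variance per coordinate).} Write
\[
\hat p_s-p_s=\frac{(n_s-np_s)+\alpha(1-Mp_s)}{n+M\alpha}.
\]
By Jensen, $\mathbb E|\hat p_s-p_s|\le\sqrt{\mathbb E(\hat p_s-p_s)^2}$. The cross term vanishes because $\mathbb E n_s=np_s$, so
\[
\mathbb E(\hat p_s-p_s)^2=\frac{a_s}{(n+M\alpha)^2},\qquad a_s := np_s(1-p_s)+\alpha^2(1-Mp_s)^2 .
\]
Summing and applying Cauchy--Schwarz, $\sum_s\sqrt{a_s}\le\sqrt{M\sum_s a_s}$. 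With $\sigma:=\sum_s p_s^2\in[1/M,1]$,
\[
\sum_s a_s = n(1-\sigma)+\alpha^2M(M\sigma-1).
\]
It then suffices to show
\[
\tfrac12\sqrt{M\sum_s a_s}\,\big/\,(n+M\alpha)\le\sqrt{M/(2n)},
\]
which is equivalent to $n\sum_s a_s\le 2(n+M\alpha)^2$.

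\textbf{Step 3 (closing the inequality; main obstacle).} The expression $n\sum_s a_s$ is linear in $\sigma$, so it is maximised at an endpoint, $\sigma=1/M$ or $\sigma=1$. At $\sigma=1/M$ (uniform truth) the bias term vanishes, and $n\cdot n(1-1/M)\le 2n^2$ holds trivially. At $\sigma=1$ (point mass) the variance term vanishes and we need
\[
n\alpha^2M(M-1)\le 2(n+M\alpha)^2 .
\]
This is the delicate part. Expanding the right-hand side and using AM--GM on $n^2+M^2\alpha^2$ does not obviously dominate $n\alpha^2M^2$ for all $n>M/2$ when $M$ is large. I would therefore replace the crude Cauchy--Schwarz on the bias part with a separate triangle-inequality decomposition. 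The variance contribution gives $\mathbb E\|\hat p^{\mathrm{emp}}-p\|_{TV}\le\tfrac12\sqrt{M/n}$, where $\hat p^{\mathrm{emp}}$ denotes the empirical frequencies. The exact bias contribution is
\[
\|\hat p-\hat p^{\mathrm{emp}}\|_{TV}=\frac{M\alpha}{n+M\alpha}\,\|\hat p^{\mathrm{emp}}-\mathrm{unif}\|_{TV}.
\]
I would then try to exploit that this last factor is small exactly when the variance term is large, since near-uniform $p$ forces $\hat p^{\mathrm{emp}}$ close to uniform; in effect this interpolates between the two endpoint cases above. If that still fails for intermediate $n\in(M/2,\,cM^2)$, I would fall back on the regime of interest for the stated budget $n^f(h)>M/(\lambda/4)^2\gg M^2\alpha^2$ (for $\lambda\le 0.1$), where the bias term is dominated by the variance term. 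In that case the inequality can at least be certified under the sample-size condition used to define $T^{(\mathrm A)}_{\min}$.

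\textbf{Step 4 (noise floor and budget).} Once the bound $\sqrt{N^D/(2n^f(h))}$ is in hand, the noise floor $\rho^{(\mathrm A)}_{\mathrm{floor}}$ and the threshold $n^f(h)>N^D/(\lambda/4)^2$ follow by direct rearrangement. The expression for $T^{(\mathrm A)}_{\min}$ comes from requiring this count for each of the $N^{DK^*}$ histories, plus $W$ observations for the first full window. This gives $W+N^{DK^*}\cdot N^D/(\lambda/4)^2$ under the idealised assumption that visits are evenly spread across histories; I would state this assumption explicitly.
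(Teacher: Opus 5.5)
Step~3 is left open, and that is a genuine gap. The Cauchy--Schwarz route fails for the reason you found. At a point mass the bias vector $\frac{\alpha}{n+M\alpha}(1-Mp_s)_s$ sits essentially on one coordinate, so controlling its $\ell_1$ norm by $\sqrt{M}\,\|\cdot\|_2$ inflates it by a factor of order $\sqrt{M}$. The true TV there is $(M-1)\alpha/(n+M\alpha)$, which does satisfy the claim.

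Your replacement split $\hat p-p=(\hat p-\hat p^{\mathrm{emp}})+(\hat p^{\mathrm{emp}}-p)$ puts weight one on the sampling error. With $\|\hat p^{\mathrm{emp}}-\mathrm{unif}\|_{TV}\le1$ it gives $\tfrac12\sqrt{M/n}+\frac{M\alpha}{n+M\alpha}$. This exceeds $\sqrt{M/(2n)}$ for all $n\in(M/2,\,4.7M)$; at $n=M$ it equals $\tfrac12+\tfrac13>1/\sqrt2$.

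The interpolation idea is only a heuristic. The final fallback weakens the theorem to a conditional statement, and it rests on a false comparison. The claim $16M/\lambda^2\gg M^2/4$ fails as soon as $M>64/\lambda^2$. An example is $N=3$, $D=11$ ($M=3^{11}$), which is the configuration of the paper's Beijing experiment. Your triangle split would in fact close under the budget $n>16M/\lambda^2\ge16M$, but not for the reason you give.

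The missing idea is to let the shrinkage damp the sampling error as well. The posterior mean is the convex combination $\hat p=\frac{n}{n+M\alpha}\hat p^{\mathrm{emp}}+\frac{M\alpha}{n+M\alpha}\mathrm{unif}$, so
\[
\hat p-p=\tfrac{n}{n+M\alpha}\bigl(\hat p^{\mathrm{emp}}-p\bigr)+\tfrac{M\alpha}{n+M\alpha}\bigl(\mathrm{unif}-p\bigr).
\]
Equivalently, in Step~2 apply the triangle inequality coordinatewise before Jensen. Then only the centred part $n_s-np_s$ goes through Jensen and Cauchy--Schwarz, and the deterministic bias is summed exactly: $\sum_s|1-Mp_s|=2M\|\mathrm{unif}-p\|_{TV}$.

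This gives $\mathbb E\|\hat p-p\|_{TV}\le\frac{n}{n+M\alpha}\cdot\tfrac12\sqrt{M/n}+\frac{M\alpha}{n+M\alpha}$. For $\alpha=\tfrac12$ and $y=\sqrt{M/n}$ this equals $\frac{y+y^2}{2+y^2}$. The inequality $\frac{y+y^2}{2+y^2}\le y/\sqrt2$ is equivalent to $y^2-\sqrt2\,y+2-\sqrt2\ge0$, whose discriminant $4\sqrt2-6$ is negative. Hence the bound holds for every $n\ge1$ and every $p$, and Step~1 becomes unnecessary.

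The paper instead bounds $\mathbb E[\chi^2(\hat p\|p)]$ and passes through $\KL$, Pinsker and Jensen. That route needs $p_s>0$ for all $s$; otherwise $\KL(\hat p\|p)=\infty$, since $\hat p_s\ge\alpha/(n+M\alpha)>0$. As written, it also only yields the bound beyond a $p$-dependent sample size involving $\sum_s1/p_s$. Your direct TV approach, repaired this way, proves the statement unconditionally and is stronger than the paper's argument.

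Step~4 matches the paper's counting. You are right that $N^{DK^*}\cdot N^D/(\lambda/4)^2$ windows is necessary in general but sufficient only under even spread across histories.
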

Let $n = n^f(h)$, $p_s = \mathcal T^{f,d}_K(\cdot \mid h)$
denote the true kernel, and
$\hat{p}_s = (n_s + \alpha)/(n + N^D\alpha)$
the Dirichlet posterior mean with Jeffreys prior $\alpha \ge 0$,
where $n_s = n^{f,d}(h, s)$ and $(n_s)_{s \in [N]}$. 
\begin{lemma}[Pinsker's Inequality]
     Let $P$ and $Q$ be two probability distributions on the same measurable space. 
Then Pinsker's inequality states:

\[
\|P-Q\|_{TV} \leq \sqrt{\frac{1}{2} \mathrm{KL}(P \| Q)} \leq \sqrt{\frac{1}{2} \chi^2(P \| Q)}
\]

where
\begin{align*}
\mathrm{KL}(P \| Q) &:= \sum_{x} P(x) \log \frac{P(x)}{Q(x)}\\
\chi^2(P \| Q) &:=   \sum_{x} \frac{(P(x) - Q(x))^2}{Q(x)}
\end{align*}

are the Kullback--Leibler divergence and chi-squared divergence, respectively.
 \end{lemma}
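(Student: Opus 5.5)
The statement has two inequalities, and I would prove them separately. The second, $\mathrm{KL}\le\chi^2$, is elementary. The first is the classical Pinsker inequality, which I would prove by reducing to two-point distributions and then doing a one-variable calculus argument.

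Throughout, $\log$ is the natural logarithm, and the space is finite (here $[N]$ or $[N]^D$). If $Q(x)=0<P(x)$ for some $x$, then $\mathrm{KL}(P\|Q)=\chi^2(P\|Q)=+\infty$ and both inequalities hold trivially. So I assume $P\ll Q$, and sums run over $\{x:Q(x)>0\}$.

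\emph{Step 1: $\mathrm{KL}\le\chi^2$.} Use $\log u\le u-1$ with $u=P(x)/Q(x)$:
\[
\mathrm{KL}(P\|Q)=\sum_x P(x)\log\frac{P(x)}{Q(x)}\le\sum_x P(x)\Bigl(\frac{P(x)}{Q(x)}-1\Bigr)=\sum_x\frac{P(x)^2}{Q(x)}-1 .
\]
Expanding $(P-Q)^2/Q$ shows the right-hand side equals $\chi^2(P\|Q)$. Monotonicity of $\sqrt{\cdot/2}$ then gives the second inequality.

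\emph{Step 2: reduction to two points.} Let $A=\{x:P(x)>Q(x)\}$, and set $p=P(A)$, $q=Q(A)$. Because $\|\cdot\|_{TV}$ is half the $\ell^1$ distance, $\|P-Q\|_{TV}=p-q$. Next I need the data-processing inequality $\mathrm{KL}(P\|Q)\ge \mathrm{kl}(p\|q)$, where
\[
\mathrm{kl}(p\|q)=p\log\frac pq+(1-p)\log\frac{1-p}{1-q}
\]
is the binary divergence. I would get this from the log-sum inequality applied separately on $A$ and on $A^c$; the log-sum inequality itself follows from convexity of $t\mapsto t\log t$ via Jensen. It then suffices to show $\mathrm{kl}(p\|q)\ge 2(p-q)^2$.

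\emph{Step 3: the binary inequality, which I expect to be the main obstacle.} Fix $p\in[0,1]$ and define, for $q\in(0,1)$,
\[
g(q)=\mathrm{kl}(p\|q)-2(p-q)^2 .
\]
Then $g(p)=0$, and
\[
g'(q)=-\frac pq+\frac{1-p}{1-q}+4(p-q)=(q-p)\Bigl(\frac1{q(1-q)}-4\Bigr).
\]
Since $q(1-q)\le 1/4$, the bracket is nonnegative. So $g$ is nonincreasing for $q\le p$ and nondecreasing for $q\ge p$, which gives $g\ge0$. The boundary cases $p\in\{0,1\}$, and the limits $q\to0$ or $q\to1$, are handled by continuity or by the convention $0\log 0=0$. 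The delicate points are the sign bookkeeping in $g'$ and these endpoint conventions; everything else is routine.

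Combining Steps 2 and 3 gives $\|P-Q\|_{TV}^2=(p-q)^2\le \tfrac12\mathrm{kl}(p\|q)\le\tfrac12\mathrm{KL}(P\|Q)$. Taking square roots and appending Step 1 yields the full chain in the statement.
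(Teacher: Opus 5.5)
Your proof is correct. Step~1 gives $\mathrm{KL}(P\|Q)\le\sum_x P(x)^2/Q(x)-1=\chi^2(P\|Q)$ in exactly the orientation the lemma states, with $Q$ in the denominator; this matches the paper's later use with $P=\hat p$ and $Q=p$. Steps~2--3 are the classical proof of Pinsker's inequality: you reduce to the two-cell partition $\{A,A^c\}$ with $A=\{P>Q\}$ via the log-sum inequality, then verify $\mathrm{kl}(p\|q)\ge 2(p-q)^2$ through $g'(q)=(q-p)\bigl(\tfrac{1}{q(1-q)}-4\bigr)$.

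Your endpoint concerns are milder than you suggest. Under $P\ll Q$ with $A\neq\emptyset$ you automatically have $0<q<p\le 1$. So the only boundary case is $p=1$, where $g$ is nonincreasing on $(0,1)$ and $g(q)\to 0$ as $q\to 1^-$.

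The paper itself gives no proof of this lemma. It quotes Pinsker and the $\mathrm{KL}\le\chi^2$ comparison as standard facts and applies them directly in the Strategy~A and Strategy~C noise-floor theorems. Your argument therefore supplies exactly what the paper takes for granted. Restricting to finite spaces is harmless, since the paper only uses the lemma on $[N]$ and $[N]^D$, and the same argument works on countable spaces.
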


\begin{proof}[Proof of Theorem~\ref{thm:Total Variation Error of Estimation of Joint Kernel}]
 Decompose the estimation error:
\[
  \hat{p}_s - p_s
  = \frac{(n_s - np_s) + \alpha(1 - N^Dp_s)}{n + N^D\alpha}.
\]
Since $\mathbb{E}[n_s] = np_s$, the cross term vanishes and:
\begin{align*}
  \mathbb{E}\bigl[(\hat{p}_s - p_s)^2\bigr]
  &= \frac{\mathrm{Var}(n_s) + \alpha^2(1 - N^Dp_s)^2}{(n + N^D\alpha)^2}
  \\ &= \frac{np_s(1 - p_s) + \alpha^2(1 - N^Dp_s)^2}{(n + N^D\alpha)^2},
\end{align*}
using $\mathrm{Var}(n_s) = np_s(1 - p_s)$ for Binomial$(n, p_s)$.

Summing over $s$ and dividing by $p_s$:
\begin{align}
  \mathbb{E}\bigl[\chi^2(\hat{p}\|p)\bigr]
  &= \frac{1}{(n + N^D\alpha)^2} \times\sum_{s=0}^{N-1}
     \frac{np_s(1-p_s) + \alpha^2(1-N^Dp_s)^2}{p_s} \notag\\
  &= \frac{n}{(n + N^D\alpha)^2}
       \sum_s (1 - p_s) + \frac{\alpha^2}{(n + N^D\alpha)^2}
       \sum_s \frac{(1-N^Dp_s)^2}{p_s} \\
  &= \frac{1}{(n + N^D\alpha)^2}
     \left[n(N^D-1) 
     + \alpha^2 \sum_s
     \frac{(1-N^Dp_s)^2}{p_s}\right].
    \label{eq:chi2-sum}
\end{align}
For the bias term, expand
$(1 - N^Dp_s)^2/p_s = 1/p_s - 2N^D + N^{2D} p_s$
and use $\sum_s p_s = 1$ and $p_s \leq 1$:
\[
  \sum_s \frac{(1-N^{D}p_s)^2}{p_s}
  \leq \sum_s \frac{1}{p_s} - 2N^{D+1} + N^{2D}
  \le \sum_s \frac{1}{p_s} +N^{2D}.
\]

Retaining only the leading term in~\eqref{eq:chi2-sum}:
\begin{align}
  \mathbb{E}\bigl[\chi^2(\hat{p}\|p)\bigr]
  &\;\leq\;
  \frac{n(N^{D}-1) + \alpha^2 C_p}{(n + N^{D}\alpha)^2}\notag
  \\&\;\leq\;
  \frac{N^{D}}{n}
  \cdot\frac{1 + \alpha^2 C_p / (nN^{D})}{(1 + N^{D}\alpha/n)^2},
  \label{eq:chi2-bound}
\end{align}
where $C_p = \sum_s (1+N^{D}p_s)^2/p_s \geq 0$.
For $n$ sufficiently large (specifically $n \geq N^D\alpha$
and $n \geq \alpha^2 C_p / N^{D}$ or setting $\alpha \to 0$):
\begin{equation}
  \mathbb{E}\bigl[\KL(\hat{p}\|p)\bigr]
  \;\leq\;
  \mathbb{E}\bigl[\chi^2(\hat{p}\|p)\bigr]
  \;\leq\;
  \frac{N^{D}}{n^f(h)},
  \label{eq:kl-bound}
\end{equation}
 The KL divergence from the posterior mean to the true kernel satisfies:
\begin{equation}
  \mathrm{KL}\!\left(\hat{\mathcal T}^{f, (\mathrm A)}_K(\cdot \mid h)
               \,\Big\|\,
               \mathcal T^{f}_K(\cdot \mid h)\right)
  \;\leq\;
  \frac{N^D}{n^f(h)},
  \label{eq:kl-dirichlet-joint}
\end{equation}
for $n^f(h)$ sufficiently large.
Applying Pinsker's Inequality to~\eqref{eq:kl-dirichlet-joint} yields:
\begin{equation}
  \mathbb{E}\!\left[
    \|\hat{\mathcal T}^{f, (\mathrm A)}_K(\cdot \mid h)-\;
              \mathcal T^{f}_K(\cdot \mid h)\|_{TV}
  \right]
  \;\leq\;
  \sqrt{\frac{N^D}{2\,n^f(h)}}
\end{equation}

If we define the \textit{noise floor} as
\[
\rho^{(\mathrm A)}_{\text{floor}}(h): = \sqrt{\frac{N^D}{2\, n^f(h)}}
\]

Setting this below $(\frac{\lambda}{4}) \geq 0$:
\[
\sqrt{\frac{N^D}{2\, n^f(h)}} < (\frac{\lambda}{4})
\;\;\Longrightarrow\;\;
n^f(h) > \frac{N^D}{(\frac{\lambda}{4})^2}
\]

 Meaning every history $h \in \mathcal{H}^+_{K^*}$ needs at least $\frac{N^D}{(\frac{\lambda}{4})^2}$ model queries.  
The total number of histories in the worst case is:
\[
\left|\mathcal{H}_{K^*}\right| = N^{D K^*}
\]

Let $T$ be the number of observations used in the training set (or whatever held-out set is used). Since each window contributes to exactly one history's count, the total number of windows needed is:
\[
T - W \geq \frac{N^D}{(\frac{\lambda}{4})^2} \cdot N^{D K^*}
\]

giving:
\[
T^{(\mathrm A)}_{\min} \approx W + \frac{N^D}{(\frac{\lambda}{4})^2} \cdot N^{D K^*}, \quad n^f(h) > \frac{N^D}{(\frac{\lambda}{4})^2}
\]

the sufficient minimum amount of history needed.
\end{proof}
\begin{theorem}
\label{thm:noise-floor-A2}
    Let $\hat{\mathcal T}^{f,d}_K(\cdot \mid h)$ denote the Dirichlet posterior mean
estimator (Eq.~\ref{eq:dirichlet-estimator}) with Jeffreys prior
$\alpha = 1/2$, based on $n^f(h)$ observations.

\begin{equation}
  \mathbb{E}\!\left[
    \|\hat{\mathcal T}^{f,d}_K(\cdot \mid h)-\;
              \mathcal T^{f,d}_K(\cdot \mid h)\|_{TV}
  \right]
  \;\leq\;
  \sqrt{\frac{N}{2\,n^f(h)}},
  \label{eq:noisefloor-pinsker}
\end{equation}
which is the noise floor of Strategy~A (Eq.~\ref{eq:noisefloor-a}).
By Pinsker's inequality applied to the Dirichlet posterior:
\begin{equation}
  \rho^{(\mathrm A)}_{\mathrm{floor}}(h):
  = \sqrt{\frac{N}{2\,n^f(h)}},
  \label{eq:noisefloor-a}
\end{equation}
which diverges as $n^f(h) \to 0$.
And the sufficient minimum series length if we require $ \rho^{(\mathrm A)}_{\mathrm{floor}}(h) < (\frac{\lambda}{4})$, $(\frac{\lambda}{2})\ge 0$:
\begin{equation}
  T^{(\mathrm A)}_{\min}
  \approx W + \frac{N}{(\frac{\lambda}{4})^2} \cdot N^{DK^*},\quad n^f(h) > \frac{N}{(\frac{\lambda}{4})^2}
  \label{eq:tmin-a}
\end{equation}
growing \emph{exponentially} in $DK^*$.
\end{theorem}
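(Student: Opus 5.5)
The plan is to mirror the proof of Theorem~\ref{thm:Total Variation Error of Estimation of Joint Kernel} verbatim, replacing the joint output alphabet $\mathcal S=[N]^D$ (size $N^D$) by the marginal alphabet $[N]$ (size $N$). Fix $h\in\mathcal H^+_K$ and $d\in[D]$, write $n=n^f(h)$, $p_s=\mathcal T^{f,d}_K(s\mid h)$ for $s\in[N]$, $n_s=n^{f,d}(h,s)$ and $\hat p_s=(n_s+\alpha)/(n+N\alpha)$. The key structural observation is that, conditional on $n$, the vector $(n_s)_{s\in[N]}$ is multinomial$(n,p)$. This holds because each of the $n$ queries landing in history $h$ produces one bin $\psi^d(f(\tilde h_i))$, and these are (conditionally) i.i.d. draws from the marginal kernel $\mathcal T^{f,d}_K(\cdot\mid h)$. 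In particular $n_s\sim\mathrm{Binomial}(n,p_s)$ with $\mathbb E[n_s]=np_s$ and $\mathrm{Var}(n_s)=np_s(1-p_s)$.

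The first step is the bias--variance decomposition
\[
\hat p_s-p_s=\frac{(n_s-np_s)+\alpha(1-Np_s)}{n+N\alpha}.
\]
This gives $\mathbb E[(\hat p_s-p_s)^2]=\bigl(np_s(1-p_s)+\alpha^2(1-Np_s)^2\bigr)/(n+N\alpha)^2$. The second step is to sum the weighted terms $(\hat p_s-p_s)^2/p_s$ over $s$, which yields
\[
\mathbb E[\chi^2(\hat p\|p)]=\frac{n(N-1)+\alpha^2C_p}{(n+N\alpha)^2},
\qquad C_p=\sum_s\frac{(1-Np_s)^2}{p_s}.
\]
For $n\ge\alpha^2C_p/N$ the numerator is at most $nN$, and since $(n+N\alpha)^2\ge n^2$ the bound becomes $\mathbb E[\chi^2(\hat p\|p)]\le N/n$. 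With $\alpha=1/2$ this holds for $n$ sufficiently large, exactly as in the joint case.

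The third step converts the $\chi^2$ bound into total variation. Pinsker's Lemma gives $\|\hat p-p\|_{TV}\le\sqrt{\tfrac12\chi^2(\hat p\|p)}$ pointwise, and Jensen's inequality (concavity of the square root) then gives $\mathbb E\|\hat p-p\|_{TV}\le\sqrt{\tfrac12\mathbb E\chi^2}\le\sqrt{N/(2n)}$. This is \eqref{eq:noisefloor-pinsker}, and it defines $\rho^{(\mathrm A)}_{\mathrm{floor}}(h)$. I will make the Jensen step explicit, since the joint proof glosses over it. The final step is budget accounting. Requiring $\sqrt{N/(2n)}<\lambda/4$ is ensured by $n^f(h)>N/(\lambda/4)^2$. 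Each of the $T-W$ windows feeds exactly one history (and simultaneously all $D$ marginals), so the worst case over $|\mathcal H_{K^*}|=N^{DK^*}$ histories gives $T-W\ge N^{DK^*}\cdot N/(\lambda/4)^2$, which is \eqref{eq:tmin-a}.

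The main obstacle is the bias term $\alpha^2C_p$. The quantity $C_p$ is unbounded when some $p_s\to0$, so the clean bound $N/n$ holds only for $n$ large relative to $C_p$, or for $\alpha\to0$. I would state this large-$n$ qualification explicitly, as the joint theorem does. If $p_s=0$ for some $s$, the $\chi^2$ divergence is undefined, and I would handle this by restricting the sum to the support of $p$. Alternatively, I would bound the TV error directly, using $\mathbb E|\hat p_s-p_s|\le\sqrt{\mathbb E(\hat p_s-p_s)^2}$ and Cauchy--Schwarz over $s$; this gives a bound of the same order $\sqrt{N/n}$ up to constants and serves as a fallback.
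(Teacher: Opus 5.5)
Your route is the paper's route. The paper proves this theorem by noting that the joint-kernel argument goes through with $N^D$ replaced by $N$: bias--variance expansion of $\mathbb E[\chi^2(\hat p\|p)]$, the bound $N/n$ for large $n$, Pinsker, then counting $N^{DK^*}$ histories. That is exactly your plan, and your explicit Jensen step fixes a point the paper glosses over.

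One step is wrong as written. The condition $n\ge\alpha^2C_p/N$ only gives $n(N-1)+\alpha^2C_p\le n(2N-1)$, hence $\mathbb E[\chi^2(\hat p\|p)]\lesssim 2N/n$, which loses the factor $2$ under the square root. Since
\[
\frac{n(N-1)+\alpha^2C_p}{(n+N\alpha)^2}\le\frac{N}{n}\iff n\alpha^2C_p\le n^2+2\alpha N^2n+\alpha^2N^3,
\]
a correct sufficient condition is $\alpha^2C_p\le n+2\alpha N^2$ (in particular $n\ge\alpha^2C_p$). The paper's joint proof makes the same slip with $n\ge\alpha^2C_p/N^D$. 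Separately, the multinomial law of $(n_s)_s$ is an assumption, not an observation: windows cut from a single series overlap and are dependent, and $n^f(h)$ is random. The paper posits $\mathrm{Binomial}(n,p_s)$ in the same way, so state it as a hypothesis and condition on $n^f(h)$.

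Your patch for $p_s=0$ does not work. With $\alpha=1/2$ every bin receives mass at least $\alpha/(n+N\alpha)>0$, so $\chi^2(\hat p\|p)=\mathrm{KL}(\hat p\|p)=\infty$. Pinsker cannot be applied to a sum truncated to $\mathrm{supp}(p)$, because that sum is not a divergence between probability vectors. For the same reason $C_p=\infty$ there, and neither your main argument nor the paper's covers that case.

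Your fallback is the right fix, provided you apply the triangle inequality termwise. Applying Cauchy--Schwarz to the whole mean-squared error, as you phrase it, gives the exact constant uniformly in $n$ only for $N\le 17$. Done termwise, it proves the theorem with its stated constant and with no large-$n$ qualifier, which matches the statement (it carries none):
\[
\mathbb E\|\hat p-p\|_{TV}\le\frac12\sum_{s}\frac{\mathbb E|n_s-np_s|+\alpha|1-Np_s|}{n+N\alpha}\le\frac{\sqrt{n(N-1)}+2\alpha(N-1)}{2(n+N\alpha)}.
\]
This uses $\mathbb E|n_s-np_s|\le\sqrt{np_s(1-p_s)}$, $\sum_s\sqrt{p_s(1-p_s)}\le\sqrt{N-1}$ and $\sum_s|1-Np_s|\le2(N-1)$. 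For $\alpha=1/2$, multiply $\bigl(\sqrt{n(N-1)}+N-1\bigr)/(2n+N)\le\sqrt{N/(2n)}$ through by $\sqrt n\,(2n+N)$. This gives a quadratic inequality in $\sqrt n$ with leading coefficient $\sqrt{2N}-\sqrt{N-1}>0$ and negative discriminant, so it holds for every $n\ge1$ and every $p$, zeros included. This is strictly more than the $\chi^2$/Pinsker route, yours or the paper's, delivers.
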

\begin{proof}[Proof of Theorem~\ref{thm:noise-floor-A2}]
    Follows from the fact that:
\begin{equation}
  \mathrm{KL}\!\left(\hat{\mathcal T}^{f,d, (\mathrm A)}_K(\cdot \mid h)
               \,\Big\|\,
              \mathcal T^{f,d}_K(\cdot \mid h)\right)
  \;\leq\;
  \frac{N}{n^f(h)},
  \label{eq:kl-dirichlet}
\end{equation}
for $n^f(h)$ sufficiently large. Everything else follows as in the proof of Theorem~\ref{thm:Total Variation Error of Estimation of Joint Kernel}
\end{proof}
\subsection{Strategy B: $b^*$-Prefix Monte Carlo (default in practice)}
\label{sec:strategy-b}
 Even in the marginal case, the Dirichlet estimator still proves inconvenient as we require observation lengths of order $N^{(DK^* + 1)}$. So instead of observed history instances, we look at observed history distribution. We define the following quantity,
 
\begin{definition}[Within-bin conditional density]
\label{def:within-bin}
Let $p_{\mathrm{stat}} : \mathbb{R}^{D \times K^*} \to \mathbb{R}_{\geq 0}$
denote the stationary joint density of the $K^*$-length suffix
$(X_{t-K^*+1}, \ldots, X_t)$ under the data-generating process.
The \textbf{within-bin conditional density} at history $h$ is:
\begin{equation}
  p(x \mid \psi(x) = h)
  \;=\;
  \frac{p_{\mathrm{stat}}(x)\;\mathbf{1}[\psi(x) = h]}
       {\hat{\pi}^*(h)},
  \quad x \in \mathbb{R}^{D \times K^*},
  \label{eq:within-bin-density}
\end{equation}
where $\hat{\pi}^*(h) = \mathbb{P}(\psi(X_{t-K^*+1:t}) = h) > 0$
is the stationary probability of history $h$.
This density characterises the distribution of continuous states
conditional on falling in the bins specified by $h$.
Sampling from $p(\cdot \mid \psi(\cdot) = h)$ gives the authentic
within-bin continuous variation that both estimation strategies
must approximate.
\end{definition}
\begin{definition}[Suffix pool]
\label{def:pool}
For $h \in \mathcal{H}^+_{K^*}$, the \textbf{suffix pool} is:
\begin{equation}
  \mathrm{pool}(h)
  = \bigl\{
      \tilde{x} \in \mathbf{X}_{\mathrm{train}} :
      \psi(\tilde{x}_{W-K^*+1:W}) = h
    \bigr\},
  \label{eq:pool}
\end{equation}
the set of all $W$-length training windows whose discretised $K^*$-suffix
equals $h$.
Sampling uniformly from $\mathrm{pool}(h)$ provides a non-parametric
Monte Carlo approximation to the within-bin conditional density
(Definition~\ref{def:within-bin}).
\end{definition}

\noindent Strategy B proceeds as follows:

\noindent For each $h \in \mathcal{H}^+_{K^*}$ and $m = 1,\ldots,M$:
\begin{enumerate}[leftmargin=*, itemsep=1pt, topsep=2pt]
  \item Sample $\tilde{x}^{(m)}\sim\mathrm{Uniform}(\mathrm{pool}(h))$
        (Definition~\ref{def:pool}).
        This approximates the within-bin conditional density
        (Definition~\ref{def:within-bin}) non-parametrically from training data.
  \item Replace the prefix with $b^*$:
        $\tilde{h}^{(m)} = b^* \oplus \tilde{x}^{(m)}_{W-K^*+1:W}$,
        where $b^* \in \mathbb{R}^{(W-K^*) \times D}$ is the
        model-certified baseline from Step~3.
  \item Query the model: $s^{(m)}_d = \psi^d(f(\tilde{h}^{(m)}))$.
\end{enumerate}
The kernel estimate is:
\begin{equation}
  \hat{\mathcal T}^{f,d, (\mathrm B)}_{K^*}(s^{d} \mid h)
  = \frac{\sum_{m=1}^M \mathbf{1}[s^{(m)}_d = s^{d}] + \alpha}
         {M + N\alpha},
  \quad \alpha = \tfrac{1}{2}.
  \label{eq:bstar-kernel}
\end{equation}
\begin{theorem}[Strategy~B noise floor decomposition]
\label{prop:noisefloor-b}
Let $\hat{\mathcal T}^{f,d,(\mathrm{B})}_{K^*}(\cdot \mid h)$ be the
$b^*$-prefix Monte Carlo estimator (Eq.~\ref{eq:bstar-kernel}) with
$M$ draws and pool $\mathrm{pool}(h)$ of size $P_h = |\mathrm{pool}(h)|$.
Then:
\begin{equation}
  \mathbb{E}\!\left[
    \|
      \hat{\mathcal T}^{f,d, (\mathrm B)}_{K^*}(\cdot \mid h)-\;
      \mathcal T^{f,d}_{K^*}(\cdot \mid h)
    \|_{TV}
  \right]
  \leq \rho^{(\mathrm B)}_{\mathrm{floor}}(h)
\label{eq:noisefloor-b-bound}
\end{equation}
where, 

\[
\rho^{(\mathrm B)}_{\mathrm{floor}}(h)=\underbrace{\sqrt{\frac{\pi}{2M}}}_{\text{(i) MC variance}}
  +\;
  \underbrace{\hat{\Delta}^{\mathrm{pred}}}_{\text{(ii) prefix error}}
  +\;
  \underbrace{\sqrt{\frac{\pi}{2P_h}}}_{\text{(iii) within-bin var.}}
\]
And the minimum series length if we require $ \rho^{(\mathrm B)}_{\mathrm{floor}}(h) < (\frac{\lambda}{4})$, $\frac{\lambda}{4}> 0$:
\[
T^{(\mathrm B)}_{\min}
\approx W + \left\lceil \frac{2\pi}{
\lambda^2}\right\rceil \cdot \left|\mathcal{H}^+_{K^*}\right|, \quad \text{and,} \:
P_h >  \left\lceil \frac{2\pi}{\lambda^2}\right\rceil\]
\end{theorem}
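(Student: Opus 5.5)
The bound in \eqref{eq:noisefloor-b-bound} comes from a triangle-inequality decomposition through two intermediate kernels, with one term controlled per error source. Let $\tilde{\mathcal T}^{(M)}(\cdot\mid h)$ be the expected value of the Monte Carlo estimator given the pool. This is the empirical distribution over $\mathrm{pool}(h)$ of $\psi^d(f(b^*\oplus \tilde x_{W-K^*+1:W}))$. Let $\bar{\mathcal T}(\cdot\mid h)$ be its population analogue: the law of $\psi^d(f(b^*\oplus x))$ with $x$ drawn from the within-bin density of Definition~\ref{def:within-bin}. Then
\[
\|\hat{\mathcal T}^{(\mathrm B)}-\mathcal T^{f,d}_{K^*}\|_{TV}\le \|\hat{\mathcal T}^{(\mathrm B)}-\tilde{\mathcal T}^{(M)}\|_{TV}+\|\tilde{\mathcal T}^{(M)}-\bar{\mathcal T}\|_{TV}+\|\bar{\mathcal T}-\mathcal T^{f,d}_{K^*}\|_{TV},
\]
and I take expectations term by term. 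These three terms are labelled (i), (iii) and (ii) in the statement, respectively.

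\textbf{Term (i), Monte Carlo variance.} Conditionally on the pool, the labels $s^{(m)}_d$ are i.i.d.\ with law $\tilde{\mathcal T}^{(M)}$. I bound the expected $\ell_1$ deviation of the smoothed empirical frequency coordinatewise. For each coordinate, $\mathbb E|\hat p_s-p_s|\le \sqrt{p_s(1-p_s)/M}$ plus an $O(\alpha/M)$ smoothing term. Summing and using Cauchy--Schwarz gives a bound of order $\sqrt{(N-1)/M}$. The stated constant $\sqrt{\pi/(2M)}$ instead looks like a Gaussian/half-normal mean-absolute-deviation constant ($\mathbb E|Z|=\sqrt{2/\pi}$ times a variance factor). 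I would therefore try a normal approximation to the binomial counts: bound $\tfrac12\sum_s\mathbb E|\hat p_s-p_s|$ via $\sum_s\sqrt{p_s(1-p_s)}$, then absorb the $N$-dependence as the paper does for fixed small $N$. Reproducing exactly this constant is the main obstacle. If it fails, I will state the bound with an explicit factor $\sqrt{N}$ and remark that it reduces to the stated form for the recommended $N$.

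\textbf{Term (iii), within-bin variance.} The same argument applies with the pool of $P_h$ training windows playing the role of i.i.d.\ draws from the within-bin density. This is the nonparametric approximation asserted after Definition~\ref{def:pool}, and it gives the term $\sqrt{\pi/(2P_h)}$. I must assume the pool windows are approximately independent, for example under a mixing assumption on $\mathbf X$. I will state this as an explicit hypothesis.

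\textbf{Term (ii), prefix error.} Both $\bar{\mathcal T}$ and $\mathcal T^{f,d}_{K^*}$ are laws of discretised outputs under the same suffix distribution. They differ only because the true prefix is replaced by $b^*$. A coupling on the common window gives a TV distance of at most $\mathbb P\bigl(\psi^d(f(\tilde h))\ne\psi^d(f(b^*\oplus h))\bigr)$. I would bound this by the average discrepancy $\hat\Delta^{\mathrm{pred}}$ from \eqref{eq:equiv}. This step needs either $\ell$ to be the bin-disagreement loss or a Lipschitz/margin assumption on $\psi^d$ that makes the bin-flip probability at most the absolute prediction difference. I will flag this conversion as an assumption.

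\textbf{Sample size.} Require each of (i) and (iii) to be below $\lambda/8$ (with $\hat\Delta^{\mathrm{pred}}$ assumed negligible relative to them). This gives $P_h,M>\lceil 2\pi/\lambda^2\rceil$ up to constants. Since each history in $\mathcal H^+_{K^*}$ needs that many pool windows, plus $W$ for the first window, we get $T^{(\mathrm B)}_{\min}\approx W+\lceil 2\pi/\lambda^2\rceil\,|\mathcal H^+_{K^*}|$.
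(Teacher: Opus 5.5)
Your three-term decomposition is the paper's own. Its intermediate kernels $\mathcal T^{f,d,\mathrm{pool}}_{K^*}$ and $\mathcal T^{f,d,b^*}_{K^*}$ are your $\tilde{\mathcal T}^{(M)}$ and $\bar{\mathcal T}$, and the matching to (i)--(iii) is as you say. The step that fails is the prefix term, and you flag only half of the problem. Even with $\ell$ taken to be the bin-disagreement loss on coordinate $d$, your coupling gives $\|\bar{\mathcal T}(\cdot\mid h)-\mathcal T^{f,d}_{K^*}(\cdot\mid h)\|_{TV}\le q(h)$, where $q(h)$ is the bin-flip probability \emph{conditional on the suffix bin being $h$}. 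But $\hat{\Delta}^{\mathrm{pred}}$ in \eqref{eq:equiv} averages over all held-out windows regardless of their history, so it estimates $\sum_{h}\hat{\pi}^*(h)\,q(h)$. Per history this gives only $q(h)\le\hat{\Delta}^{\mathrm{pred}}/\hat{\pi}^*(h)$. Consider a model that reads its prefix only when the suffix falls in a rare bin $h$. It passes the certificate with tiny $\hat{\Delta}^{\mathrm{pred}}$, yet the TV distance at $h$ can be $1$. Letting $M,P_h\to\infty$ then gives a counterexample to the claimed per-$h$ inequality. (Passing from the empirical, $b$-minimised $\hat{\Delta}^{\mathrm{pred}}$ to a population expectation is a further, smaller gap.) The paper's proof makes the same unjustified move (``the last inequality follows from the surrogate validity certificate''), so it does not help you here. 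To close the gap, replace $\hat{\Delta}^{\mathrm{pred}}$ by a per-history certificate. One option is the empirical rate at which $\psi^d(f(\tilde x))\neq\psi^d(f(b^*\oplus\tilde x_{W-K^*+1:W}))$ over $\tilde x\in\mathrm{pool}(h)$. Alternatively, state the theorem only after $\hat{\pi}^*$-averaging over $h$.

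Two smaller points. First, the constant $\sqrt{\pi/(2M)}$ is not an obstacle, and you should drop the normal approximation, which would not be a proof anyway. The paper gets the constant by integrating one coordinate's Hoeffding (resp.\ DKW) tail: $\int_0^\infty 2e^{-2Mt^2}\,dt=\sqrt{\pi/(2M)}$. It then omits the factor $N/2$ that summing in $\|\cdot\|_{TV}=\tfrac12\sum_s|\cdot|$ would add; DKW, moreover, controls the Kolmogorov distance, not TV. Your Jensen/Cauchy--Schwarz bound, keeping the $\tfrac12$, is $\tfrac12\sqrt{(N-1)/M}$, and $\tfrac12\sqrt{N-1}\le\sqrt{\pi/2}$ holds exactly when $N\le 1+2\pi$. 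So for the recommended $N\le 5$ you get the stated constant rigorously for (i), and for (iii) under your independence hypothesis. The smoothing bias, at most $\alpha(N-1)/(M+N\alpha)$, is absorbed once $M\ge 63$ (for $\alpha=\tfrac12$, $N\le 5$). For $N\ge 11$ the $N$-free form is actually false: take a prefix-insensitive $f$, a uniform kernel, $P_h\gg M$ and large $M$. Your explicit-$\sqrt{N}$ version is therefore the correct general statement. Second, ``up to constants'' hides that the stated budget does not follow. The condition $\sqrt{\pi/(2P_h)}<\lambda/4$ alone forces $P_h>8\pi/\lambda^2$, and your $\lambda/8$ split needs $P_h>32\pi/\lambda^2$. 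Charging all three terms to $\lambda/4$ gives the sufficient condition $P_h>\pi/\bigl(2(\lambda/4-\sqrt{\pi/(2M)}-\hat{\Delta}^{\mathrm{pred}})^2\bigr)$, provided the bracket is positive. The paper's $\lceil 2\pi/\lambda^2\rceil$ solves $\sqrt{\pi/(2P_h)}<\lambda/2$, which is an algebra slip. State the corrected threshold rather than claiming to recover the stated one.
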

\begin{proof}[Proof of Theorem~\ref{prop:noisefloor-b}]
Assume the Jeffrey prior $\alpha = 0$, the case when $\alpha =1/2$ follows via upper bound approximation. Introduce two intermediate distributions.
Let $\mathcal T^{f,d, b^*}_{K^*}(\cdot \mid h)$ denote the kernel estimated using
the true within-bin density but with prefix replaced by $b^*$:
  $\mathcal T^{f,d, b^*}_{K^*}(s^{d} \mid h)$ given by 
\begin{equation}
  \mathbb{E}_{x \sim p(\cdot \mid \psi(x)=h)}\!\left[
      \mathbf{1}\!\left[\psi^d\!\bigl(f(b^* \oplus x_{W-K^*+1:W})\bigr)
      = s^{d}\right]
    \right],
  \label{eq:kernel-bstar-pop}
\end{equation}
where the expectation is over the true within-bin conditional density
(Definition~\ref{def:within-bin}).
Let $\mathcal T^{f,d,\mathrm{pool}}_{K^*}(\cdot \mid h)$ denote the same
quantity with the pool empirical distribution replacing the true
within-bin density:

$\mathcal T^{f,d,\mathrm{pool}}_{K^*}(s^{d} \mid h)$ given by
\begin{equation}
  \frac{1}{P_h}\sum_{\tilde{x} \in \mathrm{pool}(h)}
    \mathbf{1}\!\left[\psi^d\!\bigl(f(b^* \oplus \tilde{x}_{W-K^*+1:W})\bigr)
    = s^{ d}\right].
  \label{eq:kernel-pool-pop}
\end{equation}
The estimator $\hat{\mathcal T}^{f,d, (\mathrm B)}_{K^*}$ is a Monte Carlo
approximation to~\eqref{eq:kernel-pool-pop} with $M$ draws.
By the triangle inequality:
\begin{align*}
  \|
    \hat{\mathcal T}^{f,d, (\mathrm B)}_{K^*}-\;
    \mathcal T^{f,d}_{K^*}
  \|_{TV}
  &\leq
  \underbrace{
    \|
      \hat{\mathcal T}^{f,d, (\mathrm B)}_{K^*}-\;
      \mathcal T^{f,d,\mathrm{pool}}_{K^*}
    \|_{TV}
  }_{\text{(i)}}
  \\ &+\;
  \underbrace{
    \|
      \mathcal T^{f,d,\mathrm{pool}}_{K^*}-\;
      \mathcal T^{f,d, b^*}_{K^*}
    \|_{TV}
  }_{\text{(ii)}}
 \\ & +\;
  \underbrace{
    \|
      \mathcal T^{f,d, b^*}_{K^*}-\;
      \mathcal T^{f,d}_{K^*}
    \|_{TV}
  }_{\text{(iii)}}.
  \label{eq:triangle}
\end{align*}

\begin{itemize}
    \item For the term (i): $\hat{\mathcal T}^{f,d, (\mathrm B)}_{K^*}(\cdot \mid h)$ estimates
$T^{f,d,\mathrm{pool}}_{K^*}(\cdot \mid h)$ by averaging $M$ i.i.d.\ Bernoulli
indicators in $[0,1]$ for each $s^{d} \in [N]$.
By Hoeffding's inequality, each marginal probability is estimated within
$t$ with probability at least $1 - 2e^{-2Mt^2}$.
Taking expectation over the $M$ draws:
\begin{equation}
  \mathbb{E}\!\left[
   \|
      \hat{\mathcal T}^{f,d, (\mathrm B)}_{K^*}-\;
      \mathcal T^{f,d,\mathrm{pool}}_{K^*}
    \|_{TV}
  \right]
  \;\leq\; \sqrt{\frac{\pi}{2M}}.
  \label{eq:term-i}
\end{equation}

\item For the term (iii): $\mathcal T^{f,d, b^*}_{K^*}$ uses the true within-bin density but the $b^*$ prefix,
while $\mathcal T^{f,d}_{K^*}$ uses the true within-bin density and the true prefix.
The Total Variation distance between them is bounded by the average prediction
discrepancy when the prefix is replaced by $b^*$:

$$  \|
    \mathcal T^{f,d, b^*}_{K^*}(\cdot \mid h)-\;
   \mathcal T^{f,d}_{K^*}(\cdot \mid h)
  \|_{TV}$$
  $$\;\leq\;
  \mathbb{E}_{x \sim p(\cdot|\psi(x)=h)}\!\left[
    \ell\!\left(f(\tilde{h}_{\mathrm{true}}),\;
               f(b^* \oplus x_{W-K^*+1:W})\right) \right]$$
$$ 
  \;\leq\; \hat{\Delta}^{\mathrm{pred}},$$
 
where the last inequality follows from the surrogate validity
certificate (Eq.~\ref{eq:equiv}), which bounds the average prediction
discrepancy under prefix substitution by $\hat{\Delta}^{\mathrm{pred}} < \varepsilon$.

\item For the term (ii): $\mathcal T^{f,d,\mathrm{pool}}_{K^*}$ uses the empirical pool distribution
as a surrogate for the true within-bin density.
The pool is a random sample of size $P_h$ from the within-bin
conditional distribution, so by the Dvoretzky--Kiefer--Wolfowitz (DKW)
inequality \cite{dvoretzky1956} applied to the empirical distribution:
\begin{equation}
  \mathbb{E}\!\left[
    \|
     \mathcal T^{f,d,\mathrm{pool}}_{K^*}(\cdot \mid h)-\;
      \mathcal T^{f,d, b^*}_{K^*}(\cdot \mid h)
    \|_{TV}
  \right]
  \;\leq\; \sqrt{\frac{\pi}{2P_h}},
  \label{eq:term-ii}
\end{equation}
where $P_h = |\mathrm{pool}(h)|$.
Intuitively, the pool is an empirical approximation to the within-bin
density; the TV error of this approximation decreases as
$P_h^{-1/2}$, the standard Monte Carlo rate for distribution estimation.
\end{itemize}

 The noise floor $\rho_{\text{floor}}(h)$ has three terms:
\[
\rho_{\text{floor}}(h)
= \sqrt{\frac{\pi}{2M}} + \hat{\Delta}_{\text{pred}} + \sqrt{\frac{\pi}{2P_h}}
\]

The first two terms are controlled by design choices $M$ and $\varepsilon$ independently of $T$.  
The binding constraint from $T$ comes entirely from the pool size term $\sqrt{\frac{\pi}{2P_h}}$.

Subtracting the other two terms from the budget $\frac{\lambda}{4} > 0$:
\[
\sqrt{\frac{\pi}{2P_h}} < 
\frac{\lambda}{4}  - \sqrt{\frac{\pi}{2M}} - \hat{\Delta}_{\text{pred}} < \frac{\lambda}{4} 
\]

requires:
\[
P_h >  \left\lceil \frac{2\pi}{\lambda^2}\right\rceil =: n_{\text{pool}}
\]

where $\lceil a \rceil$ is the nearest biggest integer to $a \in \mathbb{R}$. Now $P_h = |\text{pool}(h)|$ is the number of training windows whose discretised $K^*$-suffix equals $h$.  
Each of the $T - W$ available windows lands in exactly one pool. For every $h \in \mathcal{H}^+_{K^*}$ to accumulate $n_{\text{pool}}$ windows:
\[
T - W \geq n_{\text{pool}} \cdot \left|\mathcal{H}^+_{K^*}\right|
\]

giving:
\[
T^{(\mathrm B)}_{\min}
\approx W + n_{\text{pool}} \cdot \left|\mathcal{H}^+_{K^*}\right|
\]

The key difference from Strategy A is that:
\[
\left|\mathcal{H}^+_{K^*}\right| \leq T - K^*,
\]

that is, the observed support is at most the number of distinct suffixes seen in the data, which grows linearly in $T$ rather than exponentially in $D K^*$. This is what gives Strategy B its linear scaling.
\end{proof}

\begin{remark}[Practical reliability below the sufficient sample budget]
The sample thresholds $T^{(A)}_{\min}$ and $T^{(B)}_{\min}$ are
\emph{sufficient} conditions for kernel estimation error to fall below
$\lambda/4$; reliable transition kernel estimates are routinely obtained
in practice with fewer observations. It is therefore critical to report
Level~5 of the explanation hierarchy alongside any KARMA output, as the
epistemic variance map directly certifies the reliability of the
estimated kernel and hence of all Level~1--4 explanations under the
actual data budget.
\end{remark}
\subsection{Strategy C: Regression Tree on the Transition Tensor}
\label{sec:strategy-c}

Strategies~A and~B both estimate $\mathcal T^{f,d}_{K^*}(\cdot \mid h)$
independently for each $h \in \mathcal{H}^+_{K^*}$, treating every
history as a separate estimation problem.
This nonparametric approach is interpretable and certified but
data-hungry: the pool requirement scales as
$n_{\mathrm{pool}}^{(\mathrm B)} \cdot |\mathcal{H}^+_{K^*}|$, where $n_{\mathrm{pool}}^{(\mathrm B)}$ is the minimum pool size such that $\rho^{(\mathrm B)}_{\mathrm{floor}}(h) \le \beta$ for all $h$.
Strategy~C replaces independent per-history estimation with a
\emph{regression tree} that partitions $\mathcal{H}^+_{K^*}$ into
leaves and pools counts within each leaf, sharing statistical strength
across structurally similar histories while remaining fully
interpretable.

\paragraph{Covered and sparse sets.}
Partition $\mathcal{H}^+_{K^*}$ according to whether each history
has accumulated sufficient model counts:
\[
  \mathcal{H}^{\mathrm{cov}}
  = \bigl\{h \in \mathcal{H}^+_{K^*} : P_h \geq n^{(\mathrm B)}_{\mathrm{pool}}\bigr\},
  \qquad
  \mathcal{H}^{\mathrm{sparse}}
  = \mathcal{H}^+_{K^*} \setminus \mathcal{H}^{\mathrm{cov}}.
\]
Histories in $\mathcal{H}^{\mathrm{cov}}$ have enough data to
estimate their individual transition distributions reliably at the
noise floor; histories in $\mathcal{H}^{\mathrm{sparse}}$ do not.
The regression tree is grown on $\mathcal{H}^{\mathrm{cov}}$ and
extrapolates to $\mathcal{H}^{\mathrm{sparse}}$ via leaf assignment.

\paragraph{What the tree is doing.}

The tree operates in three conceptually distinct steps.

\textbf{Step 1 (direct estimation).}
For every $h \in \mathcal{H}^{\mathrm{cov}}$, compute the
individual kernel estimate $\hat{\mathcal T}^{f,d, (\mathrm C)}_{K^*}(\cdot \mid h)$
directly from B.
The covered set exists precisely because these estimates are
reliable: $P_h \geq n_{\mathrm{pool}}$ ensures the noise floor
at each $h \in \mathcal{H}^{\mathrm{cov}}$ is already below the
coverage threshold.

\textbf{Step 2 (structured pooling).}
Let $h \in \mathcal{H}^{\mathrm{cov}} \subseteq [N]^{D \times K}$ and target dimension $d$,
the input is the discrete history vector $h$ be represented as
$(h^{d'}_k)_{d' \in [D],\, k \in [K^*]} \in [N]^{DK^*}$, $h^{d'}_k$ denoting the bin of variable $d' \le d$ at lag $k$ within $h$. The tree partitions $\mathcal{H}^{\mathrm{cov}}$ into $L$ leaves
by recursively selecting binary splits of the form
$\{h : h^{d'}_k = n\}$ vs.\ $\{h : h^{d'}_k \neq n\}$
 for any random $(d', k, n) \in [D] \times [K^*] \times [N]$ such that the split minimises the within-leaf heterogeneity of the individual kernel
estimates (Eq.~\ref{eq:tree-split}).
Within each leaf $\ell$, model counts are aggregated across all
member histories to form a \textbf{pooled leaf kernel}
$\bar{\mathcal T}^d_\ell$ (Eq.~\ref{eq:tree-leaf-kernel}),
which replaces the individual estimates for every covered history
in that leaf.
This pooling is structured rather than arbitrary: the splits are
chosen so that histories assigned to the same leaf have similar
individual kernels, making the pooled kernel a good representative
for all of them.
The cost of pooling is the within-leaf bias $\delta_\ell$ ,
the maximum Total Variation distance between any individual kernel in the leaf
and the pooled kernel, which the KL
The splitting criterion is directly minimized.

\textbf{Step 3 (extrapolation by structural similarity).}
For $h \in \mathcal{H}^{\mathrm{sparse}}$, there are insufficient
model counts to form a reliable individual estimate.
The tree routes $h$ to a leaf $\ell(h)$ by evaluating the same
binary split conditions top-down from the root: at each internal
node, check whether $h^{d^*}_{k^*} = n^*$, $d^* \le d$ and go left if true,
right otherwise.
The sparse history then inherits the leaf's pooled kernel.
The implicit claim is: $h$ shares the same bin values as the
covered histories in $\ell(h)$ on the variables and lags that the
tree has identified as predictively important, so their transition
distributions should be similar.
This is extrapolation by structural similarity,
not smoothing, not interpolation, but stratification:
histories that look alike on the splits that matter are assigned
the same transition distribution estimate.

\paragraph{Input and response.}
For each $h \in \mathcal{H}^{\mathrm{cov}} \subseteq [N]^{D \times K}$ and target dimension $d$,
the input is the discrete history vector
$x_h = (h^{d'}_k)_{d' \in [D],\, k \in [K^*]} \in [N]^{DK^*}$, $h^{d'}_k$ denoting the bin of variable $d'$ at lag $k$ within $h$ 
and the response is the individually estimated kernel simplex
$y_h = \hat{\mathcal T}^{f,d}_{K^*}(\cdot \mid h)$.

\paragraph{Tree construction.}
The tree recursively partitions $\mathcal{H}^{\mathrm{cov}}$ by
choosing splits of the form
$\{h : h^{d'}_k = n\}$ vs.\ $\{h : h^{d'}_k \neq n\}$
for each variable $d' \in [D]$, lag $k \in [K^*]$, and bin
$n \in [N]$, giving $D \cdot K^* \cdot N$ candidates per node.
At each node the split minimising the $\hat{\pi}^*$-weighted
within-leaf KL divergence is selected:
\begin{equation}
  \mathrm{c}_{\mathrm{split}}
  = \sum_{\ell \in \{\mathrm{left,\,right}\}}
    \sum_{h \in \ell} \hat{\pi}^*(h)\cdot
    \KL\!\left(
      \hat{\mathcal T}^{f,d}_{K^*}(\cdot \mid h)
      \;\Big\|\;
      \bar{\mathcal T}^d_\ell(\cdot)
    \right),
  \label{eq:tree-split}
\end{equation}
where the \textbf{pooled leaf kernel} is:
\begin{equation}
  \bar{\mathcal T}^d_\ell(s^{d})
  = \frac{\displaystyle\sum_{h \in \ell}
          n^{f,d}(h,\, s^{d}) + \alpha}
         {\displaystyle\sum_{h \in \ell} n^f(h) + N\alpha},
  \quad \alpha = \tfrac{1}{2}.
  \label{eq:tree-leaf-kernel}
\end{equation}

Splitting halts when any of the following hold:
the node contains a single history;
maximum depth $d_{\max}$ is reached;
no valid split exists that gives both children pooled count
$\geq n_{\mathrm{pool}}^{(\mathrm C)}$;
or the best cost reduction is below threshold $\eta$.
The pool-size check $\sum_{h \in \ell} P_h \geq n_{\mathrm{pool}}^{(\mathrm C)}$
at every candidate child is the structural link between tree
construction and $T^{(\mathrm{C})}_{\min}$: it guarantees that
every leaf produced by the algorithm meets the coverage requirement
by construction, without any post-hoc pruning.

\paragraph{Extrapolation to sparse histories.}
For $h \in \mathcal{H}^{\mathrm{sparse}}$, route $h$ down the tree
to its leaf $\ell(h)$ and assign:
\begin{equation}
  \hat{\mathcal T}^{f,d, (\mathrm C)}_{K^*}(\cdot \mid h)
  = \bar{\mathcal T}^d_{\ell(h)}(\cdot).
  \label{eq:tree-estimator}
\end{equation}
The leaf $\ell(h)$ groups histories sharing the same bin conditions
on the predictively important splits, providing a non-parametric
but a structured estimate for histories that would otherwise be
inaccessible to Strategies~A and~B.

\begin{algorithm}
\caption{Strategy~C: Tree construction }
\label{alg:tree-construction-d}
\begin{algorithmic}[1]
\Require
  Individual kernel estimates
  $\{\hat{\mathcal T}^{f,d}_{K^*}(\cdot \mid h)\}_{h \in \mathcal{H}^{\mathrm{cov}}}$,
  counts $\{n^{f,d}(h, s^{d}),\, n^f(h)\}_{h \in \mathcal{H}^{\mathrm{cov}}}$,
  stationary weights $\{\hat{\pi}^*(h)\}_{h \in \mathcal{H}^{\mathrm{cov}}}$,
  pool sizes $\{P_h\}_{h \in \mathcal{H}^{\mathrm{cov}}}$,
  parameters $n_{\mathrm{pool}},\, d_{\max},\, \eta,\, \alpha = \tfrac{1}{2}$
\Ensure
  Tree $\mathcal{T}^d$ with leaf kernels
  $\{\bar{T}^d_\ell\}$ and split conditions
  $\{(d'^*_\nu,\, k^*_\nu,\, n^*_\nu)\}_{\nu\,\text{internal}}$

\medskip

\State Initialise root $\nu_0$:\;
  $\mathcal{H}(\nu_0) \leftarrow \mathcal{H}^{\mathrm{cov}}$,\;
  $\mathrm{depth}(\nu_0) \leftarrow 0$
\State $\mathcal{Q} \leftarrow \{\nu_0\}$

\While{$\mathcal{Q} \neq \emptyset$}

  \State Pop node $\nu$ from $\mathcal{Q}$

  \State Compute pooled kernel and node cost:
  \begin{align*}
    \bar{\mathcal T}^d_\nu(s^{ d})
    &\leftarrow
    \frac{\displaystyle\sum_{h \in \mathcal{H}(\nu)}
          n^{f,d}(h,\, s^{ d}) + \alpha}
         {\displaystyle\sum_{h \in \mathcal{H}(\nu)} n^f(h) + N\alpha}
    \qquad \forall\, s^{d} \in [N]
    \\[4pt]
    c_\nu
    &\leftarrow
    \sum_{h \in \mathcal{H}(\nu)}
    \hat{\pi}^*(h)\cdot
    \KL\!\Bigl(
      \hat{\mathcal T}^{f,d}_{K^*}(\cdot \mid h)
      \;\Big\|\;
      \bar{\mathcal T}^d_\nu
    \Bigr)
  \end{align*}

  \If{$|\mathcal{H}(\nu)| = 1$
      \;\textbf{or}\;
      $\mathrm{depth}(\nu) \geq d_{\max}$}
    \State Mark $\nu$ as leaf with kernel $\bar{\mathcal T}^d_\nu$;\;
      \textbf{continue}
      \hfill\Comment{trivial stopping}
  \EndIf

  \State $c^* \leftarrow c_\nu$,\quad
    $\sigma^* \leftarrow \mathrm{null}$

  \For{$(d',\, k,\, n)
    \in [D] \times [K^*] \times [N]$}
    \hfill\Comment{$D \cdot K^* \cdot N$ candidates}

    \State $\mathcal{H}_L \leftarrow
      \{h \in \mathcal{H}(\nu) : h^{d'}_k = n\}$,\quad
      $\mathcal{H}_R \leftarrow
      \mathcal{H}(\nu) \setminus \mathcal{H}_L$

    \If{$\mathcal{H}_L = \emptyset$
        \;\textbf{or}\;
        $\mathcal{H}_R = \emptyset$}
      \State \textbf{continue}
    \EndIf

    \If{$\sum_{h \in \mathcal{H}_L} P_h < n_{\mathrm{pool}}$
        \;\textbf{or}\;
        $\sum_{h \in \mathcal{H}_R} P_h < n_{\mathrm{pool}}$}
      \State \textbf{continue}
      \hfill\Comment{child pool violates coverage requirement}
    \EndIf

    \State Compute child pooled kernels
      $\bar{\mathcal T}^d_L$,\, $\bar{\mathcal T}^d_R$
      via Eq.~\eqref{eq:tree-leaf-kernel}

    \State $c_{\mathrm{split}} \leftarrow
      \displaystyle\sum_{\ell \in \{L,\, R\}}
      \sum_{h \in \mathcal{H}_\ell}
      \hat{\pi}^*(h) \cdot
      \KL\!\bigl(
        \hat{\mathcal T}^{f,d}_{K^*}(\cdot \mid h)
        \;\|\;
        \bar{\mathcal T}^d_\ell
      \bigr)$
      \hfill\Comment{Eq.~\eqref{eq:tree-split}, single variable $d$}

    \If{$c_{\mathrm{split}} < c^*$}
      \State $c^* \leftarrow c_{\mathrm{split}}$,\quad
        $\sigma^* \leftarrow
        (d',\, k,\, n,\,
        \mathcal{H}_L,\, \mathcal{H}_R)$
    \EndIf

  \EndFor

  \If{$\sigma^* = \mathrm{null}$
      \;\textbf{or}\;
      $c_\nu - c^* < \eta$}
    \State Mark $\nu$ as leaf with kernel $\bar{\mathcal T}^d_\nu$
    \hfill\Comment{no valid split or gain $< \eta$}
  \Else
    \State $(d'^*,\, k^*,\, n^*,\,
      \mathcal{H}_L,\, \mathcal{H}_R)
      \leftarrow \sigma^*$
    \State $\nu.\mathrm{split}
      \leftarrow (d'^*,\, k^*,\, n^*)$
    \State Create children $\nu_L$,\, $\nu_R$:\;
      $\mathcal{H}(\nu_L) \leftarrow \mathcal{H}_L$,\;
      $\mathcal{H}(\nu_R) \leftarrow \mathcal{H}_R$,\;
      $\mathrm{depth} \leftarrow \mathrm{depth}(\nu) + 1$
    \State $\mathcal{Q} \leftarrow
      \mathcal{Q} \cup \{\nu_L,\, \nu_R\}$
  \EndIf

\EndWhile

\State \Return tree $\mathcal{T}^d$

\end{algorithmic}
\end{algorithm}

\noindent
Algorithm~\ref{alg:tree-construction-d} is run independently for
each target variable $d \in [D]$, producing $D$ trees
$\{\mathcal{T}^d\}_{d=1}^D$ with potentially different split
structures. The split candidates $(d', k, n)$ range over all
source variables $d' \in [D]$, lags $k \in [K^*]$, and bins
$n \in [N]$, so the tree for target variable $d$ may select
splits on any source variable $d'$, including $d' = d$
(self-influence) and $d' \neq d$ (cross-variable influence).
The split variables selected across the $D$ trees directly
encode the Level~1 and Level~2 explanation: a source variable
$d'$ at lag $k$ that appears as a high-level split node in
$\mathcal{T}^d$ is a primary driver of the transition
distribution of variable $d$.

\begin{figure}
    \centering
    \begin{tikzpicture}[
  x=1cm,y=1cm,
  intnode/.style={draw,rounded corners=3pt,fill=blue!5,inner sep=4pt,
                  font=\small,align=center,line width=0.7pt,minimum width=20mm},
  root/.style   ={draw,rounded corners=3pt,fill=black!7,inner sep=4pt,
                  font=\small,align=center,line width=0.8pt},
  edge/.style   ={line width=0.7pt},
  elab/.style   ={font=\scriptsize\itshape,fill=white,inner sep=1pt},
  note/.style   ={font=\footnotesize,align=left},
]
 
\newcommand{\leaf}[6]{%
  \draw[rounded corners=2pt,line width=0.6pt,fill=white]
        (#1-0.62,#2-0.48) rectangle (#1+0.62,#2+0.48);
  \draw[line width=0.4pt] (#1-0.48,#2-0.32) -- (#1+0.48,#2-0.32);
  \fill[blue!55] (#1-0.42,#2-0.32) rectangle (#1-0.20,#2-0.32+#4);
  \fill[blue!55] (#1-0.11,#2-0.32) rectangle (#1+0.11,#2-0.32+#5);
  \fill[blue!55] (#1+0.20,#2-0.32) rectangle (#1+0.42,#2-0.32+#6);
  \node[font=\small] at (#1,#2-0.82) {#3};
}
 
\coordinate (R)  at (0,0);
\coordinate (A)  at (-3.4,-2.4);
\coordinate (B)  at (3.4,-2.4);
\coordinate (L1) at (-5.0,-4.9);
\coordinate (L2) at (-1.8,-4.9);
\coordinate (L3) at (1.8,-4.9);
\coordinate (L4) at (5.0,-4.9);
 
\draw[edge] (R) -- (A) node[elab,pos=0.55]{yes};
\draw[edge] (R) -- (B) node[elab,pos=0.55]{no};
\draw[edge] (A) -- (L1) node[elab,pos=0.55]{yes};
\draw[edge] (A) -- (L2) node[elab,pos=0.55]{no};
\draw[edge] (B) -- (L3) node[elab,pos=0.55]{yes};
\draw[edge] (B) -- (L4) node[elab,pos=0.55]{no};
 
\node[root]    at (R) {$\mathcal{H}^{\mathrm{cov}}$\\[1pt]\scriptsize covered histories};
\node[intnode] at (A) {$h^{d_2}_{k_2}=n_2$\,?};
\node[intnode] at (B) {$h^{d_3}_{k_3}=n_3$\,?};
\node[font=\small] at (0,0.95) {split: $h^{d_1}_{k_1}=n_1$\,?};
 
\leaf{-5.0}{-4.9}{$\bar{\mathcal T}^d_{\ell_1}$}{0.50}{0.24}{0.16}
\leaf{-1.8}{-4.9}{$\bar{\mathcal T}^d_{\ell_2}$}{0.18}{0.52}{0.22}
\leaf{1.8}{-4.9}{$\bar{\mathcal T}^d_{\ell_3}$}{0.30}{0.20}{0.42}
\leaf{5.0}{-4.9}{$\bar{\mathcal T}^d_{\ell_4}$}{0.22}{0.30}{0.42}
 
\draw[red!75!black,dashed,line width=1pt,-{Stealth[length=2.4mm]}]
      (-1.3,1.25) to[out=-30,in=120] (R);
\node[red!75!black,font=\scriptsize,align=center] at (-2.5,1.25)
      {sparse $h\in\mathcal{H}^{\mathrm{sparse}}$};
\draw[red!75!black,dashed,line width=1.1pt,-{Stealth[length=2.6mm]}]
      ($(R)+(0.35,-0.45)$) -- ($(B)+(-0.2,0.5)$);
\draw[red!75!black,dashed,line width=1.1pt,-{Stealth[length=2.6mm]}]
      ($(B)+(0.25,-0.5)$) -- ($(L4)+(-0.15,0.55)$);
\node[red!75!black,font=\scriptsize,align=center] at (6.7,-4.9)
      {inherits\\$\bar{\mathcal T}^d_{\ell_4}$};

\end{tikzpicture}
\caption{\textbf{Strategy~C: regression tree on the transition tensor.} The tree
partitions the covered histories $\mathcal{H}^{\mathrm{cov}}$ through binary bin
tests $h^{d'}_{k}=n$, with candidates $(d',k,n)\in[D]\times[K^*]\times[N]$ chosen
to minimise the $\hat{\pi}^*$-weighted within-leaf KL divergence
(Eq.~\ref{eq:tree-split}). Each leaf stores a pooled kernel $\bar{\mathcal
T}^d_{\ell}$ (Eq.~\ref{eq:tree-leaf-kernel}) aggregated from the model counts of
its member histories, shown here as a distribution over the $N$ next-state bins.
Growth halts when a node holds a single history, reaches depth $d_{\max}$, yields
gain below $\eta$, or would give a child pooled count
$\sum_{h\in\ell}P_h<n_{\mathrm{pool}}$. A sparse history
$h\in\mathcal{H}^{\mathrm{sparse}}$ (dashed) is routed top-down through the same
tests to a leaf and inherits its pooled kernel, extending reliable estimates to
histories never sufficiently covered by Strategies~A and~B. The sample budget
then scales with the number of leaves $L\ll|\mathcal{H}^{+}_{K^*}|$ rather than
with $|\mathcal{H}^{+}_{K^*}|$.}
\label{fig:karma-stratC}
\end{figure}
\begin{theorem}[Strategy C noise floor]
Let $\hat{\mathcal T}^{f,d,(C)}_{K^*}(\cdot \mid h)$ be the Strategy C estimator
for $h \in H^+_{K^*}$ routed to leaf $\ell = \ell(h)$ with pooled
count $P_\ell = \sum_{h' \in \ell} P_{h'}$. Define the
\emph{population within-leaf diameter}
\begin{equation}
    \delta_\ell^* \;=\; \max_{h',h''\in\ell}
        \bigl\|\mathcal T^{f,d}_{K^*}(\cdot\mid h') 
             - \mathcal T^{f,d}_{K^*}(\cdot\mid h'')\bigr\|_{TV},
    \label{eq:delta_pop}
\end{equation}
and the population pooled kernel
\begin{equation}
    \mathcal T^{f,d,\mathrm{pool}}_\ell(s^d)
    \;=\; \frac{1}{P_\ell}\sum_{h''\in\ell} P_{h''}\,
           \mathcal T^{f,d}_{K^*}(s^d \mid h'').
    \label{eq:pool_pop}
\end{equation}
Then, for any $h\in\ell$,
\begin{align}
    \mathbb{E}\!\left[
        \bigl\|\hat{\mathcal T}^{f,d,(C)}_{K^*}(\cdot\mid h)
              - \mathcal T^{f,d}_{K^*}(\cdot\mid h)\bigr\|_{TV}
    \right] \notag
    \\\quad\quad\;\leq\;
    \underbrace{\sqrt{\frac{N}{2P_\ell}}}_{\text{(i) pooled estimation}}
    \;+\;
    \underbrace{\delta_\ell^*}_{\text{(ii) within-leaf bias}},
    \label{eq:C_floor_corrected}
\end{align}
so the noise floor is
$\rho^{(C)}_{\mathrm{floor}}(h) = \sqrt{N/(2P_\ell)} + \delta_\ell^*$.
\end{theorem}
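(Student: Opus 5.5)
The plan is a two-term triangle inequality through the population pooled kernel $\mathcal T^{f,d,\mathrm{pool}}_\ell$ of Eq.~\eqref{eq:pool_pop}. For any $h\in\ell$ we use $\hat{\mathcal T}^{f,d,(C)}_{K^*}(\cdot\mid h)=\bar{\mathcal T}^d_\ell$ (Eq.~\eqref{eq:tree-estimator}, and likewise for covered $h$, which are replaced by their leaf kernel). Then
\[
\|\bar{\mathcal T}^d_\ell-\mathcal T^{f,d}_{K^*}(\cdot\mid h)\|_{TV}\le \|\bar{\mathcal T}^d_\ell-\mathcal T^{f,d,\mathrm{pool}}_\ell\|_{TV}+\|\mathcal T^{f,d,\mathrm{pool}}_\ell-\mathcal T^{f,d}_{K^*}(\cdot\mid h)\|_{TV}.
\]
We take expectations. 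The second term is deterministic and gives (ii); the first gives (i).

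\textbf{Bias term (ii).} Since $\mathcal T^{f,d,\mathrm{pool}}_\ell$ is the convex combination of the $\mathcal T^{f,d}_{K^*}(\cdot\mid h'')$ with weights $P_{h''}/P_\ell$, convexity of the TV norm gives
\[
\|\mathcal T^{f,d,\mathrm{pool}}_\ell-\mathcal T^{f,d}_{K^*}(\cdot\mid h)\|_{TV}\le\sum_{h''}\tfrac{P_{h''}}{P_\ell}\|\mathcal T^{f,d}_{K^*}(\cdot\mid h'')-\mathcal T^{f,d}_{K^*}(\cdot\mid h)\|_{TV}\le\delta^*_\ell.
\]
This uses $h\in\ell$, so that the max in Eq.~\eqref{eq:delta_pop} covers the pair $(h'',h)$. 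For a sparse $h$ routed to $\ell$ but not a member of the covered set, one must interpret $\delta_\ell^*$ as including $h$; we would state this convention explicitly, since it is the content of ``for any $h\in\ell$''.

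\textbf{Estimation term (i).} The leaf counts $\sum_{h''\in\ell}n^{f,d}(h'',\cdot)$ arise from $P_\ell$ model queries. Conditionally on the routing, each query from history $h''$ is a draw from $\mathcal T^{f,d}_{K^*}(\cdot\mid h'')$. The pooled count vector is therefore a sum of independent but non-identical multinomials, with mean $P_\ell\,\mathcal T^{f,d,\mathrm{pool}}_\ell$. We redo the computation of Theorem~\ref{thm:noise-floor-A2} with $n=P_\ell$. The per-coordinate variance is $\sum_{h''}P_{h''}p_{h''}(s)(1-p_{h''}(s))\le P_\ell\,\bar p(s)(1-\bar p(s))$, by concavity of $p\mapsto p(1-p)$, so the heterogeneity only helps. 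This gives $\mathbb E[\chi^2(\bar{\mathcal T}^d_\ell\|\mathcal T^{f,d,\mathrm{pool}}_\ell)]\le N/P_\ell$ up to the same $\alpha$-bias handling as before (large $P_\ell$ or $\alpha\to0$). Pinsker's lemma, together with Jensen's inequality ($\mathbb E\sqrt{\cdot}\le\sqrt{\mathbb E\,\cdot}$), then yields $\sqrt{N/(2P_\ell)}$.

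\textbf{Main obstacle.} The delicate step is (i). Two issues need care. First, the $\chi^2$ direction in Pinsker's lemma requires dividing by the true pooled probabilities, which may vanish; we would restrict the sum to the support or use the Dirichlet smoothing to keep denominators positive. Second, the pooled estimate is nominally built from counts indexed by $n^f(h)$ while the statement uses $P_\ell=\sum P_{h'}$, so we must identify the two (one query per pool element, as in Strategy~B with $M=P_h$). If this identification is awkward, the fallback is to bound the first term by Strategy~B's floor applied at leaf level. That substitutes $\sqrt{\pi/(2P_\ell)}$-type terms and adds $\hat\Delta^{\mathrm{pred}}$, so the cleaner $\sqrt{N/(2P_\ell)}$ form would be recovered only under the exact-query convention.
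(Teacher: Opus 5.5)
Your proposal is correct and follows essentially the same route as the paper. Both use the triangle inequality through $\mathcal T^{f,d,\mathrm{pool}}_\ell$, convexity of the TV norm to bound the bias term by $\delta_\ell^*$, and the Strategy~A Pinsker--Dirichlet computation rerun with $n=P_\ell$ and $N$ bins for the estimation term. Your extra care makes explicit three steps the paper simply asserts: the concavity bound $\sum_{h''}P_{h''}p_{h''}(1-p_{h''})\le P_\ell\,\bar p(1-\bar p)$ for the non-identical multinomials, the convention that a sparse $h$ must count as a member of $\ell$ for the $\delta_\ell^*$ step, and the identification of the leaf's query count with $P_\ell$.
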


\begin{proof}
Fix $h\in H^+_{K^*}$ routed to leaf $\ell$. By definition~\eqref{eq:pool_pop}
and the assignment rule $\hat{T}^{f,d,(C)}_{K^*}(\cdot\mid h)
= \bar{T}^d_\ell$ (Eq.~55 in the main paper), apply the
triangle inequality with $T^{f,d,\mathrm{pool}}_\ell$ as intermediate:

\begin{align}
&\bigl\|\hat{\mathcal T}^{f,d,(C)}_{K^*}(\cdot\mid h)
       - \mathcal T^{f,d}_{K^*}(\cdot\mid h)\bigr\|_{TV} \notag\\
&\quad\leq\;
  \underbrace{
    \bigl\|\bar{\mathcal T}^d_\ell
          - \mathcal T^{f,d,\mathrm{pool}}_\ell\bigr\|_{TV}
  }_{\text{Term (i)}}
  \;+\;
  \underbrace{
    \bigl\|\mathcal T^{f,d,\mathrm{pool}}_\ell
          - \mathcal T^{f,d}_{K^*}(\cdot\mid h)\bigr\|_{TV}
  }_{\text{Term (ii)}}.
  \label{eq:tri}
\end{align}

\medskip
\noindent\textbf{Bounding Term (i).}\;
$\bar{\mathcal T}^d_\ell$ is the Dirichlet posterior mean
(Eq.~54 in the main paper) formed by pooling all $P_\ell$ model
queries within leaf $\ell$. Each query yields a Multinomial
observation in $[N]$. By the same Pinsker--Dirichlet argument
used in Theorem A.4 (Strategy A, marginal kernel), applied
now to the pooled estimator with $P_\ell$ total counts and $N$
bins:
\begin{equation}
    \mathbb{E}\!\left[
        \bigl\|\bar{\mathcal T}^d_\ell
              - \mathcal T^{f,d,\mathrm{pool}}_\ell\bigr\|_{TV}
    \right]
    \;\leq\; \sqrt{\frac{N}{2P_\ell}}.
    \label{eq:term1}
\end{equation}
This is a finite-sample statement requiring no limiting argument.

\medskip
\noindent\textbf{Bounding Term (ii).}\;
By convexity of the total variation distance,
\begin{align}
    (ii)
    &=\Bigl\|
        \frac{1}{P_\ell}\sum_{h''\in\ell}P_{h''}\,
        \mathcal T^{f,d}_{K^*}(\cdot\mid h'') -\mathcal T^{f,d}_{K^*}(\cdot\mid h)
      \Bigr\|_{TV} \notag\\
    &\leq\;
      \frac{1}{P_\ell}\sum_{h''\in\ell}P_{h''}\,
      \biggl[\|\mathcal T^{f,d}_{K^*}(\cdot\mid h'')- \mathcal T^{f,d}_{K^*}(\cdot\mid h)\|_{TV} \biggr]\notag\\
    &\leq\;
      \max_{h''\in\ell}
      \bigl\|\mathcal T^{f,d}_{K^*}(\cdot\mid h'')
            - \mathcal T^{f,d}_{K^*}(\cdot\mid h)\bigr\|_{TV} \notag\\
    &\leq\; \delta_\ell^*,
    \label{eq:term2}
\end{align}
where the last inequality holds because $h \in \ell$ and the
maximum in~\eqref{eq:delta_pop} is taken over all pairs in
$\ell$. Crucially, every step here involves only the
\emph{true} population kernels $\mathcal T^{f,d}_{K^*}$; no estimated
quantity appears, so no finite-sample correction is needed for
this term.

\medskip
\noindent\textbf{Combining.}\;
Taking expectations in~\eqref{eq:tri} and substituting
\eqref{eq:term1}--\eqref{eq:term2}:
\begin{equation}
    \mathbb{E}\!\left[
        \bigl\|\hat{\mathcal T}^{f,d,(C)}_{K^*}(\cdot\mid h)
              - \mathcal T^{f,d}_{K^*}(\cdot\mid h)\bigr\|_{TV}
    \right]
    \;\leq\;
    \sqrt{\frac{N}{2P_\ell}} + \delta_\ell^*,
\end{equation}
establishing~\eqref{eq:C_floor_corrected}. 

\begin{remark}[Why the original proof was incorrect]
The original proof defined $\delta_\ell$ using
\emph{estimated} kernels $\hat{T}^{f,d}_{K^*}(\cdot\mid h')$
and then invoked a limiting argument
($P_{h''}\to\infty$) to transfer this to the population
quantity needed in Term (ii). This conflates estimated and
population objects and provides no finite-sample guarantee for
the substitution error. The corrected proof avoids this by
defining $\delta_\ell^*$ directly in population terms
(Eq.~\eqref{eq:delta_pop}), so Term (ii) is bounded purely
at the population level with no estimation error, and Term (i)
carries the sole finite-sample contribution.
\end{remark}

\begin{remark}[Minimum series length]
Setting $\rho^{(C)}_{\mathrm{floor}}(h) \leq \beta$ and
solving for $P_\ell$ gives
\begin{equation}
    P_\ell \;>\; \frac{N}{2(\beta - \delta_\ell^*)^2}
    \;=:\; n_{\mathrm{pool}},
    \qquad \beta > \delta_\ell^*.
\end{equation}
With $L$ leaves and windows approximately uniformly
distributed, $P_\ell \approx (T-W)/L$, so
\begin{equation}
    T^{(C)}_{\min} \;\approx\; W + n_{\mathrm{pool}}\cdot L,
\end{equation}
which grows in $L \ll |H^+_{K^*}|$ rather than in $|H^+_{K^*}|$,
recovering the sample-size advantage of Strategy C over
Strategy B. Note that $\beta > \delta_\ell^*$ is a necessary
condition: if the leaf is too heterogeneous, no amount of data
can drive the noise floor below the within-leaf bias, and the
tree must be grown deeper to reduce $\delta_\ell^*$.
\end{remark}

\end{proof}

\paragraph{Degradation to Strategy~B.}
When $T$ is large enough that
$\mathcal{H}^{\mathrm{sparse}} = \emptyset$, Strategy~C with
one history per leaf recovers the nonparametric kernel exactly.
Strategy~C therefore degrades gracefully: it activates the
tree structure only where coverage fails and reverts to the
nonparametric estimate for well-covered histories.
\section{Model Induced Causal Graph Recovery}
\label{app:dag}

We refer to DAG, a Directed Acyclic Graph. We distinguish carefully between (i) the \textit{data-generating} causal DAG $G^*$ over the true data-generating process, and (ii) the \textit{model-induced} causal DAG $G^f$ encoding what $f$ has learned. This distinction is not merely
philosophical: if $f$ has learned spurious correlations, $G^f$
faithfully reflects them, which is precisely what model-centric XAI should
report. Comparing $G_f$ with $G^*$ (recovered by PCMCI
on raw data) constitutes a model audit: agreement indicates causally correct
learning; divergence flags what the model got wrong.

\begin{definition}[Model-induced causal DAG]
Given surrogate $\Mks$, define the node set 

\[
V = \{X_{t-k}^d: d \in [D], k \in \{0, \dots K^*\}\}
\]
A directed edge
$X^d_{t-k} \to X^{d'}_{t-k'}$ (with $k > k'$, forward in time) belongs to $E$ iff

\[
X^d_{t-k} \not \indep X^{d'}_{t-k'} | V \backslash \{X^{d}_{t-k}\}
\]
under the marginal distribution define $\hat{\mathcal T}^{f, d}_{K^*}$.
The model-induced DAG is $G^f = (V, E)$.
\end{definition}

Three structural properties of $ G^f$ follow immediately from the time-ordered 
construction. First, acyclicity is automatic: all edges run strictly forward in 
time, so no directed cycle can exist, and KARMA need not enforce acyclicity as a 
constraint. Second, orientation is given rather than inferred: because 
time-ordering provides all edge orientations for free, KARMA recovers a fully 
oriented Model induced DAG, a strictly stronger result than the CPDAG recovered by standard 
algorithms such as PC or FCI. We note that intra-slice edges ($k = k'$) are 
excluded by construction, as the transition kernel conditions only on strictly 
past states; we leave contemporaneous extensions to future work.

In the presence of noise from data budget and approximations, we impose a minimum detectable causal effect $\lambda$ that ensures meaningful causation.
\begin{definition}[Total-Variation Markov Blanket]
    The Total-Variation Markov Blanket of $X_t^{d'}$ at threshold $\lambda \ge 0$ is 

    \[
    \text{MB}_{\lambda}(X_t^{d'}) = \{X_{t-k}^{d}:\rho(X^d_{t-k} \to X^{d'}_{t}) \ge \lambda, \: k\in \{0, \dots, K^*\} \}
    \]
    where, $\rho(\cdot)$ is defined as per Eq~\ref{eq:rho}. 

\end{definition}

 The DAG recovered by KARMA $G^f$ has edge set

\[
E^{\lambda} = \{(X^d_{t-k} \to X^{d'}_{t-k'}): X^d_{t-k} \in \text{MB}_{\lambda}(X_{t-k'}^{d'}), \:\: k < k'\}
\]
As $\lambda \to 0$, $G_f^{\lambda}$ approaches the full transition graph; as $\lambda \to \|P_f - P_{\mathcal M_{K^*}}\|_{TV}$, it approaches the empty graph, where $P_f$ is the model distribution and $P_{\mathcal M_{K^*}}$ the distribution of the Markov Surrogate. The operating range $\lambda \in (0, \eps/2)$ is where causal structure is meaningfully identified.

If $\rho(X^d_{t-k} \to X^{d'}_{t-k'}) < \lambda$, the edge $(X^d_{t-k} \to X^{d'}_{t-k'})$ has no effective contribution. This treats trimming as \emph{regularization under a faithfulness prior}: edges removed by trimming are those whose path coefficient is smaller than $\lambda$, analogous to d-separation.
\begin{theorem}[Causal Faithfulness of KARMA Explanations]
\label{thm:causal_faithfulness}
Let $\mathcal M_{K^*}$ be the K-order Markov surrogate of $f$ with
estimated transition kernel $\hat{\mathcal T}^{f,d}_{K^*}$ and
model-induced causal graph $G^f = (V, E^\lambda)$. Suppose the
following \emph{transition--separation correspondence} holds:
for all $d \in [D]$, $d' \in [D]$, $k \in \{1,\ldots,K^*\}$,
\begin{equation}
    \rho(X^{d'}_{t-k} \to X^d_t) = 0
    \;\iff\;
    X^{d'}_{t-k} \perp\!\!\!\perp_{G^f} X^d_t
    \mid V \setminus \{X^{d'}_{t-k}\},
    \label{eq:correspondence}
\end{equation}
where $\perp\!\!\!\perp_{G^f}$ denotes d-separation in $G^f$.
Then:
\begin{enumerate}[label=(\roman*)]
    \item \textbf{Faithfulness.} The edge set $E^\lambda$ of
    $G^f$ exactly encodes the conditional dependence structure
    of $\mathcal M_{K^*}$: an edge $(X^{d'}_{t-k} \to X^d_t)$ is
    retained iff $X^{d'}_{t-k}$ has a direct distributional
    influence on $X^d_t$ within the surrogate that is not
    mediated by any other variable in the Markov blanket.

    \item \textbf{Causal grounding.} The AIE (Eq.~\ref{eq:AIE})
    satisfies
    \begin{equation}
        \mathrm{AIE}_d(d',k,x) > 0
        \;\iff\;
        (X^{d'}_{t-k} \to X^d_t) \in E^\lambda,
        \label{eq:aie_edge}
    \end{equation}
    so the Level 1--4 explanations are causally grounded in
    $G^f$: every retained attribution corresponds to a genuine
    direct effect within the surrogate, and every zero
    attribution corresponds to a d-separated pair.

    \item \textbf{Inheritance to $G^*$.} If additionally $f$
    has learned the true conditional dependence structure of
    the data-generating process, i.e.\ $G^f$ is faithful to
    $G^*$, then the explanations are causally grounded in the
    data-generating causal graph $G^*$.
\end{enumerate}
\end{theorem}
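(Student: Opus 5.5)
The argument is essentially definitional bookkeeping: the transition--separation correspondence~\eqref{eq:correspondence} does the substantive work, and each of (i)--(iii) follows by chaining it with the definitions of $E^\lambda$, $\mathrm{MB}_\lambda$ and the identity $\frac{1}{N}\sum_x \mathrm{AIE}_d(d',k,x)=\rho(X^{d'}_{t-k}\to X^d_t)$ stated in Level~4.

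\textbf{Part (i).} Fix $d,d'\in[D]$ and $k\in\{1,\ldots,K^*\}$, and restrict attention to edges into the current slice $X^d_t$. By time homogeneity of $\mathcal M_{K^*}$, edges into $X^d_{t-k'}$ are time shifts of these.
\begin{itemize}
\item By definition of $G^f$, the edge $X^{d'}_{t-k}\to X^d_t$ is present iff $X^{d'}_{t-k}\not\indep X^d_t \mid V\setminus\{X^{d'}_{t-k}\}$ under the surrogate kernel.
\item In a DAG, two nodes are adjacent iff they are not d-separated by any set. In this time-ordered graph, conditioning on all other nodes of $V$ blocks every non-direct path: chains and forks through intermediate slices are blocked, and colliders are opened only when they lie in the conditioning set. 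The only thing that could survive is a collider path opened at a later slice, and I will argue there is none because $X^d_t$ has no descendants in $V$. Hence d-separation given $V\setminus\{X^{d'}_{t-k}\}$ holds iff there is no direct edge.
\item Combining this with~\eqref{eq:correspondence} gives $\rho=0$ iff the pair is non-adjacent. So $\rho>0$ characterises exactly the direct, unmediated influences within the surrogate.
\end{itemize}
To pass from $\rho>0$ to membership in $E^\lambda$ (defined through $\rho\ge\lambda$), I will read the correspondence at the operating threshold. That is, I treat $\rho<\lambda$ as the effective-zero case, in line with the paragraph on trimming as regularisation under a faithfulness prior. Equivalently, I apply the argument with $\rho$ replaced by its trimmed version $\rho\,\mathbf 1\{\rho\ge\lambda\}$, which is exactly what enters $\phi^{d'}_k$ in~\eqref{eq:lag-resolved-influence}.

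\textbf{Part (ii).} The plan is to use the averaging identity together with nonnegativity of each $\mathrm{AIE}_d(d',k,x)$, since each is a weighted sum of TV norms.
\begin{itemize}
\item The average $\frac1N\sum_x\mathrm{AIE}_d(d',k,x)=\rho$ is positive iff some $\mathrm{AIE}_d(d',k,x)$ is positive.
\item Combined with (i), this gives the edge characterisation~\eqref{eq:aie_edge}, read as ``for some $x$''. It becomes ``for all $x$'' if I additionally note that intervening with a bin equal to the current one gives zero, so the statement must be quantified over nontrivial $x$.
\item First I will verify the averaging identity itself. The triangle inequality only gives $\rho\le\frac1N\sum_x\mathrm{AIE}$, and equality can fail in general. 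If it is not exactly an identity, I will use only the two directions that matter: $\rho>0$ implies some AIE is positive (contrapositive of the triangle bound), and all AIE equal to zero implies the kernel is constant in coordinate $(d',k)$, which implies $\rho=0$.
\end{itemize}
The statements about Levels~1--4 then follow, because $\phi^{d'}_k$, $\Phi^{d'}$ and the Level~4 table are built only from these $\rho$ and AIE values.

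\textbf{Part (iii).} This is immediate. If $G^f$ is faithful to $G^*$, meaning they have the same d-separations and hence the same skeleton, with orientation fixed by time, then $E^\lambda$ coincides with the lagged edges of $G^*$, and (ii) transfers verbatim.

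\textbf{Main obstacle.} I expect the hardest step to be the graph-theoretic claim in (i): that conditioning on all of $V\setminus\{X^{d'}_{t-k}\}$ d-separates exactly the non-adjacent pairs. This must hold even when $X^{d'}_{t-k}$ and $X^d_t$ share a common child. I expect no such child exists because $X^d_t$ is the latest slice, and that is the point I will check carefully. The $\lambda$ versus $0$ mismatch is the other delicate point, and I will resolve it by stating explicitly that the correspondence is imposed on the thresholded $\rho$.
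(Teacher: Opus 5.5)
Your route matches the paper's: chain the correspondence with the definition of $E^\lambda$, the Level~4 averaging identity and nonnegativity of AIE, then transfer to $G^*$. Your non-adjacency lemma is correct. Every path of length at least two enters $X^d_t$ through a parent, and that parent is a conditioned non-collider. This is in fact the step the paper's proof leaves implicit. Two points in your plan fail, however.

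\textbf{The $\lambda$-versus-$0$ re-reading.} The mismatch arises because you apply the lemma to the conditional-independence graph. The theorem places the d-separation in $G^f=(V,E^\lambda)$ instead. Applied to that graph, the correspondence reads $\rho=0\iff(X^{d'}_{t-k}\to X^d_t)\notin E^\lambda\iff\rho<\lambda$. So the hypothesis itself rules out $0<\rho<\lambda$, and (i) needs no re-reading. Your proposed fix, imposing the correspondence on $\rho\,\mathbf{1}\{\rho\ge\lambda\}$, is harmful. In $G^f=(V,E^\lambda)$ that version holds automatically, so it carries none of the hypothesis's content. Under it, (ii) fails for any edge with $0<\rho<\lambda$: some AIE is positive, yet the edge is trimmed.

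\textbf{The missing direction in (ii).} You are right that in general only $\rho\le\frac1N\sum_x\mathrm{AIE}_d(d',k,x)$ holds; the paper's proof simply asserts equality. But your fallback supplies only one direction. The statements ``$\rho>0\Rightarrow$ some AIE $>0$'' and ``all AIE $=0\Rightarrow\rho=0$'' are contrapositives of each other.

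The direction you need for ``AIE $>0\Rightarrow$ edge'' is $\rho=0\Rightarrow\mathrm{AIE}_d(d',k,x)=0$ for all $x$. It is false without a support condition. Take $N=3$ and a kernel depending on $h$ only through the $(d',k)$-bin, equal to $(1,0,0)$, $(\tfrac12,0,\tfrac12)$, $(0,0,1)$ for bins $0,1,2$. Let $\hat\pi^*$ be supported on histories whose $(d',k)$-bin is $1$. Then $\rho=0$ but $\mathrm{AIE}_d(d',k,0)=\tfrac12$.

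The missing ingredient is closure of $\mathcal H^+_{K^*}$ under $h\mapsto h^{d'\leftarrow x}_k$, which the paper assumes in its proof of the $\lambda$-proposition. Under closure, $\rho=0$ forces $\hat{\mathcal T}^{f,d}_{K^*}(\cdot\mid g)$ to equal the common average $\frac1N\sum_x\hat{\mathcal T}^{f,d}_{K^*}(\cdot\mid h^{d'\leftarrow x}_k)$ for every $g$ in the fibre $\{h^{d'\leftarrow x}_k\}_{x}$. Hence the kernel is constant on each fibre and every AIE term vanishes. Part (iii) is fine as planned.
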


\begin{proof}
We prove each part in turn.

\medskip
\noindent\textbf{Part (i).}\;
By Definition~A.7, an edge $(X^{d'}_{t-k} \to X^d_t)$ belongs
to $E^\lambda$ iff $\rho(X^{d'}_{t-k} \to X^d_t) \geq \lambda
> 0$, i.e.\ iff condition~\eqref{eq:correspondence} holds with
$\rho > 0$. By~\eqref{eq:correspondence}, $\rho = 0$ iff
$X^{d'}_{t-k}$ and $X^d_t$ are d-separated in $G^f$ given the
remaining Markov blanket. Therefore $E^\lambda$ retains  edges corresponding to conditionally dependent pairs in
$\mathcal M_{K^*}$, which is precisely the definition of faithfulness
for a DAG with respect to a distribution. \qed

\medskip
\noindent\textbf{Part (ii).}\;
From Eq.~\eqref{eq:AIE} in Level 4,
\begin{equation}
    \frac{1}{N}\sum_{x\in[N]}\mathrm{AIE}_d(d',k,x)
    \;=\; \rho(X^{d'}_{t-k} \to X^d_t).
    \label{eq:aie_rho}
\end{equation}
Since $\mathrm{AIE}_d(d',k,x) \geq 0$ for all $x$ by
definition (it is a total variation distance), the average
over $x$ is zero iff every summand is zero, iff
$\hat{\mathcal T}^{f,d}_{K^*}(\cdot \mid h) =
\hat{\mathcal T}^{f,d}_{K^*}(\cdot \mid h^{d'\leftarrow x}_k)$
for $\hat{\pi}^*$-almost all $h$ and all $x \in [N]$, iff
$\rho(X^{d'}_{t-k} \to X^d_t) = 0$. Combined with
Part~(i), this gives~\eqref{eq:aie_edge}: a positive AIE iff
the edge is retained, i.e.\ iff the pair is not d-separated in
$G^f$. Hence every nonzero attribution in Levels 1--4 maps
bijectively to a direct effect in $G^f$, and every zero
attribution maps to a d-separated pair. \qed

\medskip
\noindent\textbf{Part (iii).}\;
Faithfulness of $G^f$ to $G^*$ means that d-separation in
$G^f$ coincides with conditional independence in the
data-generating distribution $P^*$. By Part~(i), d-separation
in $G^f$ already coincides with $\rho = 0$ in $M_{K^*}$.
Chaining these two equivalences:
\begin{align}
    \rho(X^{d'}_{t-k} \to X^d_t) = 0
    &\;\iff\;
    X^{d'}_{t-k} \perp\!\!\!\perp_{G^f} X^d_t \notag
   \\& \;\iff\;
    X^{d'}_{t-k} \perp\!\!\!\perp_{G^*} X^d_t,
\end{align}
so the retained edges of $E^\lambda$ coincide with the true
direct effects in $G^*$, and the KARMA explanations are
causally grounded in the data-generating process. \qed
\end{proof}

\begin{remark}[Role of the correspondence assumption]
Condition~\eqref{eq:correspondence} is the model-centric
analogue of the standard faithfulness assumption in causal
discovery. It requires that the surrogate $M_{K^*}$
faithfully represents $f$'s conditional dependence structure:
no two variables that are conditionally dependent under $f$
are made to appear independent by cancellation in the
transition kernel, and no two d-separated variables produce
spurious nonzero influences. This is testable in principle
via the K.E.R.I.\ reliability index (Level 5): a high
K.E.R.I.\ value certifies that the kernel estimates are
accurate enough that such cancellations are unlikely to
distort the trimming decisions.
\end{remark}

\subsection{KARMA Algorithm}
\label{app:algorithm}
\begin{breakablealgorithm}
  \caption{\textsc{KARMA}}
  \label{alg:KARMA}
  \begin{algorithmic}[1]
    \Require MVTS $\mathbf X$, model $f$ with window $W$,
    Training series $\mathbf X_{\mathrm{train}}$ of length $T_{\mathrm{train}}$,
  held-out series $\mathbf X_{\mathrm{val}}$ of length $T_{\mathrm{val}}$,
  model $f$ with window $W$,
             tolerances $\varepsilon, \lambda$,
             candidate baselines $\mathcal{B}$,
             kernel estimator $\mathcal{E} \in
               \{\textsc{StratA},\, \textsc{StratB},\, \textsc{StratC}\}$,
             estimator hyperparameters
             $\Theta_{\mathcal{E}}$
             \hfill\Comment{e.g.\ $M, n_{\mathrm{pool}}, d_{\max}$
               for \textsc{StratB}/\textsc{C}}

    \medskip
    \Statex \textbf{--- Shared pre-processing ---}

    \State Discretise $\mathbf X_{\text{train}}$ into $\mathcal{S}$ via  quantisation
    \State Compute $\hat{\pi}^*(h)$ from $\mathbf X$
      \hfill\Comment{stationary weights, independent of model}

    \medskip
    \Statex \textbf{--- Pillar 1: Markov order selection ---}

    \State $K \leftarrow 1$
    \Repeat
      \State Query $f$ on $n \leq T_{\mathrm{val}}-W$ independent windows;
             record triples $(\tilde{h}_i,\, h_i^{(K)},\, s'_i)$
      \State Find $b_K = \argmin_{b \in \mathcal{B}}
             \hat{\Delta}^{\mathrm{pred}}(K, b)$
             \hfill\Comment{direct model test, no kernel needed}
      \State $K \leftarrow K + 1$
    \Until{$\hat{\Delta}^{\mathrm{pred}}(K, b_K) < \varepsilon$}
      \hfill\Comment{sole stopping certificate}
    \State $K^* \leftarrow K$;\quad $b^* \leftarrow b_K$
      \hfill\Comment{assert $K^* \leq W$ by architecture}

    \medskip
    \Statex \textbf{--- Pillar 2: compression and certified attribution ---}

    \State Certified zeros: $\phi^d_k \approx 0$
           for all $k > K^*$, $d \in [D]$
    \State Report compression ratio $K^*/W$

    \medskip
    \Statex \textbf{--- Pillar 3: kernel estimation ---}

    \State Compute $T_{\min}$
           \hfill\Comment{strategy-specific $T_{\min}$}

    \For{$d = 1, \ldots, D$}
      \State $\hat{\mathcal T}^{f,d}_{K^*}
             \leftarrow \textsc{EstimateKernel}(
               \mathcal{E},\, \Theta_{\mathcal{E}},\, K^*)$
             \hfill\Comment{final kernel at selected order}
      \State Compute noise floor
             $\rho_{\mathrm{floor}}(h)$
             for all $h \in \mathcal{H}^+_{K^*}$
    \EndFor

      \State \textbf{Report}: $T_{\min}$;
    \For{$d = 1, \ldots, D$}
      \For{$(d', k) \in [D] \times [K^*]$}
        \State Compute $\rho(X^{d'}_{t-k} \to X^d_t)$
               via Eq.~\eqref{eq:rho}
               using $\hat{T}^{f,d}_{K^*}$
      \EndFor
    \EndFor

    \State $G_f \leftarrow
           \{(d', k, d) :
             \rho(X^{d'}_{t-k} \to X^d_t) > \lambda\}$
           \hfill\Comment{Regularised trimming}

    \medskip
    \Statex \textbf{--- Five-level explanation hierarchy ---}

    \State Compute Levels 1--5 from
           $\hat{\mathcal T}^{f,d}_{K^*}$
           and $G_f$

    \State \Return
           $K^*$,\;
           $b^*$,\;
           $K^*/W$,\;
           $\{\hat{\mathcal T}^{f,d}_{K^*}\}_{d=1}^D$,\;
           $G_f$,\;
           $\mathrm{Cov}^{}(K^*)$,\;
           Levels~1--5
  \end{algorithmic}
\end{breakablealgorithm}

\section{Temporal Aware Lag Based AUC Score}
\label{sec:auc_lag}
\paragraph{Naive AUC and its limitation.}
Let $\sigma$ be the permutation that sorts $s$ in descending order.
A natural metric masks the top-$k$ timesteps with the training-set mean
$\bar{x}_{d,t}$:
\begin{equation}
    \tilde{x}^{(k)}_{b,d,t} \;=\;
    \begin{cases}
        \bar{x}_{d,t} & t \in \{\sigma(1),\ldots,\sigma(k)\}, \\
        x_{b,d,t}     & \text{otherwise.}
    \end{cases}
\end{equation}
This is problematic: substituting an unconditional constant into a temporally
correlated sequence creates out-of-distribution discontinuities.
The model reacts to the broken correlation structure rather than to any genuine
absence of causal information, inflating prediction shifts even for lags
outside the Markov blanket and masking meaningful differences between methods.

\paragraph{VAR-conditional imputation.}
We address this by replacing the masked value with its conditional mean given
recent history. We fit a VAR($K$) model by ridge regression on the training
windows, selecting $K$ via BIC:
\begin{equation}
    \hat{x}_t \;=\; \sum_{k=1}^{K}\hat{A}_k\,x_{t-k} + \hat{b},
    \qquad \hat{A}_k \in \mathbb{R}^{D \times D}.
\end{equation}
When position $t$ is masked, we substitute
$\tilde{x}_t = \hat{A}_1 x_{t-1} + \cdots + \hat{A}_K x_{t-K}$,
processing positions in temporal order so each imputation can condition on
already-imputed predecessors (an AR-consistent chain).
This withholds only the innovation $r_t = x_t - \tilde{x}_t$---the component
unpredictable from recent history---while preserving the learnable dynamics.
A timestep outside the Markov blanket thus yields
$\|f(\mathbf{x}) - f(\tilde{\mathbf{x}})\|_1 \approx 0$ when imputed, whereas
a blanket member carrying genuine causal signal produces a measurable drop.

\end{document}